\renewcommand{\arraystretch}{1.2}
\definecolor{mygray}{gray}{0.95}
\def\bfzero{{\boldsymbol{0}}}
\def\bfone{{{\bf1}}}
\def\bfa{{\boldsymbol a}}
\def\bfb{{\boldsymbol b}}
\def\bfc{{\boldsymbol c}}
\def\bff{{\boldsymbol f}}
\def\bfh{{\boldsymbol h}}
\def\bfs{{\boldsymbol s}}
\def\bfw{{\boldsymbol w}}
\def\bfx{{\boldsymbol x}}
\def\bfz{{\boldsymbol z}}
\def\bfA{{\boldsymbol A}}
\def\bfB{{\boldsymbol B}}
\def\bfE{{\boldsymbol E}}
\def\bfH{{\boldsymbol H}}
\def\bfI{{\boldsymbol I}}
\def\bfJ{{\boldsymbol J}}
\def\bfP{{\boldsymbol P}}
\def\bfR{{\boldsymbol R}}
\def\bfS{{\boldsymbol S}}
\def\bfU{{\boldsymbol U}}
\def\bfW{{\boldsymbol W}}
\def\bfX{{\boldsymbol X}}
\def\bfZ{{\boldsymbol Z}}
\newcommand{\bfal}{\boldsymbol{\alpha}}
\newcommand{\cJ}{\mathcal{J}}
\def\epsilon{\varepsilon}
\newcommand{\Wp}{\bfW^\star}
\newtheorem{theorem}{Theorem}
\newtheorem{corollary}{Corollary}
\newtheorem{lemma}{Lemma}
\newtheorem{definition}{Definition}
\newtheorem{assumption}{Assumption}
\newtheorem{remark}{Remark}
\newcommand{\argmax}{\operatornamewithlimits{argmax}}
\newcommand{\cmark}{\ding{51}}%
\newcommand{\xmark}{\ding{55}}%
\newcommand{\Revise}[1]{{\color{black}#1}}
\title{On the Convergence and Sample Complexity Analysis of Deep Q-Networks with $\epsilon$-Greedy Exploration}
\author{
Shuai Zhang \\
New Jersey Institute of Technology
\And Hongkang Li\\
Rensselaer Polytechnic Institute
\And Meng Wang\\
Rensselaer Polytechnic Institute
\And Miao Liu\\
IBM Research
\And Pin-Yu Chen\\
  IBM Research
\And Songtao Lu\\
  IBM Research
\And Sijia Liu\\
  Michigan State University
\And Keerthiram Murugesan\\
IBM Research
\And Subhajit Chaudhury\\
IBM Research
%
}
\begin{document}
\maketitle

\begin{abstract}
This paper provides a theoretical understanding of Deep Q-Network (DQN) with the $\varepsilon$-greedy exploration in deep reinforcement learning.
Despite the tremendous empirical achievement of the DQN, its theoretical characterization remains underexplored.
First, the exploration strategy is either impractical or ignored in the existing analysis.  
Second, in contrast to conventional Q-learning algorithms, the DQN employs the target network and experience replay to acquire an unbiased estimation of the mean-square Bellman error (MSBE) utilized in training  the Q-network. However,
the existing theoretical analysis of DQNs lacks convergence analysis or bypasses the technical challenges by deploying a significantly overparameterized neural network, which is not computationally efficient. 
This paper provides the first theoretical convergence and sample complexity analysis of the
  practical setting of DQNs with $\epsilon$-greedy policy. We prove an iterative procedure with decaying $\epsilon$ converges to the optimal Q-value function geometrically. Moreover, a higher level of $\epsilon$ values enlarges the region of convergence but slows down the convergence, while the opposite holds for a lower level of $\epsilon$ values. Experiments justify our established theoretical insights on DQNs.
\end{abstract}

\section{Introduction}
Reinforcement learning (RL) is a sequential decision-making process for a learning agent taking actions in the environment. 
RL has found important applications in autonomous control \cite{KCC13,KBP13}, healthcare \cite{CNDP20}, Internet of Things \cite{LTZLZS20, JZD23}, and natural language processing \cite{TXCW18}. 
The environment of an RL problem is modeled as a Markov decision process (MDP) with an underlying state transition probability matrix. The goal of the problem is to find an optimal policy to select the best actions to maximize the immediate and future rewards. 
Q-learning \cite{Qlearning} has been recognized as one of the most promising and efficient learning algorithms for seeking the optimal policy because it does not require the knowledge of the model of the environment  (namely, ``model free'') and can learn from data generated from a non-optimal policy (namely, ``off-policy''). 
Traditional Q-learning approaches are centered on tabular methods \cite{YBW21, LWCGC20} or linear function approximations \cite{SPLC16, CZDMC19} to estimate the optimal action-value (Q-value) function.
However, tabular methods require sample complexity scaling in the order of state space, which is impractical for modern RL problems involving large or even infinite state space \cite{YBW21}.
Q-learning with linear function approximation is limited to applications only when the transition matrix admits a linear feature representation \cite{DJW20}.

Due to the remarkable advancements in deep neural networks (DNNs), the Deep Q-network (DQN) framework has emerged as a powerful approach that leverages the expressive power of non-linear functions and the ability to generalize to unknown states. DQN has proven to be a pioneering solution that addresses challenges encountered by traditional approaches. In the DQN framework \cite{MKSR15}, the Q-value function is approximated using a DNN, and the algorithm iteratively updates the Q-value function by collecting data following the $\epsilon$-greedy policy.
The $\epsilon$-greedy policy is the simplest approach to balance exploration and exploitation. 
Namely, with a probability of $\epsilon$, we select a random action (\textit{exploration}), and with a probability of $1-\epsilon$, we choose the best action according to the current estimated Q-value function (\textit{exploitation}). 
$\epsilon$-greedy is myopic compared with other strategic explorations, e.g., Thompson sampling-based \cite{Thom33,OBP16,FAPM17} and optimism in the face of the uncertainty (OFU)-based \cite{JKALS17,JLM21} ones. However, implementing these strategic explorations is not computationally efficient in DQNs \cite{DMMSS22}, while
DQNs equipped with $\epsilon$-greedy policy have shown empirical success in diversified applications,
e.g., the game of Go \cite{SHMG16}, Atari games \cite{MKSR15}, robotics \cite{KIPI18}, and autonomous vehicles \cite{SSS16,chen2017socially,SAR18}.

\begin{table*}[t]
\renewcommand{\arraystretch}{1.1}
\begin{minipage}{1.0\textwidth}
\small
\centering   
\caption{Comparison among some representative works of Q-learning with function approximation.} 
    \begin{tabular}{c c c c c}
    \hline
    \textbf{Work} & \quad \textbf{Neural Approximation} \quad & \quad \textbf{Convergence of MSBE} \quad & \quad\textbf{ $\epsilon$-greedy}\quad \\ 
    \hline
    \hline
    Yang \& Wang (2019) &\xmark &\cmark & \xmark \\
    \hline
    Xu \& Gu (2020) &\cmark &\cmark & \xmark \\
    \hline
    Fan et al. (2020) &\cmark &\xmark & \xmark \\
    \hline
    Liu et al. (2022) &\cmark &\xmark & \cmark\\
    \hline
    This work &\cmark &\cmark &\cmark \\ 
    \hline
    \end{tabular}
    \label{table:introduction}
    \vspace{-4mm}
\end{minipage}
\end{table*}
Despite the numerical success, the theoretical understanding of DQN remains elusive, which could prevent its applications in domains requiring \textit{reliable and safe} decision-making.
First, updating the Q-value function involves minimizing a mean-squared Bellman error (MSBE) function. However, existing convergence analysis and statistical properties are predominantly limited to linear models, failing to capture the complexities present in non-linear neural networks like DQN.
Second, the sample complexity required for the convergence of MSBE is still not well comprehended. Achieving the desired accuracy in this context often demands a sample complexity that grows exponentially with the input dimension \cite{LVC22}, rendering it impractical and inefficient in real-world scenarios.  
Third, the optimal selection of the $\epsilon$ value in DQN remains a gap in existing research. 
The hyperparameter tuning in algorithms involved with neural networks can be arduous and time-consuming. For instance, without a well-designed hyperparameter configuration, only a small fraction (e.g., 1\%) of the possible combinations yield satisfactory results in neural network training \cite{WZZ19}.

\textbf{Contributions.} 
To the best of our knowledge, this paper presents the first theoretical study with convergence analysis for Deep Q-Networks (DQNs) utilizing the $\epsilon$-greedy policy. A comparison with existing works can be found in Table \ref{table:introduction}.
The paper focuses on the Q-value function approximated by a DNN. It offers a comprehensive analysis of the convergence of DQNs and provides insights into the estimation error of the learned Q-value function, accompanied by an analysis of the sample complexity. The key contributions of this study are as follows:

\textbf{1. The convergence analysis of DQN with bounded estimation errors.} 
Assuming the existence  of a DNN with unknown weights $\bfW^\star$ that matches the optimal Q-value function, this paper proves that the learned model via DQNs equipped with $\epsilon$-greedy policy through the (accelerated) gradient descent algorithm converges linearly to $\bfW^\star$  up to some characterizable estimation error.  

\textbf{2. The theoretical characterization for a wide selection range of $\epsilon$ over iterations.}
This paper provides lower and upper bounds of $\epsilon$ at each iteration for the convergence of DQNs and, in particular, characterizes the sample complexity, estimation error, and convergence rate of the DQN equipped with $\epsilon$-greedy with decreasing $\epsilon$. 
Moreover, this paper proves that  a higher level of $\epsilon$ values leads to an enlarged region of convergence, which relaxes the requirement on the initialization,  while a lower level of $\epsilon$ values leads to  faster convergence. 

\textbf{3. The sample complexity analysis for learning a desired Q-value function.}
We quantify the required sample complexity, depending on the neural network parameters and distribution shift of the collected data, for the learned model to converge to the desired accuracy. Typically, the estimation error of the converged model scales in the order of $1/\sqrt{N_s}$, where $N_s$ is the number of samples. 




\section{Related Works.}


\textbf{Q-learning with linear function approximation.}
In the setting of linear function approximation, the Q-function is assumed to be a linear function of either the feature mapping \cite{ZLKB20,JYWJ20,ZWL19} or a mixture of some basis kernels \cite{ZHG21,MJTS20}. 
Early works mainly focus on the algorithm design \cite{BB96,MR07,ABA18} and convergence analysis \cite{LP03,MSBS10,SMS08,chowdhary2014off,TS15} but lacks theoretical guarantees with polynomial sample complexity.
Assuming the underlying Q-function can be exactly represented as a linear function of the feature mapping with some unknown parameters,
several sample-efficient algorithms are proposed to find the ground-truth mapping with finite-sample guarantee \cite{CLW18,YW19}, and the sample complexity depends linearly on the feature dimension \cite{YW19}.

\textbf{Q-learning with non-linear function approximation.}
Recent approaches with non-linear function approximation mainly fall into the frameworks of the Bellman Eluder (BE) dimension \cite{JKALS17,DPWZ20,RV13,ICNAY21}, Neural Tangent Kernel (NTK) \cite{YJWWM20,CYLW19,XG20,CYLW19,DLMW20,TSV22}, and Besov regularity \cite{Su19,JCWZ22,NGV22,LVC22,FWXY20}. 
The Eluder dimension  is at least in an exponential order even for a two-layer neural network \cite{DYM21}, leading to uncharacterizable sample complexity. 
The NTK framework linearizes deep neural networks to tackle convergence in non-linear models. However, it requires using computationally inefficient extremely wide neural networks  \cite{YJWWM20}. Moreover, the sample complexity can exponentially increase with the input feature dimension, necessitating a substantial number of samples for accurate estimation. Furthermore, the NTK approach fails to explain the advantages of non-linear neural networks over linear function approximation \cite{LVC22,FWXY20}.
Besov space requires the neural networks to be sparse and makes an unpractical assumption that the algorithm can find the global optimal of the non-convex objective function \cite{JCWZ22,NGV22,LVC22,FWXY20}. 
To the best of our knowledge,  only \cite{LVC22} considers $\epsilon$-greedy policy with theoretical analysis applicable to DQNs. However, the model is limited to sparse neural networks, and it cannot characterize the case of varying $\epsilon$.

\textbf{Supervised learning with neural networks.}
Compared with Q-learning in RL, where the label of the sampled data depends on the currently estimated Q function, analyzing supervised learning is less challenging, where sampled data label is known and fixed across the training. Existing theoretical results for supervised learning are largely built upon {NTK} \cite{JGH18,DZPS18,LBNSSPS18,CZWLC23,LWLC22, lmz20}, {model recovery} \cite{ZSJB17,GLM17,BJW19,SJL18,ZWXL20, ZWLC20_2, ZWLCX22}, and {structured data} \cite{LL18,SWL22,BG21, AL22, KWLS21, WL21,ZWCLLL23, LWLC23}. Due to the high non-convexity of neural networks \cite{SS18}, the one-hidden-layer neural network is still a state-of-the-art practice for convergence analysis and generalization guarantees. Additional assumptions, e.g., Gaussian distribution \cite{ZLJ16,BG17}, linear separable data \cite{WGC19, BGM18} on the data distribution, are needed for finite-sample analysis.


\section{Problem Formulation: Notation, Background, and Algorithm}\label{sec: problem}

\textbf{The Markov Decision Process and Q-learning.} A discounted Markov decision process (MDP) is defined as $(\mathcal{S},\mathcal{A},\mathcal{P}, r, \gamma)$, where $\mathcal{S}$ is the state space, $\mathcal{A}$ is the action set.
$\mathcal{P}: \mathcal{S}\times\mathcal{A} \longrightarrow \Delta(\mathcal{S})$ is the transition operator, 
and $p_{\bfs,\bfs'}^{a}:= \mathcal{P}\big(\bfs'|\bfs,a\big)$ denotes the transition probability from current state $\bfs$ and action $a$ to the next state $\bfs'$.
In addition, $r:\mathcal{S}\times\mathcal{A}\longrightarrow [-R_{\max}, R_{\max}]$ is the reward function, and $\gamma\in (0,1)$ is the discount factor.  

At a state $\bfs\in\mathcal{S}$, the agent takes action $a\in\mathcal{A}$ according to some behavior policy $\pi$, denoted as $a \sim\pi(\bfs)$ (or $a=\pi(\bfs)$  for deterministic policy). Then, the system moves to the next state $\bfs^\prime$ following the transition probability $p_{\bfs,\bfs'}^a$. Meanwhile, the agent receives an immediate reward $r(\bfs,a)$ from the environment. Let $\{\bfs_i,a_i\}_{i=0}^{\infty}$ be the generated sequential data given the behavior policy $\pi$ and transition probability $\mathcal{P}$. 
We define the state-value function $V_\pi$ at state $s$ as 
\begin{equation}\label{eqn:value}
\begin{split}
\vspace{-2mm}
    V^{\pi}(\bfs) =&
    \mathbb{E}_{\pi,\mathcal{P}} \big[\textstyle\sum_{i=0}^\infty \gamma^i r(\bfs_i,a_i)\mid \bfs_0=\bfs\big],
\end{split}
\end{equation}
which is the expected total discounted reward starting from the state $\bfs$.
For any state-action $(\bfs,a),$ the corresponding Q-value (or action-value)  function $Q_\pi$  of a policy $\pi$ is defined as 
\begin{equation}
    \begin{split}
    \vspace{-2mm}
        Q^\pi(\bfs,a) =&  \mathbb{E}_{\pi,\mathcal{P}}\big[\textstyle\sum_{i=0}^\infty \gamma^i r(\bfs_i,a_i)\mid \bfs_0=\bfs,a_0=a\big].
    \end{split}
\end{equation}
Then, the goal of the agent is to find an optimal policy $\pi^\star$ that maximizes the state-value function in \eqref{eqn:value} for all states, which is equivalent to
\begin{equation}\label{eqn:b2}
\begin{split}
    Q^\star(\bfs,a)  := \max_{\pi} Q^{\pi}(\bfs,a)
    =& r(\bfs,a) + \gamma \cdot \mathbb{E}_{\bfs'|\bfs,a} \max_{a'}Q^\star(\bfs',a'),
\end{split}
\end{equation}
where \eqref{eqn:b2} is known as the Bellman equation. With the optimal Q-value function $Q^\star$,  the optimal policy can be derived via $\pi^\star(\bfs) =\argmax_a Q^\star(\bfs,a)$ \cite{Qlearning,su18}.

\textbf{The Deep Neural Network Model.}
The DQN utilizes a deep neural network (DNN) $H: \mathbb{R}^d\longrightarrow \mathbb{R}$ to approximate the optimal Q-value function $Q^\star$ in \eqref{eqn:b2}. 
Specifically,  given input $\bfx\in\mathbb{R}^d$, the output of the $L$-hidden-layer DNN with $K$ neurons in each hidden layer is defined as  
\begin{equation}\label{eqn: Q}
    H(\bfW; \bfx) := \bfone^\top/K\cdot  \phi(\bfW_L^\top\cdots \phi(\bfW_1^\top\bfx )),
\end{equation} 
where $\bfW_1\in\mathbb{R}^{d\times K}$, $\bfW_l\in\mathbb{R}^{K\times K}$ with $l=2,\cdots, L$,  and $\bfW = [ \text{vec}(\bfW_1)^\top, \cdots, \text{vec}(\bfW_L)^\top]^\top$ is the concatenation of the vectorization of all parameter matrices.      $\phi(\cdot)$ is the nonlinear activation function, and we consider the ReLU activation function, i.e., $\phi(z) = \max\{0, z\}$. 
Then, the Q-value function $Q(\bfs,a)$ is parameterized using the DNN as
\begin{equation}
    Q(\bfW;\bfs,a) = H\big(\bfW;\bfx(\bfs,a)\big),
\end{equation}
where $\bfx:\mathcal{S}\times\mathcal{A}\longrightarrow \mathbb{R}^d$ is the feature mapping of the state-action pair. Without loss of generality, we assume $|\bfx(\bfs,a)|\le 1$. Then, the goal of DQN-based Q-learning is to minimize the mean squared Bellman error (MSBE) as
\begin{equation}\label{eqn:MSBE}
    \min_\bfW: f(\bfW) := \mathbb{E}_{(\bfs,a)\sim\pi^\star} \big(Q(\bfW;\bfs,a) -r(\bfs,a) - \gamma \cdot \mathbb{E}_{\bfs'\mid \bfs,a}\max_{a'}Q(\bfW;\bfs',a') \big)^2,
\end{equation}
where $\mu$ is the distribution of $(\bfs,a)$ following the optimal policy $\pi^\star$. 




\subsection{The Deep Q-Network Algorithm}
\vspace{-2mm}
\begin{algorithm}[ht]
\caption{Deep Q-Network}\label{main_alg}
    \begin{algorithmic}[1]
    \small
        \State \textbf{Input}: Number of iterations $T\times M$, and experience replay buffer size $N$, exploration probability $\{\epsilon_t\}_{t=1}^T$, step size $\eta$, and momentum parameter $\beta$. 
        \State  Initialize the Q-network with weights $\bfW^{(0,0)}$.
        \For{$t = 0, 1, 2, \cdots, T-1$}
        \State Select the initial weights $\bfW^{(t,0)}$.
        
        \For{$m = 0, 1, 2, \cdots, M-1$}
        \State Sample data and store in the experience replay buffer $\mathcal{D}_t$ following $\epsilon_t$-greedy policy, namely, at state $\bfs_n$, with probability $\epsilon_t$, select a random action $a_n$, otherwise select $a_n = \argmax_{a} {Q}(\bfW^{(t,0)};\bfs_n,a)$. 
        \State Sample random mini-batch of transition $\mathcal{D}_{t}^{(m)}$  from the replay buffer $\mathcal{D}_t$.
        \State Set $y_n = r_n + \gamma \max_{a} {Q}(\bfW^{(t,0)};\bfs^\prime_n,a)$ for $n \in 1, 2, \cdots, |\mathcal{D}_t^{(m)}|$.
        \State Perform a gradient descent step
        \begin{equation*}
        \begin{split}
            \bfW^{(t,m+1)} =&\bfW^{(t,m)}  - \eta \cdot g_t^{(m)}(\bfW^{(t,m)})+ \beta (\bfW^{(t, m )} - \bfW^{(t, m-1 )}).
        \end{split}
        \end{equation*}
        \EndFor
        \State Set $\bfW^{(t+1,0)} = \bfW^{(t,M)}$.
        \EndFor
    \end{algorithmic}
\end{algorithm}

The learning problem \eqref{eqn:MSBE} is solved via the DQN equipped with $\epsilon$-greedy exploration, as summarized in Algorithm \ref{main_alg}. In $t$-th outer loop, we initialize the Q-value function using currently estimated weights $\bfW^{(t,0)}$ as $Q(\bfW^{(t,0)})$ (line 4). Then, for each inner loop, the agent selects and executes actions according to the $\epsilon$-greedy policy (line 6), namely, with probability $\epsilon$, we select a random action, and with probability $1-\epsilon$, we select the action based on the greedy policy with respect to $Q(\bfW^{(t,0)})$. The data are stored in a replay buffer with size $N$ (line 7). Then, we sample a mini-batch of independent samples from $\mathcal{D}_t$, denoted as $\mathcal{D}_t^{(m)}$ for the $m$-th inner loop (line 8). Next, we update the current weights using a mini-batch (accelerated) gradient descent algorithm (line 9).
The gradient descent (GD) direction  in this step at point $\bfW$ is represented as
\begin{equation}\label{eqn: gradient}
    \begin{split}
    \vspace{-0.5mm}
    \small
    g_{t}^{(m)}(\bfW) 
    = \textstyle\sum_{n\in\mathcal{D}_t^{(m)}} \big(&Q(\bfW;\bfs_n,a_n) - y_n^{(t)}\big)
    \cdot \nabla_\bfW Q(\bfW;\bfs_n,a_n),    
    \end{split}
\end{equation}
where 
\vspace{-3mm}
\begin{equation}\label{eqn:label}
y_n^{(t)}=r_n + \gamma\cdot \max_{a'\in\mathcal{A}} Q(\bfW^{(t,0)};\bfs_n^\prime,a').    
\end{equation}
Note that \eqref{eqn: gradient}  can be viewed as  the gradient of 
\begin{equation}\label{eqn:MSBE2}
\small
    \mathbb{E}_{(\bfs,a)\sim\mu_t} \big(Q(\bfW;\bfs,a) -r - \mathbb{E}_{\bfs'\mid\bfs,a}\max_{a'}Q(\bfW^{(t,0)};\bfs',a') \big)^2,
\end{equation}
which is the approximation to \eqref{eqn:MSBE} via fixing the  $\max_a Q(\bfW)$ as $\max_a Q(\bfW^{(t,0)})$.  
After moving along the  GD direction,
accelerated gradient descent (AGD) adds a momentum term, denoted by $\beta(\bfW^{(t,m)}- \bfW^{(t,m-1)})$ to accelerate the convergence rate \cite{P87}. 
Vanilla SGD can be viewed as a special case of AGD by letting $\beta = 0$. After updating neuron weights in the inner loop, we set the network as the currently estimated Q-value function ${Q}(\bfW^{(t,0)})$ (line 11) and repeat the steps above.



\section{Theoretical Results}\label{sec: theoretical}
\subsection{Takeaways of the Theoretical Findings}\label{subsec:takeaway}
We consider the general setup of DQNs with $\epsilon$-greedy under some commonly used assumptions. To the best of our knowledge, we provide the first theoretical characterization of both the convergence and sample complexity analysis for DQNs with $\epsilon$-greedy. 
The major notations are summarized in Table \ref{table:problem_formulation}. 
We first briefly introduce the key takeaways of our results, and the formal theoretical results are introduced in Section \ref{Sec: major_theoretical}.

\vspace{-5mm}
\begin{table}[h]
\small    
\centering
    \caption{Some Important Notations}
    \begin{tabular}{c|p{5.7cm}||c|p{6.5cm}}
    \hline
    \hline
    $~K~$ & Number of neurons in each hidden layer.&$~L~$ & Number of the hidden layers.\\
    
    \hline
        $~d~$ & Dimension of the feature mapping of $(\bfs,a)$.&
    $~N_s~$ & The sample complexity for $\delta$-optimal policy.\\
    \hline
    $\bfW^\star$ & The global optimal to \eqref{eqn:MSBE}.&
    $e_t$ & The value of $\|\bfW^{(t,0)}-\bfW^\star\|_F$.\\
    \hline
    $c_\epsilon$ & A small positive constant with a linear dependence on $\epsilon_t$.&
    $~C_t~$ & The fraction of actions with data at iteration $t$ such that $\argmax_{a}Q(\bfW^{(t,0)};\bfs,a)\neq \argmax_{a}Q(\bfW^\star;\bfs,a)$.\\
    \hline
    \hline
    \end{tabular}
    \vspace{-3mm}
    \label{table:problem_formulation}
\end{table}

\textbf{(T1) Theoretical characterizations of   $\{\epsilon_t\}_{t=1}^{T}$ for convergence.} We prove that for a wide selection of $\{\epsilon_t\}$ values that decrease over time, Algorithm 1  converges to $Q^\star$ linearly up to some estimation error. Let $c_\epsilon$ measure  the value level of  $\epsilon_t$'s.  A higher level of $\epsilon$ values (i.e., a larger $c_\epsilon$) leads to an enlarged region of convergence (in the order of $c_\epsilon$), measured by the distance from  the initialization $\bfW^{(0,0)}$ to $\bfW^{*}$. Thus, larger $\epsilon$ values relax  the requirements on   $\bfW^{(0,0)}$.  A lower level of $\epsilon$ values  (i.e., a smaller $c_\epsilon$) leads to faster convergence with a rate in the order of ${c_\epsilon}$.
Our findings explain the intuition that the agent tends to explore more at the beginning and exploit more after gaining enough knowledge during the exploration.

\textbf{(T2) Convergence to the optimal Q-value function $Q^\star$ with geometric decay.}
The learned models converge to the ground truth model $Q^\star$ with a geometric decay up to some bounded estimation error. The convergence rate 
is upper bounded by $\gamma+{c_{\epsilon}}\cdot(1-\gamma)$. 
When $\gamma$ is close to one, the problem emphasizes long-term rewards.
While the immediate reward can be observed directly, the future reward needs to be estimated by the learned Q function as shown in \eqref{eqn: gradient}.
Therefore, with a large $\gamma$, the learned Q function needs more iterations to converge, which leads to a slow convergence rate.

\textbf{(T3) Sample complexity for achieving a desired estimation error of the optimal Q-value function.}
With the proper selection of $\{\epsilon_t\}_{t=1}^T$, the estimation error of the learned model scales in the order of $\frac{C_T}{(1-\gamma)^2\sqrt{N_s}}$. $C_T$ is the fraction of actions following the current greedy policy that differ from the ones following the optimal policy. $N_s$ is the number of samples.  With a smaller discounted factor $\gamma$, the problem focuses more on the immediate reward, which can be observed directly, making $Q^\star$ easier to learn. 
The learned model achieves a small estimation error
given a small distribution shift, a large sample size, or a small discounted factor.



\subsection{Assumptions}
We propose assumptions commonly used in existing RL and neural network learning theories and notations to simplify the presentation. 
Assumption \ref{ass1} assumes the existence of a good approximation of DNN to $Q^\star$, which guarantees \eqref{eqn:MSBE} is solvable. The assumption is commonly used in both deep learning theories \cite{ZSJB17,ZWLC21} and reinforcement learning theories \cite{LVC22,FWXY20}. 
Assumption \ref{ass-2} assumes the samples from experience replay are independent and identically distributed (i.i.d.), which follows the assumptions in existing theoretical analysis of DQN \cite{FWXY20,TSV22} and matches the intuition of using experience replay to break temporal dependence among the samples. Specifically, we have

\begin{assumption}\label{ass1}
There exists a DNN with weights $\bfW^\star$ such that minimizes \eqref{eqn:MSBE} as $f(\bfW^\star) = 0$. 
\end{assumption}

Assumption \ref{ass1} assumes that the Q-value function with some unknown ground truth $\bfW^\star$ \footnote{Note that $\bfW^\star$ does not need to be unique. We abuse the notation $\|\bfW-\bfW^*\|_2$ to denote the minimum distance of $\bfW$ to any $\bfW^*$ satisfying Assumption \ref{ass1}.} can represent the optimal Q-value function.

\begin{assumption}\label{ass-2}
     Suppose the mini-batch data are \textit{i.i.d.} samples from the replay buffer following the  distribution $\mu_t$, which is the stationary distribution of the behavior policy at $t$-th outer loop.
\end{assumption}
Assumption \ref{ass-2} assumes the mini-batch at the $t$-th outer loop are \textit{i.i.d.} samples following  $\mu_t$, where $\mu_t$ is the stationary distribution of $(\bfs,a)$ generated by $\epsilon_t$-greedy policy at $t$-th outer loop\footnote{The framework can be extended into non i.i.d. samples, see Appendix \ref{sec:non_iid}.}. The mini-batch samples are close to being independent given the experience size in practice is sufficiently large ($\sim$ millions \cite{MKSR15}).

In what follows, we define two quantities $C_t$ and $\rho$ to simplify the presentation of theoretical results. 

\begin{definition}\label{defi:C_t}
$C_{t}\in[0,1]$ is the  fraction of non-optimal state-action pair $(\bfs,a)$ in 
  the greedy policy with respect to $Q(\bfW^{(t,0)})$, i.e., 
  the fraction of  $(\bfs,a)$ pairs that satisfy
  \vspace{-1mm}
  \begin{equation}
  \vspace{-0.5mm}
  a = \argmax_{a'} {Q}(\bfW^{(t,0)};\bfs,a')\neq \pi^\star(\bfs)
  \end{equation}
  among all $(\bfs,a)$ pairs following the greedy policy at $t$-th outer loop as $a = \argmax_{a'} {Q}(\bfW^{(t,0)};\bfs,a')$.
\end{definition}

$C_t$ 
can be viewed as the difference between behavior policy and optimal policy. Theorem \ref{coro: convergence_outer} shows the general results for any value of $C_t$. Nevertheless, the greedy policy is improved over time, e.g., updating the weights every few steps (line 11 in Algorithm \ref{main_alg}). 
Hence, $C_t$ depends on $\bfW^{(t,0)}$ and is expected to decrease as $\bfW^{(t,0)}$ approaching $\bfW^\star$ \cite{PP02,ZWL19}, which will be discussed in Corollary \ref{coro: C_t}.


 
\begin{definition}\label{defi: rho}
Let $\rho$ be the value of
\begin{equation}
    \rho := \textstyle\min_{\|\bfal\|_2\ge 1}\mathbb{E}_{(\bfs,a)\sim\pi^\star}\big( \bfal^\top \nabla_{\bfW}Q(\bfW^\star;\bfs,a) \big)^2.
\end{equation}
\end{definition}
$\rho$ 
suggests the radius of the local convex region of the objective function. We provide the lower bound for $\rho$ in Lemma \ref{Lemma: Zhong} (see the proof in Appendix \ref{sec: proof_of_lemma_zhong}), suggesting a sufficiently large local convex region near $\bfW^\star$.  


\subsection{Major Theoretical Results}\label{Sec: major_theoretical}

Lemma \ref{Thm:one-shot} characterizes the convergent point when minimizing \eqref{eqn:MSBE2} using the mini-batch gradient descent in the $t$-th outer loop under certain conditions. Specifically, given that the initial weights at  the $t$-th outer loop are sufficiently close to the ground truth as shown in \eqref{mainthm_initial} and the replay buffer is large enough as shown in \eqref{mainthm_sample}, the distance between the learned model weights $\bfW^{(t+1,0)}$ and $\bfW^\star$ are bounded from above as shown in \eqref{mainthm_convergence}.
\begin{lemma}[Estimation error of $\bfW^{(t+1,0)}$]\label{Thm:one-shot}
Suppose Assumptions \ref{ass1} \&  \ref{ass-2} hold and 
the initial neuron weights at the $t$-th outer loop satisfy
\vspace{-1mm}
\begin{equation}\label{mainthm_initial}
\begin{split}
\vspace{-1mm}
    \Revise{e_t:=~\|\bfW^{(t,0)}-\bfW^\star\|_F 
    \le ~\mathcal{O}\Big(1-\frac{1-\Theta(\epsilon_t)}{\Theta(\sqrt{N})}\Big)\cdot \frac{\rho\cdot \|\bfW^\star\|_F}{ K}},
\end{split}
\end{equation}
The step size $\eta$ is $1/T$, and the size of the replay buffer is 
    \begin{equation}\label{mainthm_sample}
        N = \Omega(\rho^{-2}\cdot  K^3 \cdot L\cdot  d\cdot  \log q \cdot T).
\end{equation}
Then, with the high probability of at least $1-q^{-d}$, the neuron weights $\bfW^{(t+1,0)}$ generated from Algorithm \ref{main_alg} satisfy
\begin{equation}\label{mainthm_convergence}
\small
\begin{split}
   &\|\bfW^{(t+1,0)}-\bfW^\star\|_F
    \le~\big(1-\Theta(\epsilon_t)\big)\cdot \gamma \cdot \|\bfW^{(t,0)}-\bfW^\star \|_F
    +  {\frac{C_t+(1-C_t)\epsilon_t}{\Theta(\sqrt{N})}}\cdot \frac{|\mathcal{A}|\cdot R_{\max}}{1-\gamma}.
\end{split}
\end{equation}


\end{lemma}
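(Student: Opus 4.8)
The plan is to treat each outer loop as a fixed-target nonlinear least-squares problem and to prove \eqref{mainthm_convergence} by chaining three estimates: a local-geometry/landscape bound, an inner-loop optimization-plus-concentration bound, and a Bellman-contraction bound in parameter space. Throughout, the initialization condition \eqref{mainthm_initial} is what keeps every iterate inside a basin around $\bfW^\star$ on which the frozen-target objective \eqref{eqn:MSBE2} is well-conditioned, so the first task is to establish that local geometry. On the ball of radius $\mathcal{O}(\rho\|\bfW^\star\|_F/K)$ around $\bfW^\star$, I would decompose the Hessian of \eqref{eqn:MSBE2} into a positive semidefinite gradient-outer-product term $\sum \nabla_\bfW Q\,\nabla_\bfW Q^\top$ plus a residual-weighted term. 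Definition \ref{defi: rho} together with the lower bound of Lemma \ref{Lemma: Zhong} controls the smallest eigenvalue of the population version of the first term by $\rho$, while \eqref{mainthm_initial} forces the residual to be small enough that the first term dominates, giving local strong convexity and smoothness.

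Next I would transfer this landscape statement from the population to the mini-batch via a matrix-concentration / uniform-convergence argument over the local ball, powered by the buffer size \eqref{mainthm_sample} $N=\Omega(\rho^{-2}K^3 L d\log q\cdot T)$, which yields the stated high-probability guarantee $1-q^{-d}$. The $\epsilon_t$-greedy coverage enters here by modifying the effective minimal eigenvalue through the exploration probabilities, which is the source of both the $(1-\Theta(\epsilon_t))$ factor and the enlargement of the admissible basin in \eqref{mainthm_initial}. Given the conditioned landscape, I would (i) invoke the standard strongly-convex AGD/GD contraction to show that the $M$ inner iterates of Algorithm \ref{main_alg} with $\eta=1/T$ converge geometrically to the mini-batch minimizer $\widehat{\bfW}^{(t)}$, verifying inductively that the iterates never leave the basin; and (ii) bound $\|\widehat{\bfW}^{(t)}-\bfW_t^\dagger\|_F=\mathcal{O}(1/\sqrt{N})$, where $\bfW_t^\dagger$ minimizes the population objective \eqref{eqn:MSBE2}, by dividing the concentration of the stochastic gradient \eqref{eqn: gradient} at $\bfW_t^\dagger$ by the strong-convexity parameter $\rho$. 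These two facts reduce the theorem to bounding $\|\bfW_t^\dagger-\bfW^\star\|_F$.

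The core step is to characterize $\bfW_t^\dagger$ via the Bellman contraction. Since the target \eqref{eqn:label} freezes the bootstrap at $\bfW^{(t,0)}$ and the Bellman optimality operator is a $\gamma$-contraction, I would relate the target perturbation $\gamma\,\mathbb{E}_{\bfs'}\!\left[\max_{a'}Q(\bfW^{(t,0)};\bfs',a')-\max_{a'}Q(\bfW^\star;\bfs',a')\right]$ to $\|\bfW^{(t,0)}-\bfW^\star\|_F$ through the local Lipschitzness of the network and the factor $\gamma$, producing the leading term $(1-\Theta(\epsilon_t))\gamma\,\|\bfW^{(t,0)}-\bfW^\star\|_F$ after pulling this perturbation back to parameter space through $\rho$. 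The remaining discrepancy comes from the state-action pairs whose greedy action under $\bfW^{(t,0)}$ is suboptimal: by Definition \ref{defi:C_t} these form a fraction $C_t$ of the greedy samples, and a further $\epsilon_t$ fraction are random exploratory actions, so the mislabeled targets each perturb the objective by at most $R_{\max}/(1-\gamma)$, the range of attainable Q-values. Concentrating over the $|\mathcal{A}|$ actions and the $N$ samples then yields the additive term $\frac{C_t+(1-C_t)\epsilon_t}{\Theta(\sqrt{N})}\cdot\frac{|\mathcal{A}|\,R_{\max}}{1-\gamma}$, and summing the three contributions gives \eqref{mainthm_convergence}.

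The hardest part will be this core Bellman-contraction step, namely converting the function-space $\gamma$-contraction, which is a sup-norm statement about the Bellman operator, into a quantitative Frobenius-norm contraction in the overparameterized weight space under the shifted sampling law $\mu_t$. Function-space and parameter-space distances are linked only through the local conditioning $\rho$ and the network's Lipschitz constant, and the $\epsilon_t$-greedy law alters the effective gradient Gram matrix; making this transfer quantitative and uniform over the basin, while cleanly separating the $\gamma$-contraction from the exploration-induced distribution mismatch encoded by $C_t$ and $\epsilon_t$, is the technically delicate heart of the argument.
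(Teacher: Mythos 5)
Your scaffolding (local strong convexity near $\bfW^\star$ via Definition~\ref{defi: rho} and Lemma~\ref{Lemma: Zhong}, concentration powered by \eqref{mainthm_sample}, a Bellman-contraction step for the frozen target) uses the same ingredients as the paper, but your decomposition is genuinely different and, as described, it cannot produce the bound in \eqref{mainthm_convergence}. The paper never introduces the empirical minimizer $\widehat{\bfW}^{(t)}$ or the population minimizer $\bfW_t^\dagger$ of the frozen-target objective. It anchors everything at $\bfW^\star$: it defines the population risk $f(\bfW)=\mathbb{E}_{\mu^\star}\big[Q(\bfW)-Q(\bfW^\star)\big]^2$ (so $\nabla f(\bfW^\star)=\bfzero$), writes one (A)GD step via the Mean Value Theorem as a linear system in $\big(\bfW^{(t,m)}-\bfW^\star,\ \bfW^{(t,m-1)}-\bfW^\star\big)$ driven by the perturbation $\nabla_\ell f(\bfW^{(t,m)})-g_t^{(m)}$, and bounds that perturbation in Lemma~\ref{Lemma: first_order_derivative} by three terms: a concentration term proportional to $\|\bfW-\bfW^\star\|_2\sqrt{d\log q/N}$, a bootstrap-mismatch term proportional to $(1-\epsilon_t/2)\,\gamma\,\|\bfW^{(t,0)}-\bfW^\star\|_2$ --- this mixture-covariance effect, not an exploration-improved Gram eigenvalue as you posit, is where $(1-\Theta(\epsilon_t))\gamma$ comes from --- and a deterministic distribution-shift bias ($\bfI_3$) of order $(C_t+(1-C_t)\epsilon_t)\,|\mathcal{A}|\,R_{\max}/(1-\gamma)$; induction over the inner loop then yields the lemma.

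The genuine gap is your last step. The additive term $\frac{C_t+(1-C_t)\epsilon_t}{\Theta(\sqrt{N})}\cdot\frac{|\mathcal{A}|R_{\max}}{1-\gamma}$ cannot be obtained by ``concentrating over the $|\mathcal{A}|$ actions and the $N$ samples'': the distribution shift is a systematic bias (a difference of expectations under $\mu_t$ versus $\mu^\star$), and concentration shrinks zero-mean fluctuations, not biases; a bias affecting a constant fraction of samples does not decay like $1/\sqrt{N}$. In the paper, the $1/\sqrt{N}$ attached to this term is an \emph{optimization} artifact, not a statistical one: with $\eta=1/T=1/\Theta(N)$ and per-step contraction $1-\Theta\big(\sqrt{\eta\rho/K^2}\big)$, a constant gradient bias displaces the iterate by only $\frac{\eta}{1-\alpha}\cdot\bfI_3=\Theta\big(\sqrt{K^2/(\rho N)}\big)\cdot \bfI_3$. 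Your route --- run the inner loop to the empirical minimizer and chain $\|\bfW^{(t+1,0)}-\widehat{\bfW}^{(t)}\|+\|\widehat{\bfW}^{(t)}-\bfW_t^\dagger\|+\|\bfW_t^\dagger-\bfW^\star\|$ --- forfeits exactly this attenuation: working under $\mu_t$, the displacement $\|\bfW_t^\dagger-\bfW^\star\|$ contains no separate $C_t$ term at all (the ideal Bellman residual vanishes pointwise at $\bfW^\star$, so $\mu_t$ only changes the measure, i.e.\ the curvature constant), while the statistical term $\|\widehat{\bfW}^{(t)}-\bfW_t^\dagger\|$ is driven by the bootstrap-target variance over $\bfs'$, of order $R_{\max}/(1-\gamma)$, and carries no factor $C_t+(1-C_t)\epsilon_t$. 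At best your decomposition delivers a bound of the form $\gamma'\,e_t+\Theta\big(R_{\max}/((1-\gamma)\sqrt{N})\big)$, which is strictly weaker than \eqref{mainthm_convergence} and insufficient for the downstream results (e.g.\ Corollary~\ref{coro: C_t}, which needs the $C_t$ factor multiplying the $1/\sqrt{N}$ term). To recover the stated lemma you must keep the distribution shift as a gradient perturbation measured against the $\mu^\star$-anchored risk and exploit the small step size and early-stopped inner loop, as the paper does.
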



\begin{remark}[Large replay buffer reduces the estimation error and the requirement for $\bfW^{(t,0)}$]
\normalfont
From \eqref{mainthm_convergence}, a larger $N$ leads to a reduced distance between $\bfW^{(t+1,0)}$ and $\bfW^\star$. 
 Moreover,   
\eqref{mainthm_initial} implies that if we increase the size of replay buffer $N$, the upper bound of $e_t$ increases, indicating the algorithm can tolerant a large range of $\bfW^{(t,0)}$. 
\end{remark}

In the following corollary, we show the upper and lower bound of $\epsilon_t$ at the $t$-th outer loop. The lower bound guarantees that  the RHS of \eqref{mainthm_initial} is larger than $0$, so we have a sufficiently large radius for convergence.  
The upper bound ensures \eqref{mainthm_convergence} be less than $e_t$, indicating an improved estimation of $Q^\star$ across the iterations.
\begin{corollary}[Range of $\epsilon$]\label{coro: epsilon}
Given the assumptions and conditions in Theorem \ref{Thm:one-shot} hold. 
To ensure the existence of a good initialization at iteration $t$, $\epsilon_t$ needs to satisfy 
\begin{equation}\label{mainthm_eps_1}
\vspace{-0.5mm}
 \epsilon_t \ge 1-\Theta(\sqrt{N})\cdot (1-e_t),    
\end{equation}
To ensure the estimated learned model is improving over iterations,
$\epsilon_t$ needs to satisfy 
    \begin{equation}\label{mainthm_eps}
    \begin{gathered}
    \vspace{-0.5mm}
        \epsilon_t \le \frac{(1-\gamma)^2\cdot  \Theta(\sqrt{N})\cdot  e_t }{(1-C_t) \cdot |\mathcal{A}|\cdot R_{\max}} -\frac{C_t}{1-C_t}.
    \end{gathered}
\end{equation}
\end{corollary}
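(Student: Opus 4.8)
The plan is to derive both bounds purely by algebraically rearranging the two conclusions of Lemma \ref{Thm:one-shot}, treating $\epsilon_t$ as the free variable and everything else (in particular $e_t$, $N$, $C_t$, $\gamma$) as given. The lower bound \eqref{mainthm_eps_1} will come from the initialization requirement \eqref{mainthm_initial}, and the upper bound \eqref{mainthm_eps} will come from imposing that the one-step estimate \eqref{mainthm_convergence} actually contracts, i.e.\ $\|\bfW^{(t+1,0)}-\bfW^\star\|_F \le e_t$, so that the estimate of $Q^\star$ genuinely improves.

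For the lower bound I would start from \eqref{mainthm_initial}, namely $e_t \le \big(1-\tfrac{1-\Theta(\epsilon_t)}{\Theta(\sqrt N)}\big)\cdot \tfrac{\rho\|\bfW^\star\|_F}{K}$, absorbing the leading $\mathcal{O}(\cdot)$ and the order-one factor $\rho\|\bfW^\star\|_F/K$ into the $\Theta$-notation. Solving this inequality for $\epsilon_t$ by moving the $\Theta(\sqrt N)$ factor across gives $1-\Theta(\epsilon_t)\le \Theta(\sqrt N)\,(1-e_t)$, and isolating $\epsilon_t$ produces $\epsilon_t \ge 1-\Theta(\sqrt N)\cdot(1-e_t)$, which is exactly \eqref{mainthm_eps_1}. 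Intuitively, a large enough $\epsilon_t$ keeps the factor $1-\tfrac{1-\Theta(\epsilon_t)}{\Theta(\sqrt N)}$ positive, guaranteeing a nonempty radius within which $\bfW^{(t,0)}$ may lie, i.e.\ the existence of a good initialization.

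For the upper bound I would impose the contraction requirement directly on \eqref{mainthm_convergence}. Writing its right-hand side as $(1-\Theta(\epsilon_t))\gamma\, e_t + B$ with $B := \tfrac{C_t+(1-C_t)\epsilon_t}{\Theta(\sqrt N)}\cdot\tfrac{|\mathcal{A}|R_{\max}}{1-\gamma}$, the condition $\|\bfW^{(t+1,0)}-\bfW^\star\|_F\le e_t$ is equivalent to $B \le e_t\big(1-(1-\Theta(\epsilon_t))\gamma\big)$, and since $1-(1-\Theta(\epsilon_t))\gamma \ge 1-\gamma$ it is implied by the stronger but cleaner sufficient condition $B \le (1-\gamma)\, e_t$. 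Substituting $B$ and clearing the $1-\gamma$ in the denominator yields $C_t+(1-C_t)\epsilon_t \le \tfrac{(1-\gamma)^2\,\Theta(\sqrt N)\, e_t}{|\mathcal{A}|R_{\max}}$; dividing by $1-C_t$ and isolating $\epsilon_t$ gives \eqref{mainthm_eps}. The extra factor of $1-\gamma$ (producing the $(1-\gamma)^2$) is precisely the margin created by demanding strict improvement against the contraction rate $\gamma$.

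The manipulations are elementary, so the only real care is the bookkeeping of the order notation: I must handle $\Theta(\sqrt N)$, $\mathcal{O}(\cdot)$, and the order-one constant $\rho\|\bfW^\star\|_F/K$ consistently across the two bounds, and I would additionally observe (so that the stated range is meaningful) that the two bounds are compatible, i.e.\ when $C_t$ is small and $e_t$ is not too small the right-hand side of \eqref{mainthm_eps} exceeds that of \eqref{mainthm_eps_1}, so a valid $\epsilon_t$ exists. I expect this consistency and compatibility check, rather than the rearrangement itself, to be the main thing to get right.
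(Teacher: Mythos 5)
Your proposal is correct and takes essentially the same route as the paper: the lower bound comes from requiring that the initialization radius in \eqref{mainthm_initial} accommodate $e_t$, and the upper bound from imposing that the right-hand side of \eqref{mainthm_convergence} be at most $e_t$, using the same sufficient relaxation $1-\big(1-\Theta(\epsilon_t)\big)\gamma \ge 1-\gamma$ that produces the $(1-\gamma)^2$ factor (this is exactly the derivation embedded in the paper's appendix proof of Theorem \ref{coro: convergence_outer}). Your attention to the $\Theta(\sqrt{N})$ bookkeeping and the compatibility of the two bounds (guaranteed for $N$ large as in \eqref{mainthm_sample}) matches the paper's remarks as well.
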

\begin{remark}[Reduce $\epsilon_t$ as $t$ increases]
\normalfont
The lower bound can  always be smaller than the upper bound given a sufficiently large $N$ as shown in \eqref{mainthm_sample}.
From both \eqref{mainthm_eps_1} and \eqref{mainthm_eps}, we know that $\epsilon_t$ needs to decrease as $e_t$ decreases.  
Specifically, the lower bound of  $\epsilon_t$ is a linear function of $e_t$, and the upper bound of $\epsilon_t$ is a linear function of $e_t$. Namely, 
we need a relatively large $\epsilon_0$ at the beginning since the distance between initial point $\bfW^{(0,0)}$ and $\bfW^\star$ is large. 
As $t$ increases, $e_t$, which is the distance of learned neuron weights $\bfW^{(t,0)}$ to the ground truth $\bfW^\star$, becomes smaller, and we should decrease $\epsilon_t$ to guarantee an improved $Q^{(t+1,0)}$ over $Q^{(t,0)}$.
\end{remark}

Theorem \ref{coro: convergence_outer} shows that the learned model from Algorithm \ref{main_alg} converges to the optimal Q-value function $Q^\star$ with geometric decay up to an estimation error shown in \eqref{eqn: co3_3}. 

\begin{theorem}[Convergence to $Q^\star$]\label{coro: convergence_outer}
    Suppose Assumptions \ref{ass1} and \ref{ass-2} hold, the buffer size $N$ satisfies \eqref{mainthm_sample}.
   Let us define $C_{\max}$ be a constant that is larger than  $C_t$ for $1\le t\le T$,  when $\epsilon_t$ satisfy
\begin{equation}\label{eqn:value_epislon}
    \epsilon_t = \frac{c_{\varepsilon}\cdot  \Theta(\sqrt{N})\cdot  e_t }{(1-C_{\max}) \cdot |\mathcal{A}|\cdot R_{\max}} -\frac{C_{\max}}{1-C_{\max}}
\end{equation}
for a fixed constant $c_\epsilon\in (0, (1-\gamma)^2]$,
    and the initialization satisfies
    \begin{equation}\label{eqn:co3_2}
    \|\bfW^{(0,0)}-\bfW^\star\|_F 
    \le ~\mathcal{O}\Big(1-\frac{1-c_\epsilon}{\Theta(\sqrt{N})}\Big)\cdot \frac{\rho\cdot \|\bfW^\star\|_F}{K}.
    \end{equation}
    Then, with the high probability of at least $1-T\cdot q^{-d}$, we have
    \begin{enumerate}
    \item the learned weights decay geometrically with
    \begin{equation}\label{eqn: co3_1}
    \begin{split}
        &\| \bfW^{(t+1,0)} -\bfW^\star \|_F
        \le
     \big(\gamma + {c_\epsilon}\cdot(1-\gamma)\big)
    \cdot \|\bfW^{(t,0)}-\bfW^\star\|_F, ~~ \forall t \le \log_{\gamma}(1/N).
    \end{split}
    \end{equation}
        \item     the returned model $Q(\bfW^{(T,0)})$ exhibits an  estimation error in the order of ${1}/{\sqrt{N}}$ with
    \begin{equation}\label{eqn: co3_3}
    \begin{split}
    \sup_{(\bfs,a)}\big|Q(\bfW^{(T,0)})-Q^\star\big|
    =&   \Theta\big(\|\bfW^{(T,0)}-\bfW^\star\|_F\big)
    \le \frac{{C_{\max}}\cdot |\mathcal{A}|\cdot R_{\max}}{(1-\gamma)^2 \cdot \Theta(\sqrt{N})},
    \end{split}
    \end{equation}
    where $T \ge  \log_{\gamma}(1/N)$.
    \end{enumerate}
\end{theorem}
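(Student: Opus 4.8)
The plan is to derive both claims by iterating the one-step estimate \eqref{mainthm_convergence} of Lemma \ref{Thm:one-shot} under the explicit schedule \eqref{eqn:value_epislon}, so that the argument reduces to two tasks: (i) certifying that the hypotheses of Lemma \ref{Thm:one-shot} stay valid at every outer loop $t$, and (ii) solving the resulting scalar recursion for $e_t = \|\bfW^{(t,0)}-\bfW^\star\|_F$. The first thing I would check is that the prescribed $\epsilon_t$ is admissible: \eqref{eqn:value_epislon} is exactly the upper-bound expression \eqref{mainthm_eps} of Corollary \ref{coro: epsilon} with $C_t$ replaced by the uniform $C_{\max}$ and $(1-\gamma)^2$ replaced by the constant $c_\epsilon\in(0,(1-\gamma)^2]$. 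Since $C_t\le C_{\max}$ and the map $C\mapsto C+(1-C)\epsilon$ is increasing for $\epsilon<1$, this single choice dominates the true-$C_t$ upper bound for every iterate, while the buffer size \eqref{mainthm_sample} keeps it above the lower bound \eqref{mainthm_eps_1} and nonnegative.

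The core computation is the substitution of $\epsilon_t$ into the additive term of \eqref{mainthm_convergence}. The key cancellation is $C_{\max}+(1-C_{\max})\epsilon_t = \Theta(\sqrt{N})\,c_\epsilon\, e_t/(|\mathcal{A}|R_{\max})$, so the additive term collapses into a fixed multiple of $e_t$ — of order $c_\epsilon(1-\gamma)e_t$ once the $\Theta(\sqrt{N})$ constants in \eqref{mainthm_convergence} and \eqref{eqn:value_epislon} are matched — while the multiplicative term is at most $\gamma e_t$ because $1-\Theta(\epsilon_t)\le 1$. This gives the one-step contraction $e_{t+1}\le(\gamma+c_\epsilon(1-\gamma))\,e_t$, and the constraint $c_\epsilon\le(1-\gamma)^2$ forces the factor strictly below one, which is precisely \eqref{eqn: co3_1} of Part 1.

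The delicate point is that this contraction is only licensed while $e_t$ satisfies the closeness condition \eqref{mainthm_initial}, whose right-hand side itself depends on $\epsilon_t$ and hence on $e_t$. I would resolve this with an induction on $t$: the initialization \eqref{eqn:co3_2} secures \eqref{mainthm_initial} at $t=0$ (it is the $\Theta(\epsilon_t)\to c_\epsilon$ specialization of that bound), and since $e_t$ is strictly decreasing under the contraction while $\epsilon_t$ from \eqref{eqn:value_epislon} is an increasing affine function of $e_t$, the iterate never re-enters the region where \eqref{mainthm_initial} fails; thus the hypotheses of Lemma \ref{Thm:one-shot} are self-sustaining. Each outer loop invokes Lemma \ref{Thm:one-shot} with failure probability $q^{-d}$, so a union bound over the $T$ iterations yields the claimed $1-T\cdot q^{-d}$. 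Part 1 then holds for $t\le\log_\gamma(1/N)$, the range over which $\epsilon_t$ (scaling with $e_t$) remains nonnegative and the geometric term dominates the floor.

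For Part 2, once $t\ge\log_\gamma(1/N)$ the iterate $e_t$ has decayed to the level where the constant $-C_{\max}/(1-C_{\max})$ in \eqref{eqn:value_epislon} saturates $\epsilon_t$ at its floor; the recursion degenerates to $e_{t+1}\le\gamma e_t + \Theta\!\big(C_{\max}|\mathcal{A}|R_{\max}/((1-\gamma)\sqrt{N})\big)$, whose fixed point is $\Theta\!\big(C_{\max}|\mathcal{A}|R_{\max}/((1-\gamma)^2\sqrt{N})\big)$, matching the weight-space part of \eqref{eqn: co3_3}. I would finish by converting this into the Q-value sup-norm bound through the local near-linearity of $H$ around $\bfW^\star$, namely $\sup_{(\bfs,a)}|Q(\bfW^{(T,0)})-Q^\star| = \Theta(\|\bfW^{(T,0)}-\bfW^\star\|_F)$, which is exactly the two-sided comparison quantified by $\rho$ in Definition \ref{defi: rho}. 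The main obstacle I anticipate is the coupled induction of the third paragraph: guaranteeing that \eqref{mainthm_initial} is preserved across iterations under a schedule $\epsilon_t$ that is itself tied to the shrinking $e_t$, since a careless bound there could either violate admissibility of $\epsilon_t$ or break the contraction.
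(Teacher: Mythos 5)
Your proposal is correct and takes essentially the same route as the paper: after the per-outer-loop recursion (the content of Lemma \ref{Thm:one-shot}, eq.\ \eqref{eqn: induction_t}), the paper's own proof finishes exactly as you describe --- it substitutes the schedule \eqref{eqn: t1_episolon} so that $C_t+(1-C_t)\epsilon_t$ collapses to a multiple of $e_t$, inducts over $t$ to obtain the geometric rate \eqref{eqn: final_T}, and reads off the error floor \eqref{eqn: co3_3} from the requirement that $\epsilon_t$ stay nonnegative. If anything you are more explicit than the paper about the two points it glosses over (preservation of the initialization condition \eqref{mainthm_initial} along the trajectory, and the union bound giving $1-T\cdot q^{-d}$), and you inherit the same harmless constant-matching looseness, since the paper likewise passes from a $c_\epsilon/(1-\gamma)$ additive term to the stated $\gamma+c_\epsilon(1-\gamma)$ rate, which amounts to a rescaling of $c_\epsilon$.
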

\Revise{
\begin{remark}[Selection of $\epsilon_t$]
\normalfont

The value of $\varepsilon_t$ is influenced by three key factors: $c_\varepsilon$, $C_{\max}$, and the current estimation error bound $e_t$. The constant $c_\varepsilon$ is fixed and controls the magnitude of the values in the sequence ${\varepsilon_t}_{t=1}^T$, providing a way to regulate the level of $\varepsilon_t$.
Regarding $C_{\max}$, $C_t$ tends to decrease as the iteration index $t$ increases, indicating a progressive improvement in the policy and resulting in a smaller data distribution shift. Hence, to estimate $C_{\max}$, we can leverage $C_0$, which is obtained by collecting data based on the policy $\max_a Q(\bfW^{(0,0)};\bfs,a)$. 
Moreover, with $c_\varepsilon$ fixed, the estimation error $e_t$ follows a geometric decay pattern, as depicted in \eqref{eqn: co3_1}. This behavior allows us to use the expression $\big(\gamma+c_\varepsilon\cdot (1-\gamma)\big)^t\cdot e_0$ as an estimate for $e_t$.
\end{remark}}

\begin{remark}[Geometric decay to $Q^\star$]
\normalfont
    From \eqref{eqn: co3_1} and \eqref{eqn: co3_3}, we know that the learned model from the proposed algorithm converges to $Q^\star$ with a geometric decay up to some estimation error.  
    The convergence rate is in the order of $\gamma+{c_\varepsilon}\cdot(1-\gamma)$, and the estimation error is in the order of $(1-\gamma)^{-2}\cdot {C_{\max}}/\sqrt{N}$. As we mentioned in the takeaways in Section \ref{subsec:takeaway},  a small discounted factor $\gamma$ leads to a fast convergence rate. We have a reduced estimation with a large buffer with size $N$,  a small distribution shift $C_T$, and a small $\gamma$.  
\end{remark}

\begin{remark}[Larger $\epsilon$ values for   enlarged region of convergence  and smaller $\epsilon$ values for   faster convergence]
\normalfont
From \eqref{eqn:value_epislon}, we know that $c_\epsilon$ controls the value level of  $\epsilon_t$. 
From \eqref{eqn:co3_2}, 
a larger $c_\epsilon$ (i.e., a higher level of $\epsilon$ values) 
increases  the upper bound of $\|\bfW^{(0,0)}-\bfW^\star\|_2$ and, thus, enlarges the proper region of $\bfW^{(0,0)}$. 
\eqref{eqn: co3_1} indicates that   the convergence rate is in the order of  ${c_\epsilon}$.  A smaller ${c_\epsilon}$ (i.e., lower level of $\epsilon$ values) leads to faster convergence. 
\end{remark}





In the following corollary, we provide the sample complexity for achieving $\delta$ estimation error as shown in \eqref{eqn:co4}, where $\widetilde{\Omega}(\cdot)$ omits some $\log$ factors. The corollary can be obtained by letting \eqref{eqn: co3_3} to be less than a desired accuracy $\delta$.
\begin{corollary}[Sample complexity]
      To achieve an estimation error of $\delta$, the required number of samples, referred to as the sample complexity, needs to satisfy
      \begin{equation}\label{eqn:co4}
          N_s = N\cdot\log \gamma =\widetilde{\Omega}\big( (1-\gamma)^4\cdot C_{\max}\cdot |\mathcal{A}|^2R_{\max}^2\cdot  K^3\cdot L\cdot d\cdot T /\delta^2\big).
      \end{equation}
\end{corollary}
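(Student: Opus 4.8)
The plan is to obtain the sample complexity directly by inverting the estimation-error guarantee of Theorem \ref{coro: convergence_outer}. The starting point is the returned-model bound \eqref{eqn: co3_3}, which certifies that after $T \ge \log_\gamma(1/N)$ outer loops the learned Q-function obeys $\sup_{(\bfs,a)}\big|Q(\bfW^{(T,0)}) - Q^\star\big| \le C_{\max}|\mathcal{A}|R_{\max}\big/\big((1-\gamma)^2\,\Theta(\sqrt{N})\big)$. I would impose the target accuracy by forcing this right-hand side to be at most $\delta$, that is $C_{\max}|\mathcal{A}|R_{\max}\big/\big((1-\gamma)^2\,\Theta(\sqrt{N})\big) \le \delta$, and then solve for the buffer size. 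This yields $\sqrt{N} = \Omega\big(C_{\max}|\mathcal{A}|R_{\max}/((1-\gamma)^2\delta)\big)$, hence a lower bound on $N$ that scales like $\delta^{-2}$ and carries the $C_{\max}$, $|\mathcal{A}|$, $R_{\max}$, and $(1-\gamma)$ dependence of the final expression.

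Next I would reconcile this accuracy-driven requirement with the buffer-size condition \eqref{mainthm_sample}, namely $N = \Omega(\rho^{-2}K^3 L d\,\log q\,T)$, which is needed for Theorem \ref{coro: convergence_outer} to apply in the first place; the neural-network factors $K^3 L d$ and the iteration count $T$ enter the final bound through this condition, while $\rho^{-2}$ and $\log q$ are constant and logarithmic terms absorbed into $\widetilde{\Omega}(\cdot)$ (using the lower bound on $\rho$ supplied by Lemma \ref{Lemma: Zhong}). Collecting the two requirements so that $N$ is simultaneously large enough for the $\delta$-accuracy target and for the neural-approximation condition, and then converting the buffer size into the total number of collected samples via the stated relation $N_s = N\cdot\log\gamma$ (which encodes the iteration count $T \ge \log_\gamma(1/N)$), produces \eqref{eqn:co4} once logarithmic factors are discarded.

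Since each individual manipulation is elementary algebra, the real difficulty is not any single hard step but the bookkeeping of mutually dependent conditions. In particular, the hypotheses under which \eqref{eqn: co3_3} was derived—the initialization bound \eqref{eqn:co3_2} and the $\epsilon_t$ schedule \eqref{eqn:value_epislon}—both themselves tighten as $N$ grows, since a larger $N$ shrinks the admissible $e_t$ and forces a smaller $\epsilon_t$. The main obstacle I anticipate is therefore verifying that inflating $N$ to hit the accuracy target keeps all of these conditions simultaneously satisfiable, so that the guarantee of Theorem \ref{coro: convergence_outer} remains valid at the chosen $N$; once this internal consistency is checked, the stated bound follows by substitution and simplification.
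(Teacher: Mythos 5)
Your proposal follows the paper's own derivation exactly: the paper obtains this corollary by requiring the estimation-error bound \eqref{eqn: co3_3} to be at most $\delta$ and then combining the resulting lower bound on $N$ with the buffer-size requirement \eqref{mainthm_sample}, which is precisely your two-step argument (the network factors $K^3 L d T$ entering through the latter, logarithmic and $\rho$ factors absorbed into $\widetilde{\Omega}(\cdot)$). Note only that squaring your inequality yields $C_{\max}^2\,|\mathcal{A}|^2 R_{\max}^2/\big((1-\gamma)^4\delta^2\big)$, so the exponents as printed in \eqref{eqn:co4} --- $C_{\max}$ to the first power and $(1-\gamma)^4$ in the numerator rather than the denominator --- appear to be slips in the paper's statement rather than a gap in your reasoning.
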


\begin{remark}[Sample complexity]
\normalfont
\eqref{eqn:co4} shows that the sample complexity is a linear function of $d$ and $L$, where $d$ is the feature mapping of the state-action pair $(\bfs,a)$ and $L$ is the number of layers. Given the freedom of degree of $\bfW$ is a linear function of $d$ and $L$, respectively, the sample complexity is almost order-wise optimal with respect to $d$ and $L$.
\end{remark}

\Revise{The following corollary presents a tighter bound for the model estimation error compared with \eqref{eqn: co3_3}. This improvement arises from a stronger assumption on $C_t$, which becomes a function of $\|\bfW^{(t,0)}-\bfW^\star\|_2$.
Compared with \eqref{eqn: co3_3}, the estimation error bound in \eqref{eqn: co2_2} and \eqref{eqn: co2_3} considers the cases that the behavior policy is improved as $\bfW^{(t,0)}$ becomes closer to $\bfW^\star$. 
As a result, we achieve a more precise and tighter estimation of the error in the model.}
\begin{corollary}[Distribution shift and estimation error]\label{coro: C_t}
Assume that $C_t$ is H\"older continuous with a factor $\alpha$ as
\begin{equation}\label{eqn: co2_1}
\begin{gathered}
    C_t = \mathcal{O}(\|\bfW^{(t,0)}-\bfW^\star\|_2^\alpha),
\end{gathered}
\end{equation} 
    \footnote{Although this equation depends on the algorithm’s trajectory, it can be easily derived from a time-independent equation $|\pi_\bfW(\bfs|a) - \pi^\star(\bfs|a)\le C||\bfW-\bfW^\star||_2$.  Also, this equation is a weaker condition than (2) in \cite{ZWL19}, which holds universally across the entire space and model parameter space.}
for $0< \alpha \le 1$ and all $t\in[T]$.
\Revise{When $\epsilon_t$ satisfy
\begin{equation}\label{eqn:value_epislon_2}
    \epsilon_t = \frac{c_{\varepsilon}\cdot  \Theta(\sqrt{N})\cdot  e_t }{(1-C_{t}) \cdot |\mathcal{A}|\cdot R_{\max}} -\frac{C_{t}}{1-C_{t}},
\end{equation}}
 the estimation error of the Q-function satisfies
\begin{equation}\label{eqn: co2_2}
    \sup_{(\bfs,a)}\Big|Q(\bfW^{(T,0)})-Q^\star\Big| =\frac{ (|\mathcal{A}|\cdot R_{\max})^{\frac{1}{1-\alpha}}}{(1-\gamma)^{\frac{2}{1-\alpha}} \cdot \Theta({N}^{\frac{1}{2(1-\alpha)}})}.
\end{equation}
In the special case that $C_t = \mathcal{O}(\|\bfW^{(t,0)}-\bfW^\star\|_2) $, then 
\begin{equation}\label{eqn: co2_3}
    \textstyle\sup_{(\bfs,a)}\big|Q(\bfW^{(T,0)})-Q^\star\big| = 0.
\end{equation}
\end{corollary}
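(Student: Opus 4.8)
The plan is to build directly on the one-step recursion \eqref{mainthm_convergence} of Lemma \ref{Thm:one-shot} together with the floor established in Theorem \ref{coro: convergence_outer}; the only genuinely new ingredient is to feed the H\"older dependence \eqref{eqn: co2_1} of $C_t$ on $e_t := \|\bfW^{(t,0)}-\bfW^\star\|_F$ back into that floor and solve the resulting fixed-point equation self-consistently. First I would substitute the prescribed $\epsilon_t$ from \eqref{eqn:value_epislon_2} into the additive term of \eqref{mainthm_convergence}. A direct computation gives the cancellation $C_t + (1-C_t)\epsilon_t = c_\varepsilon\,\Theta(\sqrt{N})\,e_t / (|\mathcal{A}|R_{\max})$, so the recursion collapses to the purely geometric one of \eqref{eqn: co3_1}, $e_{t+1} \le \big(\gamma + c_\varepsilon(1-\gamma)\big)e_t$, and this choice is admissible (i.e. $\epsilon_t \ge 0$) precisely while $e_t \ge C_t|\mathcal{A}|R_{\max}/\big(c_\varepsilon\,\Theta(\sqrt{N})\big)$.

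Next I would determine the floor at which the geometric decay stalls. In Theorem \ref{coro: convergence_outer} the floor \eqref{eqn: co3_3} equals $C_{\max}|\mathcal{A}|R_{\max}/\big((1-\gamma)^2\Theta(\sqrt{N})\big)$ because $C_t$ is treated as the frozen bound $C_{\max}$. Here, instead, at the stalling point $e_T \approx e^\star$ the admissibility threshold together with \eqref{eqn: co2_1} forces $C_T = \mathcal{O}\big((e^\star)^\alpha\big)$, so I would replace $C_{\max}$ in the floor by $\mathcal{O}\big((e^\star)^\alpha\big)$ and, using $\sup_{(\bfs,a)}|Q(\bfW^{(T,0)})-Q^\star| = \Theta(e_T)$, solve the self-consistent equation $e^\star \sim (e^\star)^\alpha |\mathcal{A}|R_{\max}/\big((1-\gamma)^2\Theta(\sqrt{N})\big)$. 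Isolating $e^\star$ gives $(e^\star)^{1-\alpha} \sim |\mathcal{A}|R_{\max}/\big((1-\gamma)^2\Theta(\sqrt{N})\big)$, hence $e^\star \sim \big(|\mathcal{A}|R_{\max}/((1-\gamma)^2\sqrt{N})\big)^{1/(1-\alpha)}$, and substituting $c_\varepsilon = \Theta((1-\gamma)^2)$ yields exactly \eqref{eqn: co2_2}.

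For the special case $\alpha = 1$ in \eqref{eqn: co2_3}, the same substitution makes the recursion linear: with $C_t = \mathcal{O}(e_t)$ the additive term is itself proportional to $e_t$, so $e_{t+1} \le \big(\gamma + \mathcal{O}(|\mathcal{A}|R_{\max}/((1-\gamma)\Theta(\sqrt{N})))\big)e_t$. Under the sample-size requirement \eqref{mainthm_sample} the bracketed rate is strictly below $1$, so the iterates contract all the way to $0$ with no nonzero floor; equivalently, the self-consistent equation $1 \sim |\mathcal{A}|R_{\max}/((1-\gamma)^2\Theta(\sqrt{N}))$ has no admissible positive root, forcing $e_T \to 0$ and a vanishing estimation error.

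The main obstacle I anticipate is justifying the self-consistent substitution rigorously rather than heuristically: one must show that the modified recursion $e_{t+1} \le \gamma e_t + \Theta\big(e_t^\alpha\,|\mathcal{A}|R_{\max}/((1-\gamma)\Theta(\sqrt{N}))\big)$, obtained once $\epsilon_t$ is clamped to $0$ below the admissibility threshold, genuinely converges to the claimed $e^\star$ and does not stall prematurely or oscillate. For $0<\alpha<1$ the update map $x \mapsto \gamma x + c\,x^\alpha$ is increasing and concave on $[0,\infty)$ with a single positive fixed point, so I would verify that iterates above it decrease monotonically into it (and those below increase up to it), pinning $e_T = \Theta(e^\star)$; for $\alpha = 1$ the map is linear with slope below $1$ under \eqref{mainthm_sample}, giving the unique fixed point $0$. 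Care is also needed to confirm that the bound $C_{\max}$ in \eqref{eqn: co3_3} may legitimately be taken as the smaller value $C_T$ attained at the floor, which is precisely where the refinement over Theorem \ref{coro: convergence_outer} originates.
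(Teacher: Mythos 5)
Your proposal is correct and follows essentially the same route as the paper: the paper gives no separate proof of this corollary, deriving it implicitly from Lemma \ref{Thm:one-shot} and Theorem \ref{coro: convergence_outer} by exactly the substitution you perform — feed the H\"older bound $C_t = \mathcal{O}(e_t^{\alpha})$ into the additive floor (equivalently, the admissibility threshold $\epsilon_t \ge 0$ for the choice \eqref{eqn:value_epislon_2}) and solve the self-consistent equation $(e^{\star})^{1-\alpha} \sim |\mathcal{A}|\, R_{\max}/\big((1-\gamma)^2\,\Theta(\sqrt{N})\big)$, with the $\alpha = 1$ case collapsing to a strict contraction with zero floor. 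Your additional monotone fixed-point analysis of the map $x \mapsto \gamma x + c\, x^{\alpha}$ is a sound way to make the stalling argument rigorous and is consistent with, rather than divergent from, the paper's intended derivation.
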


\begin{remark}[Reduced or zero estimation error when behavior policy is improved over iterations]
\normalfont
Recall the definition of $C_t$ in Definition \ref{defi:C_t}.
If $e_t$ is zero, i.e., $\bfW_t = \bfW^\star$, the action selected following the greedy policy is always the optimal action, which means that  $C_t = 0$. 
Therefore, it is reasonable to assume that $C_t$ is H\"older continuous as shown in \eqref{eqn: co2_1}. 
When $\alpha>0$, we can see that \eqref{eqn: co2_2} is less than \eqref{eqn: co3_3}, indicating a reduced estimation error and sample complexity.
Typically, if $C_t$ has an order of growth less than $e_t$ near $\bfW^\star$, a zero estimation error is achievable.
\end{remark}

\subsection{The Roadmap of Proofs, Comparison with Existing Works, and Limitations}\label{sec: proof_roadmap}

The proof of Theorem \ref{Thm:one-shot} draws inspiration from the model estimation framework in the supervised learning setting \cite{ZSJB17, ZWXL20}. The key idea is to use a population risk function (PRF) to characterize the objective function in \eqref{eqn:MSBE2}. By satisfying certain conditions such as having sufficient training samples and a bounded data distribution shift, the approximation error between the PRF and objective function can be bounded. This allows for the characterization of the optimization problem in \eqref{eqn:MSBE2} by analyzing the landscape and convergence properties of the PRF.

In comparison to existing proofs based on the model estimation frameworks, this paper addresses two additional challenges. Firstly, it extends the proof from one-hidden-layer neural networks to multi-layer neural networks. This extension is achieved by providing new tools for characterizing the Hessian matrix (refer to Lemma \ref{Lemma: first_order_derivative}) and concentration bound (refer to Lemmas \ref{Lemma: first_order_derivative} and \ref{Lemma: distance_Second_order_distance}).
Additionally, this paper characterizes the differences between the two functions caused by the interaction of neuron weights across layers in the gradient and Hessian matrix.
Secondly, the paper extends the proof from supervised learning settings to Q-learning settings.  This requires characterizing the additional error term caused by the data distribution shift and "noisy" labels (refer to Lemma \ref{Lemma: first_order_derivative}) because the empirical risk function is no longer an expectation of the defined population risk function.

Existing state-of-the-art theoretical results on Q-learning with neural network approximations primarily revolve around the NTK and Besov regularity frameworks. In the NTK framework, the networks are assumed to be extremely over-parameterized, requiring an impractical projection step and resulting in error bounds that cannot be characterized for networks with finite width. In the Besov regularity framework, the neural network needs to be sparse, which does not align with the DQN algorithm. Furthermore, the analysis in the Besov regularity framework relies on the achievability of the global minimizer of the non-convex problem in \eqref{eqn:MSBE2}, which cannot be guaranteed using GD algorithms. This paper takes a significant step towards bridging the gap between theoretical understanding and practical applications of DQN by addressing these challenges. However, there still remains a gap between the theoretical results and numerical findings. Future research directions include devising efficient exploration strategies for DQNs to further enhance their performance and extending the theoretical analysis to variants of DQNs and policy gradient-based methods.

\section{Numerical Experiments} \label{sec: empirical_results}
In this section, we provide numerical justification that our theoretical findings are aligned with practical DQNs through the Atari Pong game, which is commonly used for DQNs in \cite{MKSG13, MKSR15, VGS16}. 
We take the Double DQN (DDQN) \cite{VGS16}, one of the most popular variants of DQN, as the backbone  in the setup.
DDQN differs from DQN only in \eqref{eqn:label} via changing $y= r+\gamma\cdot Q(\bfW^{(t,0)};\bfs',a^\star)$  to $y= r+\gamma\cdot Q(\bfW^{(t,m)};\bfs',a^\star)$, where $a^\star = \argmax_a Q(\bfW^{(t,0)};\bfs',a)$. 
DDQN outperforms DQN in the relief of overoptimism and the improvement of stability.
Our numerical experiments on DDQN also 
indicates that our analysis applies to the variants of DQN. 

The input to the network is $84\times 84\times 4$ images, where the last dimension represents the number of frames in history. The network is a convolutional neural network consisting of three convolutional layers and one fully-connected layer.
The algorithm terminates if the average score over the recent $20$ episodes does not improve or the algorithm reaches the maximum episode set as $200$, which is around $4\times 10^5$ training steps.
The testing score is calculated based on a similar setup as the training process by fixing the maximum memory size $N$  as $2000$ and greedy policy, i.e., $\epsilon= 0$. 
 Each point in the plot is averaged over $10$ experiments with an error bar representing  the standard deviation.

\textbf{Estimation errors with respect to the sample complexity $N$.}
As the Q-value function is the estimate of the expected cumulative reward, we use the difference between the reward obtained from the estimated Q-value function and the maximum reward as the estimation error of the learned model to the optimal Q-value function, which is also consistent with the experiments in \cite{MKSG13, MKSR15, VGS16}.
Given that the full test score in the Pong game is $21$, we set the test error as the value of ($\textit{21 - \text{test score}}$) in each experiment. The $\epsilon_t$ in $\epsilon$-greedy policy decreases geometrically from $1$ to $0.01$. We vary the number of samples in the replay buffer from $400$ to $2500$. 
 Figure \ref{fig:error_to_N} shows that the test error is almost linear in $1/\sqrt{N}$, which is consistent with our characterization in (\ref{eqn: co3_3}). In addition, experiments with a large $N$ have a shorter error bar indicating a more stable learning performance with a large sample complexity as shown in \eqref{mainthm_initial}. 

\begin{figure}[ht]
\vspace{-3mm}
    \centering
    \begin{minipage}{0.32\linewidth}
    \includegraphics[width=1.0\textwidth]{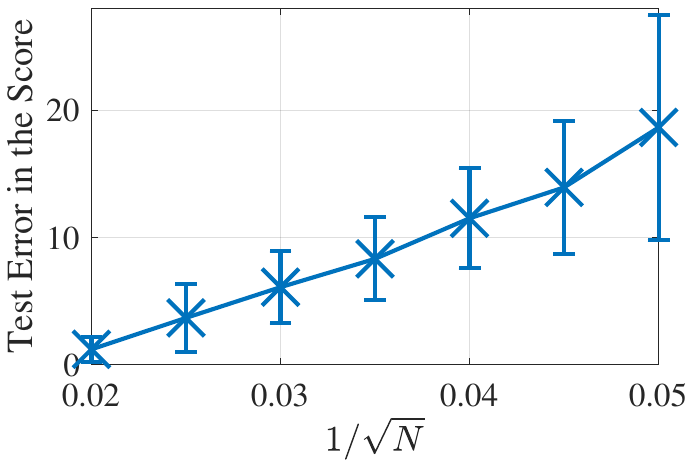}
    \caption{Test error in scores against the number of samples.}
    \label{fig:error_to_N}
    \end{minipage}
    ~
    \begin{minipage}{0.32\linewidth}
    \centering
    \includegraphics[width=1.0\textwidth]{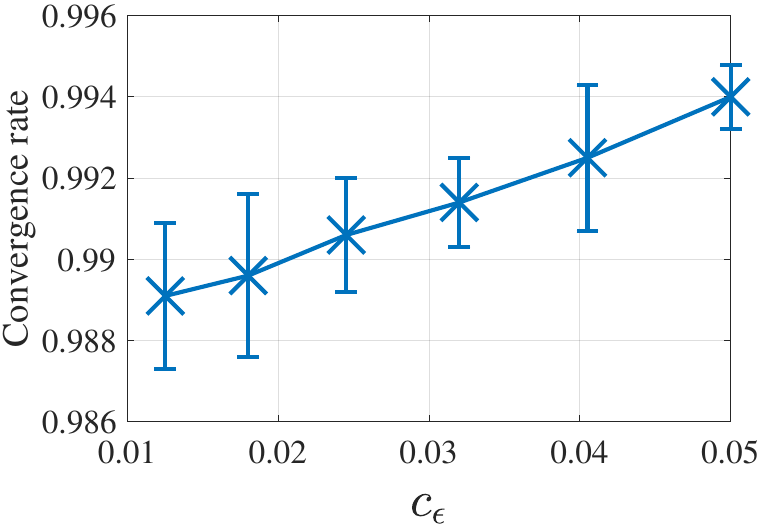}
    \caption{The convergence rate against the value of $c_\varepsilon$.}
    \label{fig:convergence}
    \end{minipage}
        ~
    \begin{minipage}{0.32\linewidth}
    \centering
    \includegraphics[width=0.85\textwidth]{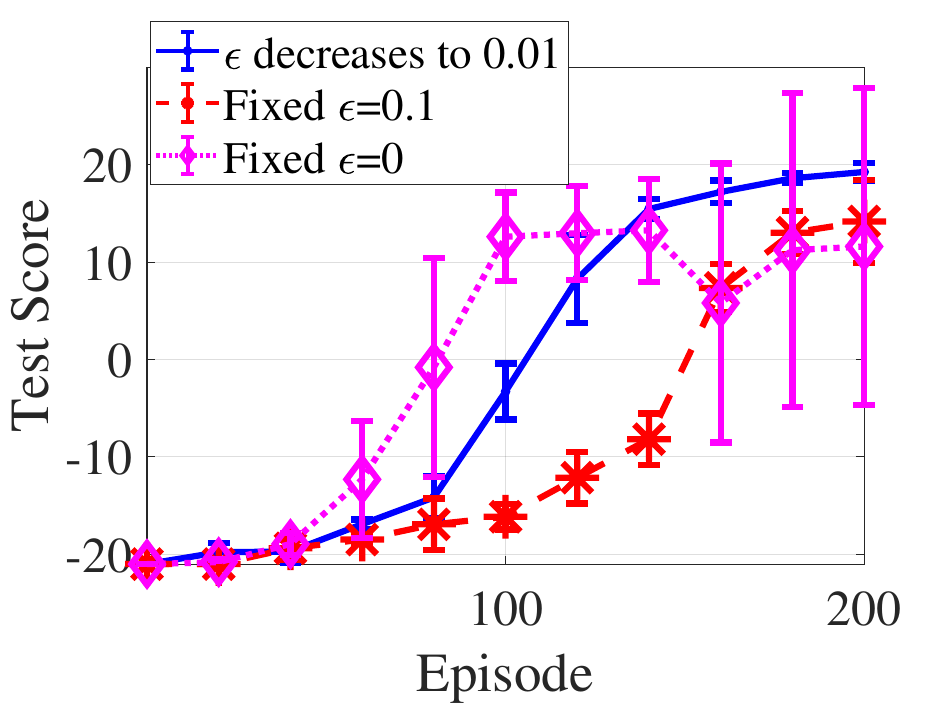}
    \caption{Test scores of the cases of $\epsilon$-greedy down to different values.}
    \label{fig:greedy}
    \end{minipage}
    \vspace{-3mm}
\end{figure}

\textbf{Convergence with different selections of $\varepsilon$.}
Figure \ref{fig:convergence} illustrates the convergence rate when $\varepsilon_t$ in the $\varepsilon$-greedy policy changes. 
For each point,  $\varepsilon_0$ is selected as the value in the x-axis,  and we decrease $\varepsilon_t$ geometrically as the iteration $t$ increases. 
Each point is averaged over $5$ independent trials.  We can see that the convergence rate is a linear function of $c_\varepsilon$, matching our findings in \eqref{eqn: co3_1}.

\textbf{Performance with different selections of $\epsilon$.}
We investigate the effect of $\epsilon$-greedy policy during the training. In Figure \ref{fig:greedy}, we show the test scores using $\epsilon$-greedy policy with (1) geometrically decreasing $\epsilon$ from $1$ to $0.01$, (2) fixed $\epsilon$ as $0.1$, and (3) fixed $\epsilon$ as $0$. Each test score in the curve is averaged over the past $10$ episodes to smooth the trend. One can observe that a gradually decreasing $\epsilon$ leads to a better score than fixing $\epsilon$ to be the final value. The test score of $\epsilon=0.1$ shares a similar trend to the $\epsilon$-greedy policy but with a slower speed, matching our findings in \eqref{eqn: co3_1} that a small $\epsilon$ leads to a slow convergence rate. The test score of $\epsilon=0$ has the fastest convergence rate at the early stage, but the convergent point is the worst and the most unstable, matching our findings in \eqref{eqn:co3_2} that a small $\epsilon$ leads to a reduced radius of convergence.

\section{Conclusions and Discussions}\label{sec: conclusion}
This paper provides the first convergence and sample complexity analysis of the DQN algorithm equipped with $\epsilon$-greedy exploration.  We establish the theoretical guarantee for the convergence of the learned model to the optimal Q-value function $Q^\star$, which can be used to derive the optimal policy. We provide a nearly optimal sample complexity for achieving an arbitrarily small estimation error. We also prove that 
$\epsilon$-greedy with decreasing $\epsilon$ achieves both an enlarged radius of convergence and an improved convergence rate. Future directions include the generalization of the theoretical analysis to the variants of DQNs and the design of efficient exploration strategies for DQNs.

One of the anonymous reviewers raised concerns about \eqref{mainthm_initial} regarding its demanding requirements on the initial policy. We would like to clarify that \eqref{mainthm_initial} primarily concerns the optimization analysis of the objective function rather than the initial policy. Instead, we impose only a minor assumption on the initial policy, with no specific environmental constraints. In our work, $C_0$ quantifies the initial policy's difference from the optimal policy and is independent of \eqref{mainthm_initial} in our primary theoretical results. With a sufficiently large replay buffer, $C_0$ can approach one, except when it equals 1, indicating an extreme divergence from the optimal policy in all states. Thus, our initial policy assumption is minimal. Considering the highly non-convex nature of deep neural network objective functions with countless local minima, \eqref{mainthm_initial} represents the state-of-the-art assumption for optimizing deep neural networks.

As mentioned by the anonymous reviewers, the selections of $\varepsilon_t$ in \eqref{eqn:value_epislon} depends on $e_t = \|\bfW^{(t)}-\bfW^\star\|_2$, which is unknown to the agent. Here, we would like to clarify that $e_t$ can be replaced by its upper bound in \eqref{eqn: co3_1} and lower bound in \eqref{eqn: co3_3}.
By plugging \eqref{eqn: co3_1} and \eqref{eqn: co3_3} into \eqref{eqn:value_epislon}, we have
\begin{equation}
    \epsilon_t = \max \Big\{ \frac{c_{\varepsilon}\cdot  \Theta(\sqrt{N})\cdot \big(\gamma + {c_\epsilon}\cdot(1-\gamma)\big)^t e_0 }{(1-C_{\max}) \cdot |\mathcal{A}|\cdot R_{\max}} 
    -\frac{C_{\max}}{1-C_{\max}}, \quad   c_{\epsilon}\cdot \frac{C_{\max}}{1-C_{\max}}\Big\}.
\end{equation}

As mentioned by one of the anonymous reviewers, Corollary \ref{coro: C_t} relies on an assumption that depends on the algorithm's trajectory, which lacks mathematical rigor. Here, we would like to clarify that although this equation depends on the algorithm's trajectory, it can be easily derived from a time-independent equation
\begin{equation}\label{eqn:star}
    |\pi_{\bfW}(\bfs|a) - \pi^\star(\bfs|a)|\le C||\bfW-\bfW^\star||_2. 
\end{equation}  
Additionally,
it is worth mentioning 
the difference between \eqref{eqn: co2_1} in this paper and (2) in \cite{ZWL19}. Specifically, we want to highlight that the equation above, \eqref{eqn:star}, leads to \eqref{eqn: co2_1}. In contrast, (2) in \cite{ZWL19} requires the following condition to hold for for all $\bfW_1$ and $\bfW_2$:
\begin{equation}
    |\pi_{\bfW_1}(\bfs|a) - \pi_{\bfW_2}(\bfs|a)\le C||\bfW_1-\bfW_2||_2
\end{equation}
As a comparison,  \eqref{eqn:star} only requires $\bfW_2$ to be the ground truth and $\
\bfW_1$ to be some weights near the ground truth.
In other words, \eqref{eqn:star} is a sufficient condition for \eqref{eqn: co2_1}.
While equation (2) in \cite{ZWL19} does not hold with epsilon-greedy, \eqref{eqn:star} can hold with Q-learning using epsilon-greedy, thus ensuring the mathematical rigor of \eqref{eqn: co2_1}.

\section*{Acknowledgment}
This work was done when Shuai was a postdoc at Rensselaer Polytechnic Institute (RPI). We thank Dr. Tianyi Chen at RPI for providing remarkable and inspiring insight into the analysis of temporal difference learning.
 This work was supported by AFOSR FA9550-20-1-0122, ARO W911NF-21-1-0255, NSF 1932196, and the Rensselaer-IBM AI Research Collaboration (http://airc.rpi.edu), part of the IBM AI Horizons Network (http://ibm.biz/AIHorizons).   We thank all anonymous reviewers for their constructive comments.


\newpage
\appendix
\vspace{0.3in}

{\centering
{\bf \huge
Supplementary Materials for:}\\}
\vspace{0.2in}
\begin{center}
\large On the Convergence and Sample Complexity Analysis of\\
Deep Q-Network with Epsilon-Greedy Exploration
\end{center}

\vspace{0.5in}


The structure of the appendix mainly follows the roadmap of the proof described in Section \ref{sec: proof_roadmap}.  

In Appendix \ref{sec:notation}, we define the characterizable population risk function in \eqref{eqn:prf} to approximate the objective function. Also, some notations to simplify the analysis are introduced in Appendix \ref{sec:notation}, and we recommend the readers to refer to Table \ref{table:appendix} for the major notations used in the proofs.

In Appendix \ref{sec:proof_of_Theorem}, we provide the proof for Lemma \ref{Thm:one-shot} and Theorem \ref{coro: convergence_outer} following the steps as (1) Characterization of the local convex region of population risk function (Lemma \ref{Lemma: second_order_derivative}), (2) Characterization of the distance between the population risk function and the objective function (Lemma \ref{Lemma: first_order_derivative}), (3) Characterization of the convergence of two consecutive iterations $\bfW^{(t,m+1)}$ and $\bfW^{(t,m)}$, and (4) Mathematical induction over the $t$ and $m$ to obtain the error bound between the convergent point $\bfW^{(T,0)}$ and the desired point $\bfW^\star$. 

In Appendix \ref{sec: Proof: second_order_derivative}, we provide the preliminary lemmas and the whole proof for Lemma \ref{Lemma: second_order_derivative}, which characterizes the local convex region of the non-convex population risk function.

In Appendix \ref{sec: first_order_derivative}, we provide the preliminary lemmas and the whole proof for Lemma \ref{Lemma: first_order_derivative}, which characterizes the difference of $g_t$ and the gradient descent of defined population risk function in \eqref{eqn:prf}. 

In Appendix \ref{sec: addition_proof_second_order}, we provide the proofs for the preliminary lemmas in proving Lemmas \ref{Lemma: second_order_derivative} and \ref{Lemma: first_order_derivative}.

Before moving to the details, we provide an overview of the techniques in the proofs.

\textbf{(P1.) The local convex region near $\bfW^\star$.} To characterize the local convex region, we 
first bound the Hessian matrix of the defined population risk function in \eqref{eqn:prf} at $\bfW^*$.
 Then, we 
 derive the changes in the Hessian matrix when the neuron weights move around the $\bfW^*$. Specifically, we prove that when neuron weights $\bfW$ are not far away $\bfW^\star$, then the Hessian matrix in this region is always positive-definite, indicating that a local convex region near $\bfW^*$. \cite{ZSJB17} considers the one-hidden-layer neural network, and the lower bound of the Hessian matrix only holds for Gaussian input. Instead, in this paper, we consider multi-layer cases and need to derive a lower bound for the Hessian matrix for all the layers. 
 Instead, the input of the intermediate layer cannot be proved to be Gaussian but belong to sub-Gaussian distribution. Therefore, we built the proof for the lower bound of the Hessian matrix when the input belongs to the sub-Gaussian distribution. Compared with Gaussian input, Sub-Gaussian does not have a closed form of the probability density function. Instead of directly calculating the lower bound, we convert the problem into proving a series of functions are linearly independent over a Hilbert space (see Lemma \ref{Lemma: Zhong} and the proof in Appendix \ref{sec: addition_proof_second_order}). Instead of directly calculating the distance of the population risk function in different points, we characterize a Gaussian variable such that the distance over the sub-Gaussian distribution can be upper bounded by the one over the Gaussian variable (see Lemma \ref{Lemma: distance_Second_order_distance} and the proof in Appendix \ref{sec: addition_proof_second_order}).

 \textbf{(P2.) The difference between the gradient $g_t$ and the population risk function.} 
 With the local convex region of the  population risk function, we can characterize the convergence of the population risk function.
 With Lemma \ref{Lemma: first_order_derivative}, we can prove that the distance between the population risk function and $g_t$ is small enough, the behaviors of the iterations via $g_t$ can be described by the ones in the population risk function with some additional error terms. Compared with the proof in \cite{ZSJB17}, We need to address the extension from supervised learning settings to Q learning settings and the extension from the one-hidden-layer neural networks to the  multi-layer neural networks.
 First, similar to challenges in {(P.1)}, we provide a new concentration bound to characterize the distance between the two functions for the intermediate layers (see $I_1$ in the proof of Lemma \ref{Lemma: first_order_derivative}).
 Second, the distance between the two functions has an additional error term due to the inconsistency of the label defined in \eqref{eqn:prf} and \eqref{eqn:label} (see $I_2$ in the proof of Lemma \ref{Lemma: first_order_derivative}).  
 Third, we need to develop a new concentration bound to characterize the error term caused by the distribution shift when training samples are collected by $\epsilon$-greedy policy (see $I_3$ in the proof of Lemma \ref{Lemma: first_order_derivative}). 

 \textbf{(P3.) The convergence analysis of Algorithm \ref{main_alg}.} 
 When the initialization is not far away from $\bfW^\star$, the initialization lies in the local convex region of $\bfW^\star$ for the population risk function. When we have enough samples $N$ and a large enough $\epsilon_t$, we can guarantee that the distance between the $g_t$ and the gradient of the population risk function is small enough such that the iterations following $g_t$ converges to a point nearby $\bfW^\star$ as well. However, if $\epsilon_t$ is too large, the convergent point nearby $\bfW^\star$ can be even worse than the initial point. To avoid this issue, we have an upper bound for selecting $\epsilon_t$, and the upper bound decreases as $\|\bfW^{(t,0)}-\bfW^\star\|$ decreases over $t$. Therefore, we build the convergence analysis of Algorithm \ref{main_alg}.

\section{Definitions and Notations}\label{sec:notation}
In this section, we implement the details of algorithms described in Algorithm \ref{main_alg}, and some important notations are defined to simplify the presentation of the proof. 

\subsection{Definition of the Empirical Risk Function and Its Corresponding Notations}
Recall that the goal of $Q$-learning is to find the $Q^\star$-function to minimize \eqref{eqn:MSBE}. Therefore, we have 
\begin{equation}\label{eqn: mini_objective}
    Q^\star(\bfs, a)  = r(\bfs,a) + \gamma \cdot \mathbb{E}_{\bfs'\mid \bfs,a}\max_{a'\in\mathcal{A}}Q^\star(\bfs',a') \textit{\quad for \quad} (\bfs,a )\sim \mu^\star. 
\end{equation}


Since $\bfW^\star$ is the global minimal to \eqref{eqn:MSBE}, we have 
\begin{equation}
 Q(\bfW^\star;\bfs, a) = r(\bfs,a)  + \gamma \cdot \mathbb{E}_{\bfs'|\bfs,a} \max_{a'\in\mathcal{A}} Q(\bfW^\star;\bfs',a').    
\end{equation}
Therefore, the population risk function is defined as 
\begin{equation}\label{eqn:prf}
  \begin{split}
      f(\bfW) 
      =&~\mathbb{E}_{(\bfs,a)\sim \mu^\star} \big[ Q(\bfW;\bfs,a) - r(\bfs,a) - \gamma \cdot \mathbb{E}_{\bfs'|\bfs,a}  \max_{a'\in\mathcal{A}} Q(\bfW^\star;\bfs',a') \big]^2\\
      = &~\mathbb{E}_{(\bfs,a)\sim \mathcal{\mu}^\star} \big[ Q(\bfW;\bfs,a) - Q(\Wp;\bfs,a) \big]^2,
  \end{split}
\end{equation}
where $\mu^*$ is the distribution of the sampled data following the optimal policy $\pi^\star$.

 The gradient of the \eqref{eqn:prf} is 
 \begin{equation}
     \begin{split}
         \nabla_\bfW f(\bfW)
         =& \mathbb{E}_{\bfx\sim \mu^\star} \big( Q(\bfW;\bfx) - r(\bfx) - \gamma  \cdot \mathbb{E}_{\bfs'\sim p^{a}_{\bfs,\bfs'}}  \max_{a'\in\mathcal{A}} Q^\star(\bfs',a') \big)\cdot \nabla_\bfW Q(\bfW;\bfx)\\
         =& \mathbb{E}_{\bfx\sim \mu^\star,\bfs'\sim p_{\bfs,\bfs'}^a} \big( Q(\bfW;\bfx) - r(\bfx) - \gamma \cdot  \max_{a'\in\mathcal{A}} Q(\bfW^\star;\bfs',a') \big)\cdot \nabla_\bfW Q(\bfW;\bfx).
     \end{split}
 \end{equation}

As $\bfW^\star$ is one of the ground truths to $f(\bfW)$, i.e., $f(\bfW^\star)$ achieves the minimum value as $f(\bfW^\star)=0 \le f(\bfW)$ for any other $\bfW$. Given $f$ is a smooth function, we have the gradient of $f$ with respect to any $\bfW_\ell$ at the ground truth $\bfW^\star$ equals to zero, namely,
\begin{equation}\label{eqn: derivative_ell}
\nabla_\ell f({\bfW^\star}) := \nabla_{\bfW_\ell} f({\bfW^\star}) =\bfzero, \textit{\quad\quad } \forall \ell \in [L].    
\end{equation}
In addition, without special descriptions, ${\boldsymbol{\alpha}}=[{{\boldsymbol{\alpha}}}_1^\top, {{\boldsymbol{\alpha}}}_2^\top, \cdots, {{\boldsymbol{\alpha}}}_K^\top ]^\top$  stands for any unit vector that in $\mathbb{R}^{K_{\ell} K_{\ell-1}}$ with ${{\boldsymbol{\alpha}}}_j\in \mathbb{R}^K_{\ell-1}$ ($K_0=d$).  Therefore, we have 
\begin{equation}\label{eqn: alpha_definition}
\begin{gathered}
    \| \nabla_{\ell} h\|_2 = \max_{\boldsymbol{\alpha}}\|\boldsymbol{\alpha}^\top \nabla_{\ell} h  \|_2
    = 
    \max_{\boldsymbol{\alpha}}
    \Big|
    \sum_{j=1}^{K} \boldsymbol{\alpha}_j^\top\frac{\partial h }{\partial \bfw_{\ell,j}}
    \Big|,
   \\
    \| \nabla_{\ell}^2 h\|_2 = \max_{\boldsymbol{\alpha}}\|\boldsymbol{\alpha}^\top \nabla_{
    \ell}^2~h~\boldsymbol{\alpha} \|_2
    = 
    \max_{\boldsymbol{\alpha}}
    \Big(
    \sum_{j=1}^{K} \boldsymbol{\alpha}_j^\top\frac{\partial h }{\partial \bfw_{\ell,j}}
    \Big)^2.
\end{gathered}
\end{equation}

\subsection{Notations in Algorithm \ref{main_alg}}

Recall that the gradient in the $t$-th loop is 
\begin{equation}
\begin{split}
    g_{t}(\bfW)  
    = &\frac{1}{|\mathcal{D}_t^{(m)}|}\sum_{n\in\mathcal{D}_t^{(m)}}(Q(\bfW;\bfx_n) - y_n^{(t)})\cdot \nabla_\bfW Q(\bfW;\bfx_n)\\
    = & \frac{1}{N}\sum_{n=1}^N\big(Q(\bfW;\bfx_n) - r(\bfx_n) - \gamma \cdot \max_{a'\in\mathcal{A}} Q(\bfW^{(t-1)};\bfs_n',a')\big)\cdot \nabla_\bfW Q(\bfW;\bfx_n).
\end{split}
\end{equation}
Then, we define $g_t^{(m)}(\bfW_\ell;\bfW)$ as the components of $g_{t}^{(m)}(\bfW) $ with respect to $\bfW_\ell$. Recall that in \eqref{eqn: Q} we have
\begin{equation}
    \bfW = [ \text{vec}(\bfW_1)^\top,\quad\text{vec}(\bfW_2)^\top,\quad \cdots,\quad \text{vec}(\bfW_L)^\top]^\top.
\end{equation}
Then, with the definition of $g_t^{(m)}(\bfW_\ell;\bfW)$, we have
\begin{equation}\label{eqn: defi_gl}
     g_{t}^{(m)}(\bfW)= [ g_t^{(m)}(\bfW_1;\bfW)^\top,\quad g_t^{(m)}(\bfW_2;\bfW)^\top,\quad \cdots,\quad g_t^{(m)}(\bfW_L;\bfW)^\top]^\top.
\end{equation}
To simplify the analysis, the update of $\bfW^{(t,m)}$ is analyzed in the form of
\begin{equation}\label{eqn:update_W}
    \bfW_\ell^{(t,m+1)} =\bfW_\ell^{(t,m)}  - \eta \cdot g_t^{(m)}(\bfW_\ell;\bfW^{(t,m)})
           + \beta (\bfW_\ell^{(t, m )} - \bfW_\ell^{(t, m-1 )}), \quad \forall \ell\in[L].
\end{equation}
One can see that \eqref{eqn:update_W} returns the same $\bfW^{(t,m+1)}$ as the gradient step at line 9 in Algorithm \ref{main_alg}. 

\renewcommand{\arraystretch}{1.5}
\begin{table}[ht]
    \caption{Notations for the proofs} 
    \centering
    \begin{tabular}{c|p{11.2cm}}
    \hline
    \hline
    $g_t(\bfW)$ & The gradient function at point $\bfW$ in the $t$-th outer loop, defined in \eqref{eqn: gradient}.\\
    \hline
     $g_t(\bfW_\ell;\bfW)$   & The gradient function of $g_t(\bfW)$ with respect to the components of $\bfW_\ell$.\\
    \hline
    $d$ & Dimension of the feature mappings of the state-action pair $(\bfs,a)\in \mathcal{S}\times\mathcal{A}$.\\
    \hline
    $K$ & Number of neurons in the hidden layer.
    \\
    \hline
    $L$ & Number of hidden layers.
    \\
    \hline
    $\bfW^\star$ & The desired Weights for approximating the optimal Q function.\\
    \hline
    $\W[t,m]$ & Model returned by Algorithm \ref{main_alg} at $t$-th outer loop and $m$-th inner loop.\\
    \hline
    $f$ & The population risk function defined in \eqref{eqn:prf}.\\
    \hline
    $\nabla_W f(\bfW^\star)$ & The full gradient of a function $f$ at point $\bfW^\star$.\\
    \hline
     $\nabla_\ell f(\bfW^\star)$   &  The gradient of a function $f$ with respect to the components of $\bfW_\ell$ at point $\bfW^\star$.\\
     \hline
     $\nabla^2_\ell f(\bfW^\star)$   & The Hessian matrix of a function $f$ with respect to the components of $\bfW_\ell$ at point $\bfW^\star$.\\
    \hline
     $n$  &  The dimension of $\bfW$.\\
     \hline
      $n_\ell$  & The dimension of vectorized $\bfW_\ell$.\\
      \hline
            $\bfh^{(\ell)}(\bfW)$  & The input to the $\ell$-th layer, defined in \eqref{eqn: defi_h}.\\
    \hline
        $K_\ell$& The dimension of $\bfh^{(\ell)}$.\\
    \hline
    $\cJ_\ell(\bfW)$   & A function in $\mathbb{R}^n\longrightarrow \mathbb{R}^K$, defined in \eqref{eqn: defi_jc}.\\
    \hline
        $\epsilon_t$ & The value of $\epsilon$ in the behavior policy at $t$-th outer loop.\\
    \hline
        $C_t$ & The distribution shift between the optimal policy and behavior policy at iteration $t$.\\
    \hline
        $N$ & The size of the experience replay buffer.\\
        \hline
        $R_{\max}$ & The upper bound of the reward.\\
    \hline
    \hline
    \end{tabular}
    \label{table:appendix}
\end{table}

\subsection{Notations for the Deep Neural Networks.} 
Let $n$ denote the dimension of $\bfW$ defined in \eqref{eqn: Q}. We denote $n_l$ as the dimension of the vectorized neuron weights in the $\ell$-th layer, namely, $n_\ell = \text{dim(vec($\bfW_\ell$))}$.

Then,
let $h^{(\ell)}
(\bfW)$ denote the input in the $\ell$-th layer (or the output in the $(\ell-1)$-th layer) with respect the neuron weights as $\bfW$, and $h^{(1)}=(\bfs,a)$,
where
\begin{equation}\label{eqn: defi_h}
    \bfh^{(\ell)}(\bfW) = \phi(\bfW_{\ell-1}^\top \bfh^{(\ell-1)}) = \cdots = \phi\Big(\bfW_\ell^\top \phi\big(\bfW_{\ell-1} \cdots \phi( \bfW_1^\top\bfx )\big) \Big).
\end{equation}
$\bfh^{(\ell)}(\bfW)$ may be shortened as $\bfh^{(\ell)}$ when the neuron weights are clear from 
the contexts. Then, we denote the dimension of $\bfh^{(\ell)}$ as $K_\ell$, where 
\begin{equation}
    K_\ell = \begin{cases}
        K, \textit{\quad if \quad} \ell >1\\
        d, \textit{\quad if \quad} \ell = 1.
    \end{cases}
\end{equation}
Then, $Q(\bfW;\bfs,a)$ can be written as 
\begin{equation}\label{eqn: Q_h}
    Q(\bfW;\bfs,a) = \frac{\bfone^\top}{K}\phi(\bfw_{L,k}^{\top}\bfh^{(L)}) 
    =\frac{\bfone^\top}{K}\phi\big(\bfW_{L}^{\top}\phi(\bfW_{L-1}^\top\bfh^{(L-1)})\big),
\end{equation}
where $\bfw_{\ell, k}$ denotes the $k$-th neuron weights in the $\ell$-th layer.
Then, we define a group of functions $\cJ_\ell(\bfW)\in\mathbb
{R}^{n} \longrightarrow\mathbb{R}^K$ such that  
\begin{equation}\label{eqn: defi_jc}
    \cJ_{\ell}(\bfW) =
    \begin{cases}
        \big[\bfone^\top \phi^\prime(\bfW_L^\top\bfh^{(L)})\bfW_L^\top \cdot \phi^\prime(\bfW_{L-1}^\top \bfh^{(L-1)})\bfW_{L-1}^\top\cdots\phi^\prime(\bfW_{\ell+1}^\top \bfh^{(\ell+1)})\bfW_{\ell+1}^\top\big]^\top \quad  \text{if} \quad \ell >1\\
        \bfone \quad  \text{if}\quad  \ell =1.
    \end{cases} 
\end{equation}
Then, the gradient of $Q$ can be represented as 
\begin{equation}\label{eqn: gradient_multi}
    \frac{\partial Q}{\partial \bfw_{\ell,k}}(\bfW) = \frac{1}{K} \cJ_{\ell,k}(\bfW)\phi^\prime\big(\bfw_{\ell,k}^\top \bfh^{(\ell)}(\bfW)\big)\bfh^{(\ell)}(\bfW),
\end{equation}
where $\cJ_{\ell,k}$ stands for the $k$-th component of $\cJ_\ell$.

\subsection{Notations for Order-wise Analysis}
Without loss of generality, we consider the case that $d\gg K$. If $K\gg d$, we can always switch the order of $K$ and $d$ in the proof. 
Let $\sigma_i(L)$ denote the $i$-th largest singular value of ${\bfW}^\star_L$.
In this paper, we consider the case that $\bfW^\star_L$ is will-conditioned and bounded, i.e., $\sigma_{1}(L)$ and $\sigma_{1}(L)/ \sigma_{K}(L)$ can be viewed as the constant and will be ignored in the analysis. 
In addition,  some constant numbers will be ignored in most steps. In particular, we use $h_1(z) \gtrsim(\text{or }\lesssim, \eqsim ) h_2(z)$ to denote there exists some positive constant $C$ such that $h_1(z)\ge(\text{or } \le, = ) C\cdot h_2(z)$ when $z\in\mathbb{R}$ is sufficiently large.

\section{Proof of Lemma \ref{Thm:one-shot} and Theorem \ref{coro: convergence_outer}}\label{sec:proof_of_Theorem}
The main idea in proving Theorem \ref{coro: convergence_outer} is to characterize the gradient descent term by  the \textit{Mean Value Theorem} (MVT) in Lemma \ref{Lemma: MVT} as shown in \eqref{eqn: Thm3_temp1} and \eqref{eqn: Thm3_temp2}. The MVT is not directly applied in $g_t$ because it is not smooth. However, the population risk functions defined in \eqref{eqn:prf}, which are the expectations over random variables, are smooth. 
Lemma \ref{Lemma: second_order_derivative} characterizes the bounds of the Hessian matrix defined in \eqref{eqn: defi_hessian_h}.
Lemma \ref{Lemma: first_order_derivative} characterizes the bounds of  gradient differences between the population risk function defined in \eqref{eqn:prf} and $g_t$ in \eqref{eqn: gradient} as shown in \eqref{eqn:eta1}.
Furthermore, according to Lemma \ref{Lemma: first_order_derivative},
we know that the distance $\|\nabla_\ell f (\bfW) - \nabla_\ell f(\bfW^*) \|_2$ is upper bounded in the order of $\|\bfW - \bfW^*\|_2$ as shown in \eqref{eqn:eta1}. Then, we can establish the connection between $\|\W[t,m+1] -\bfW^* \|_2$  and $\|\W[t,m]-\bfW^*\|_2$ as shown in \eqref{eqn: t_m}. Then, by mathematical induction over $m$, one can characterize the iteration of $\{\|\W[t,0]-\bfW^*\|_2\}_{t=1}^T$ as shown in \eqref{eqn: induction_t}, which completes the proof of Lemma \ref{Thm:one-shot}. Finally, selecting $\epsilon_t$ based on \eqref{eqn: t1_episolon} for all $t\in [T]$,  we derive the error bound of $\|\bfW^{(T,0)}-\bfW^\star\|_2$ by mathematical induction over $t$, which completes the proof of Theorem \ref{coro: convergence_outer}.  

\begin{lemma}\label{Lemma: second_order_derivative}
   Given any $\bfW\in \mathbb{R}^{n}$, let $\bfW$ satisfy
      \begin{equation}\label{eqn: p_initial}
        \| \bfW -\bfW^\star \|_2  \lesssim \frac{\rho\cdot c_I\cdot \sigma_K}{ K }
    \end{equation}
    for some constant $c_I\in(0,1)$.
    Then, for the $f$ defined in \eqref{eqn:prf}, we have 
    \begin{equation}
        \frac{(1-c_I)\rho}{K^2} 
        \preceq \nabla^2_{\ell} {f}(\bfW) 
        \preceq \frac{7}{K}.
    \end{equation}
\end{lemma}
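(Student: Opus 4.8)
The plan is to exploit the squared-loss structure of the population risk \eqref{eqn:prf}. Writing the residual $r(\bfW;\bfs,a) := Q(\bfW;\bfs,a) - Q(\bfW^\star;\bfs,a)$ and differentiating twice with respect to the layer-$\ell$ block gives
\begin{equation}
\nabla^2_\ell f(\bfW) = 2\,\mathbb{E}_{(\bfs,a)\sim\mu^\star}\big[\nabla_\ell Q(\bfW)\,\nabla_\ell Q(\bfW)^\top\big] + 2\,\mathbb{E}_{(\bfs,a)\sim\mu^\star}\big[r(\bfW;\bfs,a)\,\nabla^2_\ell Q(\bfW)\big].
\end{equation}
The second, pure-curvature term is handled by the observation that the ReLU activation satisfies $\phi''=0$ almost everywhere, so $Q$ is piecewise linear in each weight block and $\nabla^2_\ell Q(\bfW;\bfs,a)=0$ for almost every data point; since $\mu^\star$ is absolutely continuous the activation-boundary sets have measure zero and this term vanishes. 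Thus it suffices to sandwich the positive-semidefinite Gauss--Newton matrix $2\,\mathbb{E}[\nabla_\ell Q(\bfW)\,\nabla_\ell Q(\bfW)^\top]$, where $\nabla_\ell Q$ is read off from \eqref{eqn: gradient_multi}.

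First I would anchor both bounds at $\bfW=\bfW^\star$. For the lower bound, I restrict the test vector $\bfal$ in Definition \ref{defi: rho} to be supported on the layer-$\ell$ block; then $\|\bfal\|_2=1\ge 1$ is admissible and $\bfal^\top\big(2\,\mathbb{E}[\nabla_\ell Q(\bfW^\star)\nabla_\ell Q(\bfW^\star)^\top]\big)\bfal = 2\,\mathbb{E}[(\bfal^\top\nabla_\bfW Q(\bfW^\star))^2]$, which is bounded below by the quantity controlled in Lemma \ref{Lemma: Zhong}, yielding the stated order $\rho/K^2$ once the $\bfone^\top/K$ output averaging and the factors in $\cJ_\ell$ from \eqref{eqn: defi_jc} are tracked. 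For the upper bound I bound $\|\nabla_\ell Q(\bfW^\star)\|_2$ deterministically using $|\bfx(\bfs,a)|\le 1$, the boundedness and $1$-Lipschitzness of ReLU, and the well-conditioned assumption on $\bfW^\star_L$ (so $\sigma_1(L)$ and $\sigma_1(L)/\sigma_K$ are constants); the outer product then satisfies $\preceq c'/K$ with $c'<7$.

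Next I would transfer these anchors to an arbitrary $\bfW$ in the ball $\|\bfW-\bfW^\star\|_2\lesssim \rho\,c_I\,\sigma_K/K$ via Lemma \ref{Lemma: distance_Second_order_distance}, which supplies a continuity estimate of the form $\big\|\mathbb{E}[\nabla_\ell Q(\bfW)\nabla_\ell Q(\bfW)^\top]-\mathbb{E}[\nabla_\ell Q(\bfW^\star)\nabla_\ell Q(\bfW^\star)^\top]\big\|_2 \lesssim \tfrac{1}{K\sigma_K}\,\|\bfW-\bfW^\star\|_2$. By the choice of radius this deviation is at most $c_I\rho/K^2$, so subtracting it from the lower anchor gives $\nabla^2_\ell f(\bfW)\succeq (1-c_I)\rho/K^2$, and adding it to the upper anchor keeps $\nabla^2_\ell f(\bfW)\preceq 7/K$, completing the sandwich.

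The main obstacle is the lower anchor at $\bfW^\star$: unlike the one-hidden-layer Gaussian-input setting of \cite{ZSJB17}, the input $\bfh^{(\ell)}$ to an intermediate layer is only sub-Gaussian with no closed-form density, so strict positive-definiteness of $\mathbb{E}[\nabla_\ell Q(\bfW^\star)\nabla_\ell Q(\bfW^\star)^\top]$ cannot be extracted from a direct Gaussian moment computation. I would route this through Lemma \ref{Lemma: Zhong}, which reduces positivity to the linear independence of a family of ReLU-derivative functions over the associated Hilbert space, and through Lemma \ref{Lemma: distance_Second_order_distance}, which dominates the sub-Gaussian Hessian perturbation by a comparable Gaussian variable. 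A secondary subtlety, purely in bookkeeping, is keeping the $1/K$ versus $1/K^2$ scalings consistent across $\nabla_\ell Q$, the Gauss--Newton outer product, and Definition \ref{defi: rho}, so that the final bounds emerge with exactly the stated powers of $K$.
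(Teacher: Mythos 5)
Your proposal follows essentially the same route as the paper's own proof: anchor the Hessian at $\bfW^\star$ (where the residual vanishes, so $\nabla^2_\ell f(\bfW^\star)$ is exactly the Gauss--Newton matrix), lower-bound it there via Lemma~\ref{Lemma: Zhong} and upper-bound it by a moment estimate, then transfer to any $\bfW$ in the prescribed ball using the perturbation bound of Lemma~\ref{Lemma: distance_Second_order_distance} combined with a Weyl-type eigenvalue comparison (Lemma~\ref{Lemma: weyl}). The only deviations are cosmetic --- you make the Gauss--Newton reduction explicit (the paper uses it implicitly) and you bound the upper anchor by a deterministic estimate of $\|\nabla_\ell Q(\bfW^\star)\|_2$ where the paper runs a fourth-moment Cauchy--Schwarz computation --- so the two arguments are the same in substance.
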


\begin{lemma}\label{Lemma: first_order_derivative}
 Let $f$ be the function defined in \eqref{eqn:prf}. Let $g_{t}$ be the function defined in \eqref{eqn: gradient}. Then, we have
   \begin{equation}
        \begin{split}
        \|\nabla_\ell f(\bfW) - g_{t}(\bfW_\ell;\bfW) \|_2
       \lesssim&
       ~\frac{2-\epsilon_t}{K} \sqrt{\frac{K_\ell \cdot\log q}{N}} \cdot \| \bfW - \bfW^\star\|_2\\
      & + \frac{(1-\epsilon_t/2)\cdot\gamma}{K} \cdot \|\bfW^{(t,0)} -\bfW^\star\|_2\\
      & + C_d\cdot \big(C_t+(1-C_t)\varepsilon\big)\cdot  \frac{R_{\max}}{1-\gamma}.
       \end{split}
    \end{equation}
    with probability at least $1-q^{-K_\ell}$. 
\end{lemma}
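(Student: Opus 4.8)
The plan is to interpolate between $g_t(\bfW_\ell;\bfW)$ and $\nabla_\ell f(\bfW)$ through two intermediate population objects and to bound the three resulting gaps separately. Let
\begin{equation*}
\bar g_t(\bfW_\ell;\bfW):=\mathbb{E}_{(\bfs,a)\sim\mu_t,\,\bfs'}\big[\big(Q(\bfW;\bfx)-r-\gamma\textstyle\max_{a'}Q(\bfW^{(t,0)};\bfs',a')\big)\,\nabla_{\bfW_\ell}Q(\bfW;\bfx)\big]
\end{equation*}
be the expectation of $g_t$ under the behavior distribution $\mu_t$ (keeping the moving target), and let $\widehat g_t(\bfW_\ell;\bfW)$ be the same expectation but with the ground-truth target $\max_{a'}Q(\Wp;\bfs',a')$ used in \eqref{eqn:prf}. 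Then
\begin{equation*}
g_t(\bfW_\ell;\bfW)-\nabla_\ell f(\bfW)=\underbrace{\big(g_t-\bar g_t\big)}_{I_1}+\underbrace{\big(\bar g_t-\widehat g_t\big)}_{I_2}+\underbrace{\big(\widehat g_t-\nabla_\ell f\big)}_{I_3},
\end{equation*}
where $I_1$ is the statistical fluctuation of the empirical gradient around its mean under $\mu_t$, $I_2$ isolates the bias from using the moving target $Q(\bfW^{(t,0)})$ of \eqref{eqn:label} instead of $Q(\Wp)$, and $I_3$ is the distribution shift between the behavior distribution $\mu_t$ and the optimal distribution $\mu^\star$. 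I would project each onto an arbitrary unit direction $\bfal$ as in \eqref{eqn: alpha_definition} and then take the supremum.

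The two bias terms are controlled at the integrand level, so they carry no $\|\bfW-\Wp\|_2$ factor. For $I_2$ we have $\bar g_t-\widehat g_t=\gamma\,\mathbb{E}_{\mu_t}\big[\big(\max_{a'}Q(\Wp;\bfs',a')-\max_{a'}Q(\bfW^{(t,0)};\bfs',a')\big)\nabla_{\bfW_\ell}Q(\bfW;\bfx)\big]$; using that $\max$ is $1$-Lipschitz, that the network output is $\mathcal{O}(1/K)$-Lipschitz in its weights (from the $\bfone^\top/K$ averaging in \eqref{eqn: Q}), and the bound $\|\nabla_{\bfW_\ell}Q\|_2\lesssim 1$ coming from \eqref{eqn: gradient_multi}, gives the term $\tfrac{(1-\epsilon_t/2)\gamma}{K}\|\bfW^{(t,0)}-\Wp\|_2$, the $\epsilon_t$-dependent prefactor reflecting the mixture weights of $\mu_t$. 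For $I_3$ I would write $\widehat g_t-\nabla_\ell f=\mathbb{E}_{\mu_t}[\Phi]-\mathbb{E}_{\mu^\star}[\Phi]$ with $\Phi(\bfx)=\big(Q(\bfW;\bfx)-r-\gamma\max_{a'}Q(\Wp;\bfs',a')\big)\nabla_{\bfW_\ell}Q(\bfW;\bfx)$, bound $\|\Phi\|_\infty\lesssim R_{\max}/(1-\gamma)$ from $|Q|\le R_{\max}/(1-\gamma)$ and $\|\nabla_{\bfW_\ell}Q\|_2\lesssim 1$, and control $\|\mu_t-\mu^\star\|_{\mathrm{TV}}$ by a Markov-chain sensitivity bound in the spirit of \cite{Yu05}: the $\epsilon_t$-greedy policy on $Q(\bfW^{(t,0)})$ disagrees with $\pi^\star$ only on the $C_t$ fraction of suboptimal greedy actions and whenever exploration fires on the remaining $(1-C_t)$ fraction, so the disagreement mass is $\lesssim C_t+(1-C_t)\epsilon_t$, producing the final term $C_d\big(C_t+(1-C_t)\epsilon_t\big)R_{\max}/(1-\gamma)$.

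It remains to bound the fluctuation $I_1$, and this is where the $\|\bfW-\Wp\|_2$ factor and the main difficulty arise. Centering at $\Wp$, I split $I_1$ into the increment $[g_t(\bfW)-g_t(\Wp)]-[\bar g_t(\bfW)-\bar g_t(\Wp)]$ and a residual $g_t(\Wp)-\bar g_t(\Wp)$ at the truth; the residual is a bounded-difference fluctuation of size $\lesssim N^{-1/2}R_{\max}/(1-\gamma)$ that is absorbed into the noise floor alongside $I_3$, while the increment is what produces the stated term. Concentrating the per-sample second-order quantity (the Jacobian of the summand in $\bfW$) around its mean, controlled by the Hessian estimate of Lemma \ref{Lemma: second_order_derivative}, contributes the $\|\bfW-\Wp\|_2$ factor, and the sampling error yields the rate $\tfrac{2-\epsilon_t}{K}\sqrt{K_\ell\log q/N}$ with failure probability $q^{-K_\ell}$. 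The hard part is that, unlike the one-hidden-layer Gaussian analysis of \cite{ZSJB17}, for $\ell>1$ the layer input $\bfh^{(\ell)}$ is only sub-Gaussian and has no closed-form density, so I cannot concentrate directly; instead I would build a covering net of the unit sphere of directions $\bfal$, apply sub-Gaussian/sub-exponential tail bounds to the products $\cJ_{\ell,k}(\bfW)\,\phi'(\bfw_{\ell,k}^\top\bfh^{(\ell)})\,\bfh^{(\ell)}$ appearing in \eqref{eqn: gradient_multi}, and carefully track the cross-layer interaction through $\cJ_\ell$ (leaning on the companion concentration tool Lemma \ref{Lemma: distance_Second_order_distance}). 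Combining $I_1,I_2,I_3$ by the triangle inequality then yields the claimed bound.
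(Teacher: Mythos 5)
Your high-level plan coincides with the paper's proof: the paper also splits $g_t-\nabla_\ell f$ into a sampling-fluctuation term, a moving-target bias term, and a distribution-shift term, and your treatments of $I_2$ (Lipschitzness of the max and of $Q$ in the weights, with the mixture weights of $\mu_t$ giving the $(1-\epsilon_t/2)$ prefactor) and of $I_3$ (integrand bounded by $R_{\max}/(1-\gamma)$ times a policy-disagreement mass $\lesssim C_t+(1-C_t)\epsilon_t$) are exactly the paper's bounds on its $\bfI_2$ and $\bfI_3$. The genuine gap is in your $I_1$. The mechanism you propose for the increment --- a pathwise expansion of the per-sample summand and concentration of ``the Jacobian of the summand in $\bfW$,'' controlled by the Hessian estimate of Lemma \ref{Lemma: second_order_derivative} --- is not available for ReLU networks: the summand $(Q(\bfW;\bfx_n)-y_n^{(t)})\nabla_\bfW Q(\bfW;\bfx_n)$ is not differentiable in $\bfW$, since $\nabla_\bfW Q(\bfW;\bfx_n)$ jumps whenever $\bfW$ crosses an activation boundary of the fixed sample $\bfx_n$, and Lemma \ref{Lemma: second_order_derivative} controls only the Hessian of the \emph{population} risk $f$ (likewise, Lemma \ref{Lemma: distance_Second_order_distance} is a population-level smoothness estimate, not a concentration tool). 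The paper is explicit that mean-value-type arguments are applied only to $f$ and never to $g_t$ for precisely this reason. No second-order argument is needed: the paper concentrates the summand directly by exploiting its product structure $\big(Q(\bfW;\bfx_n)-Q(\Wp;\bfx_n)\big)\cdot\cJ_{\ell,k}\,\phi'(\bfw_{\ell,k}^\top\bfh^{(\ell)}_n)\,\bfal^\top\bfh^{(\ell)}_n$, whose first factor is pathwise at most $\|\bfW-\Wp\|_2\|\bfx_n\|_2$ by Lemma \ref{Lemma: h_bound}; hence the summand is sub-exponential with $\psi_1$-norm $\lesssim\|\bfW-\Wp\|_2$, and a Chernoff/Bernstein bound over a net of directions (Lemmas \ref{Lemma: covering_set} and \ref{Lemma: spectral_norm_on_net}), applied separately to the exploration and greedy sub-samples of the buffer (whence the factor $2-\epsilon_t$), gives $\frac{2-\epsilon_t}{K}\sqrt{K_\ell\log q/N}\,\|\bfW-\Wp\|_2$ with no differentiation of the empirical objective.

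Your centering at $\Wp$ also leaves an unresolved term. Since the regression part of the summand vanishes at $\Wp$, the residual $g_t(\Wp)-\bar{g}_t(\Wp)$ is exactly the fluctuation of the moving-target part, and your crude bound $N^{-1/2}R_{\max}/(1-\gamma)$ for it cannot be ``absorbed into the noise floor alongside $I_3$'': the lemma's third term $C_d\big(C_t+(1-C_t)\epsilon_t\big)R_{\max}/(1-\gamma)$ vanishes when $C_t=\epsilon_t=0$ (which can happen with $\bfW^{(t,0)}\neq\Wp$, since the greedy policy can agree with $\pi^\star$ even when the Q-values differ), whereas $N^{-1/2}R_{\max}/(1-\gamma)$ does not; so as written you prove a strictly weaker inequality than the one stated. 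The repair is the sharper per-sample bound you already invoke for $I_2$: each residual summand is at most $\gamma\max_a\big|Q(\Wp;\bfs_n',a)-Q(\bfW^{(t,0)};\bfs_n',a)\big|\cdot\|\nabla_\bfW Q\|_2\lesssim\frac{\gamma}{K}\|\bfW^{(t,0)}-\Wp\|_2$, so its fluctuation is $\lesssim\frac{\gamma}{K}\|\bfW^{(t,0)}-\Wp\|_2\sqrt{K_\ell\log q/N}$ and is absorbed by the lemma's second term --- which is exactly how the paper handles its (empirical, uncentered) $\bfI_2$ via the variable $Z^{(\ell,3)}$: a concentration bound plus a bound on its mean.
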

\begin{lemma}[Mean Value Theorem]\label{Lemma: MVT}
Let $\bfU \subset \mathbb{R}^{d_1}$ be open and $\bff : \bfU \longrightarrow \mathbb{R}^{d_2}$ be continuously differentiable, and $\bfx \in \bfU$, $\bfh \in \mathbb{R}^{d_1}$ vectors such that the line segment $\bfx + t\bfh$, $0 \le t \le 1$ remains in $\bfU$. Then we have:
\begin{equation*}
    \bff(\bfx+\bfh)-\bff(\bfx)=\left(\int_{0}^{1}\nabla\bff(\bfx+t\bfh)dt\right)\cdot \bfh,
\end{equation*}
where $\nabla\bff$ denotes the Jacobian matrix of $\bff$.
\end{lemma}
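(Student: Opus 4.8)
The plan is to reduce the multivariate, vector-valued statement to the one-dimensional fundamental theorem of calculus by restricting $\bff$ to the segment joining $\bfx$ and $\bfx+\bfh$. Concretely, I would introduce the auxiliary curve $\bfg:[0,1]\longrightarrow\R^{d_2}$ defined by $\bfg(t):=\bff(\bfx+t\bfh)$. By hypothesis the segment $\{\bfx+t\bfh:0\le t\le 1\}$ stays inside the open set $\bfU$ on which $\bff$ is continuously differentiable, so $\bfg$ is well defined and continuously differentiable on $[0,1]$. This is the single maneuver that turns the problem into a scalar calculus fact.

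First I would compute $\bfg'(t)$ by the chain rule: since $t\mapsto\bfx+t\bfh$ is affine with constant velocity $\bfh$, one gets $\bfg'(t)=\nabla\bff(\bfx+t\bfh)\cdot\bfh$, where $\nabla\bff$ is the $d_2\times d_1$ Jacobian and the product is the matrix acting on the vector $\bfh$. Continuity of $\nabla\bff$ (from the $C^1$ assumption) composed with the continuous path $t\mapsto\bfx+t\bfh$ guarantees that $\bfg'$ is continuous on $[0,1]$, hence Riemann integrable. Next I would apply the fundamental theorem of calculus componentwise: for each coordinate $i\in\{1,\dots,d_2\}$ the scalar function $g_i$ is $C^1$, so $g_i(1)-g_i(0)=\int_0^1 g_i'(t)\,dt$; stacking these $d_2$ scalar identities into a vector yields $\bfg(1)-\bfg(0)=\int_0^1\bfg'(t)\,dt$, that is, $\bff(\bfx+\bfh)-\bff(\bfx)=\int_0^1\nabla\bff(\bfx+t\bfh)\cdot\bfh\,dt$. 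Finally, because $\bfh$ is a fixed vector independent of $t$, linearity of the entrywise integral lets me pull the factor $\bfh$ outside, producing $\big(\int_0^1\nabla\bff(\bfx+t\bfh)\,dt\big)\cdot\bfh$, which is exactly the asserted identity.

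There is no genuine obstacle here, since the result is classical; the only care required is bookkeeping. One must interpret the vector- and matrix-valued integrals entrywise so that the scalar fundamental theorem of calculus applies in each slot, and one must justify commuting the constant matrix factor $\bfh$ with the entrywise integral, which is immediate from linearity. The $C^1$ hypothesis and the assumption that the whole segment lies in $\bfU$ are precisely what is needed to make $\bfg$ differentiable with continuous derivative on all of $[0,1]$, so no boundary or measurability subtleties arise.
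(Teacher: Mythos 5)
Your proof is correct and is the standard argument: restrict $\bff$ to the segment via $\bfg(t)=\bff(\bfx+t\bfh)$, apply the chain rule and the fundamental theorem of calculus componentwise, then pull the constant vector $\bfh$ out of the entrywise integral by linearity. The paper states Lemma \ref{Lemma: MVT} as a classical fact and gives no proof of its own, so there is nothing further to compare; your derivation is exactly the canonical one this lemma rests on.
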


\begin{proof}[Proof of Theorem \ref{coro: convergence_outer}]
    Let $\bfW_\ell$ denote the neuron weights in the $\ell$-th layer.
    From Algorithm \ref{main_alg} and \eqref{eqn:update_W}, in the $s$-th iteration and $t$-th episode, we have 
    \begin{equation}\label{eqn: Thm3_temp1}
    \begin{split}
       \bfW_\ell^{(t,m+1)} 
       =~&  \bfW_\ell^{(t,m)} -\eta g_t^{(m)}(\bfW_\ell;\bfW^{(t,m)})+\beta (\bfW_\ell^{(t,m)} - \bfW_\ell^{(t,m-1)})\\
       =~& \bfW_\ell^{(t,m)} -\eta \nabla_\ell{f}(\bfW^{(t,m)})+\beta (\bfW^{(t,m)}_\ell - \bfW^{(t,m-1)}_\ell)\\
       &+ \eta \cdot \big( \nabla_\ell f(\bfW^{(t,m)}) - g_t^{(m)}(\bfW_\ell;\bfW^{(t,m)}) \big).
    \end{split}
    \end{equation}
    From \eqref{eqn:prf}, we can see that $\bfW^\star$ is the global optimal to $f$ because $f(\bfW^\star)$ achieves the minimum value as $0$. Therefore, we have $\nabla_\ell{f}_t(\bfW^\star) = \bfzero$.
    Since $\nabla_\ell f$ is a smooth function $\bfW^\star$, from the \textit{Mean Value Theorem} in Lemma \ref{Lemma: MVT}, we have 
    \begin{equation}\label{eqn: Thm3_temp2}
        \begin{split}
            \nabla_\ell{f}(\bfW^{(t,m)})
            =& \nabla_\ell{f}(\bfW^{(t,m)}) - \nabla_\ell{f}(\bfW^\star)\\
            =&\int_{0}^1 \nabla_\ell^2 f\Big(\bfW^{(t,m)} + u \cdot(\bfW^{(t,m)}-\bfW^\star) \Big)du \cdot(\bfW^{(t,m)}_{\ell}-\bfW^\star_{\ell}).
        \end{split}
    \end{equation} 
    For notational convenience, we use $\bfH$ to denote the integration as
    \begin{equation}\label{eqn: defi_hessian_h}
        \bfH : = \int_{0}^1 \nabla_\ell^2 f\Big(\bfW^{(t,m)} + u \cdot(\bfW^{(t,m)}-\bfW^\star) \Big)du.    
    \end{equation}
    
    Then, we have 
    \begin{equation}\label{eqn: matrix_convergence}
        \begin{split}
         \begin{bmatrix}
        \bfW^{(t,m+1)}-\bfW^\star\\
        \bfW^{(t,m)}-\bfW^\star
        \end{bmatrix}
        =
        &\begin{bmatrix}
                \bfI -\eta{\bfH}& \beta \bfI\\
                \bfI & \boldsymbol{0}
        \end{bmatrix}
         \begin{bmatrix}
        \bfW^{(t,m)}-\bfW^\star\\
        \bfW^{(t,m-1)}-\bfW^\star
        \end{bmatrix}\\
        &    +\eta \begin{bmatrix}
            \nabla_\ell f(\bfW^{(t,m)}) - g_t^{(m)}(\bfW_\ell;\bfW^{(t,m)})\\
            \boldsymbol{0}
        \end{bmatrix}.
        \end{split}
        \end{equation}
    Let $\bfH=\bfS\mathbf{\Lambda}\bfS^{T}$ be the eigen-decomposition of $\bfH$. Then, we define
\begin{equation}\label{eqn:Heavy_ball_eigen}
\begin{split}
{\bfA}(\beta)
: =&
\begin{bmatrix}
\bfS^\top&\bf0\\
\bf0&\bfS^\top
\end{bmatrix}
\bfA(\beta)
\begin{bmatrix}
\bfS&\bf0\\
\bf0&\bfS
\end{bmatrix}
=\begin{bmatrix}
\bfI-\eta\bf\Lambda+\beta\bfI &\beta\bfI\\
\bfI& 0
\end{bmatrix}.
\end{split}
\end{equation}
Since 
$\begin{bmatrix}
\bfS&\bf0\\
\bf0&\bfS
\end{bmatrix}\begin{bmatrix}
\bfS^\top&\bf0\\
\bf0&\bfS^\top
\end{bmatrix}
=\begin{bmatrix}
\bfI&\bf0\\
\bf0&\bfI
\end{bmatrix}$, we know $\bfA(\beta)$ and $\begin{bmatrix}
\bfI-\eta\bf\Lambda+\beta\bfI &\beta\bfI\\
\bfI& 0
\end{bmatrix}$
share the same eigenvalues. 
Let $\lambda^{(\bf\Lambda)}_i$ be the $i$-th eigenvalue of $\bfH_t^{(\ell)}$, then the corresponding $i$-th eigenvalue of \eqref{eqn:Heavy_ball_eigen}, denoted by $\lambda^{(\bfA)}_i$, satisfies 
\begin{equation}\label{eqn:Heavy_ball_quaratic}
(\lambda^{(\bfA)}_i(\beta))^2-(1-\eta \lambda^{(\bf\Lambda)}_i+\beta)\lambda^{(\bfA)}_i(\beta)+\beta=0.
\end{equation}
By simple calculation, we have 
\begin{equation}\label{eqn:heavy_ball_beta}
\begin{split}
|\lambda^{(\bfA)}_i(\beta)|
=\begin{cases}
\sqrt{\beta}, \qquad \text{if}\quad  \beta\ge \big(1-\sqrt{\eta\lambda^{(\bf\Lambda)}_i}\big)^2,\\
\frac{1}{2} \left| { (1-\eta \lambda^{(\bf\Lambda)}_i+\beta)+\sqrt{(1-\eta \lambda^{(\bf\Lambda)}_i+\beta)^2-4\beta}}\right| , \text{otherwise}.
\end{cases}
\end{split}
\end{equation}
Specifically, we have
\begin{equation}\label{eqn: Heavy_ball_result}
\lambda^{(\bfA)}_i(0)>\lambda^{(\bfA)}_i(\beta),\quad \text{for}\quad  \forall \beta\in\big(0, (1-{\eta \lambda^{(\bf\Lambda)}_i})^2\big),
\end{equation}
and  $\lambda^{(\bfA)}_i$ achieves the minimum $\lambda^{(\bfA)\star}_{i}=\Big|1-\sqrt{\eta\lambda^{(\bf\Lambda)}_i}\Big|$ when $\beta^\star = \Big(1-\sqrt{\eta\lambda^{(\bf\Lambda)}_i}\Big)^2$.
From Lemma \ref{Lemma: second_order_derivative}, for any $\bfa\in \mathbb{R}^d$ with $\|\bfa\|_2=1$, we have 
    \begin{equation}
        \begin{gathered}
            \qquad \bfa^\top\nabla_\ell{f}(\bfW^{(t,m)})\bfa = \int_{0}^1 \bfa^\top\nabla_\ell^2 f\Big(\bfW^{(t,m)} + u \cdot(\bfW^{(t,m)}-\bfW^\star)\Big)\bfa \cdot du\\
             \le \int_{0}^1 \lambda_{\max} \|\bfa\|_2^2du =\lambda_{\max},
            \\
            \qquad \bfa^\top\nabla_\ell{f}(\bfW^{(t,m)})\bfa = \int_{0}^1 \bfa^\top\nabla_\ell^2 f\Big(\bfW^{(t,m)} + u \cdot(\bfW^{(t,m)}-\bfW^\star)\Big)\bfa \cdot du\\
            \ge \int_{0}^1 \lambda_{\min} \|\bfa\|_2^2du =\lambda_{\min},
        \end{gathered}
    \end{equation}
    where $\lambda_{\max} \eqsim \frac{1}{K}$, and $\lambda_{\min} \eqsim  \frac{\rho}{K^2}$.
    Therefore, we have
    \begin{equation}
        \begin{split}
        \lambda^{(\bf\Lambda)}_{\min} 
        \eqsim \frac{(1-c_I)\rho}{K^2}, 
        \quad 
        \text{and}
        \quad
        \lambda^{(\bf\Lambda)}_{\max} 
        \eqsim \frac{1}{K}.
        \end{split}
    \end{equation}
    Thus, when $\eta \le\frac{1}{2{\lambda^{(\bf\Lambda)}_{\max}} } \lesssim K$, $\|\bfA(\beta^\star)\|_2$ can be bounded by
    \begin{equation}\label{eqn: beta}
        \begin{split}
        \|\bfA(\beta^\star)\|_2 
        = & 1-\sqrt{\eta\cdot \lambda^{(\bf\Lambda)}_{\min}} 
        \le 1- \sqrt{\frac{(1-c_I)\eta\rho}{{K^2}}}.
        \end{split}
    \end{equation}
    Therefore, we have 
    \begin{equation}\label{eqn:t1_1}
        \begin{split}
            \|\bfW_{\ell}^{(t,m+1)} -\bfW^\star_\ell \|_2
            \le& \Big(1-\sqrt{\frac{(1-c_I)\eta \rho}{K^2}}\Big)\cdot \|\bfW_{\ell}^{(t,m)} -\bfW_\ell^\star \|_2 \\
            &+ \eta \cdot \|\nabla_{\ell} f(\bfW^{(t,m)}) - g_t^{(m)}(\bfW^{(t,m)}) \|_2\\
            \lesssim & \Big(1-\big(1-\frac{c_I}{2}\big)\sqrt{\frac{\eta \rho}{K^2}}\Big)\cdot \|\bfW_{\ell}^{(t,m)} -\bfW_\ell^\star \|_2 \\
            &+ \eta \cdot \|\nabla_{\ell} f(\bfW^{(t,m)}) - g_t^{(m)}(\bfW^{(t,m)}) \|_2.\\
        \end{split}
    \end{equation}
    Take the sum of \eqref{eqn:t1_1} from $\ell =1$ to $\ell=L$, we have
    \begin{equation}\label{eqn: t_m}
        \begin{split}
            \|\bfW^{(t,m+1)} -\bfW^\star \|_2
            \le &\Big(1-\big(1-\frac{c_I}{2}\big)\sqrt{\frac{\eta \rho}{K^2}}\Big)\cdot \|\bfW^{(t,m)} -\bfW^\star \|_2 \\
            &+ \eta \cdot \sum_{\ell}^L\|\nabla_{\ell} f(\bfW^{(t,m)}) - g_t^{(m)}(\bfW^{(t,m)}) \|_2.
        \end{split}
    \end{equation}
    
    From Lemma \ref{Lemma: first_order_derivative}, we have 
    \begin{equation}\label{eqn:eta1}
        \begin{split}
        \Big\|\nabla_\ell f(\bfW^{(t,m)}) - g_{t}^{(m)}(\bfW_\ell;\bfW^{(t,m)}) \Big\|_2
       \lesssim&
       \frac{2-\epsilon_t}{K} \sqrt{\frac{K_\ell\log q}{N_t}} \cdot \| \bfW^{(t,m)} - \bfW^\star\|_2\\
       & + \frac{(1-\epsilon_t/2)\gamma}{K}\cdot \|\bfW^{(t,0)}-\bfW^\star \|_2\\
    &+ C_d\cdot \big(C_t+(1-C_t)\varepsilon\big)\cdot  \frac{R_{\max}}{1-\gamma}.
        \end{split}
    \end{equation}

    For some small constant $c_{N}\ge 0$, let 
    \begin{equation}
        \eta\cdot \frac{1}{K}\sqrt{\frac{K_\ell\log q}{N_t}} \le \frac{c_N}{L} \sqrt{\frac{\eta \rho}{ K^2}},
    \end{equation}    
    which requires
    \begin{equation}
    \begin{split}
        N_t \gtrsim&~~c_N^{-2}\cdot \rho^{-1}\cdot \eta^{-1} \cdot L^2\cdot \max_\ell K_\ell\cdot \log q \\
        =&~~c_N^{-2}\cdot \rho^{-1} \cdot L\cdot d\cdot \log q.
    \end{split}
    \end{equation}
    Then, the sample complexity 
    \begin{equation}
        N = \sum_{t=1}^T N_t \gtrsim  c_N^{-2}\cdot \rho^{-1} \cdot L\cdot d\cdot \log q \cdot T.
    \end{equation}
    
Therefore, we have 
\begin{equation}
\begin{split}
    \|\bfW^{(t,m+1)}-\bfW^\star\|_2 
    \le & \bigg(~1- \big( 1- (2-\epsilon_t)c_N -\frac{c_I}{2} \big)\sqrt{\frac{\rho}{ T K^2}}~\bigg) \cdot \| \bfW^{(t,m)}-\bfW^\star \|_2\\
    & + \sqrt{\eta} \cdot \frac{(1-\epsilon_t/2)\gamma}{K}\cdot \|\bfW^{(t,0)}-\bfW^\star \|_2\\
    &+ \eta \cdot C_d\cdot \big(C_t+(1-C_t)\varepsilon\big)\cdot  \frac{R_{\max}}{1-\gamma}.
\end{split}
\end{equation}
By mathematical induction, when $M=\log \Revise{\gamma^{-1}}$ and $\eta = 1/T = 1/\Theta(N)$, we have 
\begin{equation}\label{eqn: induction_t}
\begin{split}
    &\|\bfW^{(t,M)} -\bfW^\star \|_2\\
    \lesssim& \sqrt{\frac{K^2}{N}}\cdot C_d \cdot \big(C_t +(1-C_t)\varepsilon_t\big)\cdot \frac{R_{\max}}{1-\gamma} + (1-\epsilon_t/2)\gamma \cdot \|\bfW^{(t,0)}-\bfW^\star \|_2\\
    \le&  \frac{c_N\cdot C_d\cdot \big({C_t +(1-C_t)\cdot \epsilon_t}\big)}{K}\cdot \frac{R_{\max}}{1-\gamma}  +(1-\epsilon_t/2)\gamma \cdot \|\bfW^{(t,0)}-\bfW^\star \|_2\\
    \le&  \frac{c_N\cdot C_d\cdot \big({C_{\max} +(1-C_{\max})\cdot \epsilon_t}\big)}{K}\cdot \frac{R_{\max}}{1-\gamma}  +(1-\epsilon_t/2)\gamma \cdot \|\bfW^{(t,0)}-\bfW^\star \|_2.
\end{split}
\end{equation}

From Algorithm \ref{main_alg}, we know that $\bfW^{(t+1,0)} =  \bfW^{(t,M)}$. To guarantee that iteration converge to the ground truth $\bfW^\star$, namely, $\|\bfW^{(t+1,0)} -\bfW^\star \|_2 < \|\bfW^{(t,0)}-\bfW^\star\|$, we need
    \begin{equation}
        \epsilon_t \le \frac{(1-\gamma)^2\cdot K \cdot \|\bfW^{(t,0)}-\bfW^\star\|_2}{(1-C_t)\cdot c_N\cdot  C_d\cdot R_{\max}} - \frac{C_t}{1-C_t }.
    \end{equation}

To guarantee that $\epsilon_T\ge0$, then we have 
\begin{equation}
    \|\bfW^{(T,0)}-\bfW^\star\|_F \gtrsim \frac{C_T\cdot c_N \cdot C_d\cdot R_{\max}}{(1-\gamma)^2\cdot K}.
\end{equation}

Specifically, let 
\begin{equation}\label{eqn: t1_episolon}
    \begin{gathered}
 \epsilon_t = 
 \frac{c_\epsilon\cdot K \cdot \|\bfW^{(t,0)}-\bfW^\star\|_2}{(1-C_t)\cdot c_N\cdot  C_d\cdot R_{\max}} - \frac{C_t}{1-C_t },
    \end{gathered}
\end{equation}
we have 
\begin{equation}\label{eqn: final_T}
    \begin{gathered}
     \|\bfW^{(t+1,0)} -\bfW^\star \|_2
    \lesssim  ~~\gamma+{c_\epsilon}(1-\gamma) \cdot \|\bfW^{(t,0)}-\bfW^\star \|_2,\\
    \textit{and}\quad  \|\bfW^{(T,0)} -\bfW^\star \|_2
    \lesssim  ~~\big[\gamma+{c_\epsilon}(1-\gamma)\big]^T \cdot \|\bfW^{(0,0)}-\bfW^\star \|_2,
    \end{gathered}
\end{equation}
which completes the proof.

\end{proof}

\section{Proof of Lemma \ref{Lemma: second_order_derivative}} \label{sec: Proof: second_order_derivative}
Lemma \ref{Lemma: second_order_derivative} provides the lower and upper bounds for the eigenvalues of the Hessian matrix of population risk function in \eqref{eqn:prf}.
According to Weyl's inequality in Lemma \ref{Lemma: weyl}, the eigenvalues of $\nabla^2_\ell  f(\cdot)$ at any fixed point $\bfW$ can be bounded in the form of \eqref{eqn: thm1_main}.
Therefore, we first provide the lower and upper bounds for $\nabla^2_\ell  f$ at the desired ground truth $\bfW^\star$. Then, the bounds for $\nabla^2_\ell  f$ at any other point $\bfW$ is bounded through \eqref{eqn:prf} by utilizing the conclusion in Lemma \ref{Lemma: distance_Second_order_distance}.
Lemma \ref{Lemma: distance_Second_order_distance} illustrates the distance between the Hessian matrix of $f$ at $\bfW$ and $\bfW^*$.
Lemma \ref{Lemma: Zhong} provides the lower bound of  $\mathbb{E}_{\bfx}\big(\sum_{j=1}^K \bfal_j^\top\frac{\partial Q}{\partial \bfw_{\ell,k}}(\bfW^\star) \big)^2$ when $\bfx$ belongs to sub-Gaussian distribution, which is used in proving the lower bound of the Hessian matrix in \eqref{eqn: lower}.


\begin{lemma}[Weyl's inequality, \cite{B97}] \label{Lemma: weyl}
Let $\bfB =\bfA + \bfE$ be a matrix with dimension $m\times m$. Let $\lambda_i(\bfB)$ and $\lambda_i(\bfA)$ be the $i$-th largest eigenvalues of $\bfB$ and $\bfA$, respectively.  Then, we have 
\begin{equation}
    |\lambda_i(\bfB) - \lambda_i(\bfA)| \le \|\bfE \|_2, \quad \forall \quad i\in [m].
\end{equation}
\end{lemma}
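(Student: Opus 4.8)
The plan is to prove the inequality through the variational (Courant--Fischer) characterization of eigenvalues, which is the natural tool whenever one must compare spectra of two symmetric matrices differing by a perturbation. First I would observe that the statement is meaningful, and the cited result applies, precisely when $\bfA$, $\bfB$, and hence $\bfE = \bfB - \bfA$ are real symmetric (Hermitian): only then are the eigenvalues real and orderable as $\lambda_1 \ge \cdots \ge \lambda_m$. In our application $\bfA$ and $\bfB$ are Hessian matrices of $f$, so symmetry holds automatically. The key ingredient is the min--max theorem: for any symmetric $m\times m$ matrix $\bfM$ with eigenvalues $\lambda_1(\bfM)\ge\cdots\ge\lambda_m(\bfM)$ and each $i\in[m]$,
$$\lambda_i(\bfM) = \max_{\dim V = i}\ \min_{\bfx\in V,\ \|\bfx\|_2=1}\ \bfx^\top \bfM \bfx,$$
where the maximum ranges over all $i$-dimensional subspaces $V\subseteq\R^m$.

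The second step is the scalar perturbation bound: for every unit vector $\bfx$ one has $|\bfx^\top\bfE\bfx|\le\|\bfE\|_2$, because for a symmetric matrix the Rayleigh quotient $\bfx^\top\bfE\bfx$ lies between the smallest and largest eigenvalues of $\bfE$, both of which are bounded in absolute value by the spectral norm $\|\bfE\|_2$. I would then substitute $\bfB=\bfA+\bfE$ into the variational formula and split the Rayleigh quotient: for every subspace $V$ and every unit $\bfx\in V$, $\bfx^\top\bfB\bfx = \bfx^\top\bfA\bfx + \bfx^\top\bfE\bfx \le \bfx^\top\bfA\bfx + \|\bfE\|_2$. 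Taking the minimum over unit $\bfx\in V$ and then the maximum over $i$-dimensional $V$ preserves this inequality and yields $\lambda_i(\bfB)\le\lambda_i(\bfA)+\|\bfE\|_2$.

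The final step is a symmetry argument. Writing $\bfA = \bfB + (-\bfE)$ and noting $\|-\bfE\|_2 = \|\bfE\|_2$, the identical computation with the roles of $\bfA$ and $\bfB$ exchanged gives $\lambda_i(\bfA)\le\lambda_i(\bfB)+\|\bfE\|_2$. Combining the two one-sided bounds delivers $|\lambda_i(\bfB)-\lambda_i(\bfA)|\le\|\bfE\|_2$ for all $i\in[m]$, as claimed. The argument is self-contained once the min--max theorem is granted, so there is no serious obstacle; the only point requiring genuine care is tracking the direction in which inequalities are preserved when the additive term $\|\bfE\|_2$ is pushed through the nested $\max$--$\min$, since the inner minimization and outer maximization must both leave the constant intact. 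This is exactly why the result is standard enough to cite to \cite{B97} rather than reprove at length.
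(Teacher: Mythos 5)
Your proof is correct; the paper itself gives no argument for this lemma, citing it directly to \cite{B97}, and your Courant--Fischer min--max derivation is precisely the standard proof found in that reference. You also rightly flag the implicit symmetry (Hermitian) hypothesis omitted from the paper's statement, which is harmless here since the lemma is only applied to Hessian matrices.
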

\begin{lemma}\label{Lemma: distance_Second_order_distance}
   Let $f(\bfW)$ be the population risk function defined in \eqref{eqn:prf}. If $\bfW$ is close to $\bfW^\star$ such that 
   \begin{equation}\label{eqn: lemma2_initial}
        \|\bfW-\bfW^\star\|_2 \lesssim \frac{\rho}{K}    
   \end{equation}
   we have 
   \begin{equation}
       \| \nabla^2_\ell  f(\bfW)- \nabla^2_\ell  f(\bfW^\star) \|_2 \lesssim \frac{1}{K}\cdot\|\bfW-\bfW^\star\|_2.
   \end{equation}

\end{lemma}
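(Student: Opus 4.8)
The plan is to work directly with the exact population Hessian of \eqref{eqn:prf}. Differentiating twice in the block $\bfW_\ell$ and projecting onto an arbitrary unit vector $\bfal$ (in the sense of \eqref{eqn: alpha_definition}) gives
\[
\bfal^\top \nabla^2_\ell f(\bfW)\,\bfal = 2\,\mathbb{E}_{\bfx}\big[(\bfal^\top \nabla_\ell Q(\bfW;\bfx))^2\big] + 2\,\mathbb{E}_{\bfx}\big[(Q(\bfW;\bfx)-Q(\bfW^\star;\bfx))\,\bfal^\top\nabla^2_\ell Q(\bfW;\bfx)\,\bfal\big].
\]
At $\bfW=\bfW^\star$ the residual $Q(\bfW^\star;\bfx)-Q(\bfW^\star;\bfx)$ vanishes identically, so the second summand disappears and $\bfal^\top\nabla^2_\ell f(\bfW^\star)\bfal=2\mathbb{E}[(\bfal^\top\nabla_\ell Q(\bfW^\star))^2]$ is a pure Gram term. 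Subtracting, I split the quantity of interest into $P_1+P_2$, where $P_1$ is the change in the Gram term and $P_2$ is the residual term evaluated at $\bfW$ alone; then I take the supremum over $\bfal$ (and sum over the $L$ blocks) at the end.

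For $P_1$ I apply the difference-of-squares identity $u^2-v^2=(u-v)(u+v)$ with $u=\bfal^\top\nabla_\ell Q(\bfW)$ and $v=\bfal^\top\nabla_\ell Q(\bfW^\star)$, followed by Cauchy–Schwarz, so that $|P_1|$ is bounded by $\sqrt{\mathbb{E}[(\bfal^\top(\nabla_\ell Q(\bfW)-\nabla_\ell Q(\bfW^\star)))^2]}$ times $\sqrt{\mathbb{E}[(\bfal^\top(\nabla_\ell Q(\bfW)+\nabla_\ell Q(\bfW^\star)))^2]}$. The second factor is $O(1/\sqrt{K})$, since the upper-bound half of Lemma \ref{Lemma: second_order_derivative} already certifies $\mathbb{E}[(\bfal^\top\nabla_\ell Q)^2]\lesssim 1/K$. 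The first factor needs a mean-square Lipschitz bound $\mathbb{E}[(\bfal^\top(\nabla_\ell Q(\bfW)-\nabla_\ell Q(\bfW^\star)))^2]\lesssim\tfrac{1}{K}\|\bfW-\bfW^\star\|_2^2$, which I obtain from the chain-rule factorization in \eqref{eqn: gradient_multi}: each gradient component is a product of the backward factor $\cJ_{\ell,k}$ of \eqref{eqn: defi_jc}, the activation indicator $\phi'(\bfw_{\ell,k}^\top\bfh^{(\ell)})$, and the forward feature $\bfh^{(\ell)}$ of \eqref{eqn: defi_h}. Each factor is Lipschitz in $\bfW$ under the assumed well-conditioning of $\bfW^\star_L$, so telescoping over the factors and over the $L$ layers gives the estimate; the one delicate point is the jump of $\phi'$ across an activation boundary, which I handle by noting that the set of $\bfx$ on which a neuron flips sign under a $\|\bfW-\bfW^\star\|_2$ perturbation has probability $O(\|\bfW-\bfW^\star\|_2)$ by the anti-concentration of the sub-Gaussian pre-activations. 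Combining the two factors yields $|P_1|\lesssim\tfrac1K\|\bfW-\bfW^\star\|_2$.

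The term $P_2$ is the genuinely ReLU-specific piece and is where I expect the main obstacle. Because $Q$ is piecewise linear in each weight block, $\nabla^2_\ell Q$ is zero on the interior of every activation region and concentrates as $\phi''$-type surface masses on the boundaries where some pre-activation $\bfw_{\ell,k}^\top\bfh^{(\ell)}$ vanishes; under $\mathbb{E}_{\bfx}$ these become finite surface integrals. I bound $P_2$ by Cauchy–Schwarz as $\sqrt{\mathbb{E}[(Q(\bfW)-Q(\bfW^\star))^2]}$ times $\sqrt{\mathbb{E}[(\bfal^\top\nabla^2_\ell Q(\bfW)\bfal)^2]}$. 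The first factor is the Lipschitz-in-$\bfW$ bound $\lesssim\tfrac{1}{\sqrt K}\|\bfW-\bfW^\star\|_2$ for $Q$, again obtained by propagating through $\cJ_\ell$ and $\bfh^{(\ell)}$. For the second factor I must control the boundary masses, and here the difficulty is that the intermediate features $\bfh^{(\ell)}$ are only sub-Gaussian, so the surface integrals have no closed form; following the strategy announced in (P1), I dominate each sub-Gaussian surface integral by the corresponding integral for an explicitly constructed Gaussian surrogate, for which the $O(1/\sqrt K)$ bound is computable. This yields $|P_2|\lesssim\tfrac1K\|\bfW-\bfW^\star\|_2$.

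Since both bounds hold uniformly over unit $\bfal$, taking the supremum and summing over the blocks gives $\|\nabla^2_\ell f(\bfW)-\nabla^2_\ell f(\bfW^\star)\|_2\lesssim\tfrac1K\|\bfW-\bfW^\star\|_2$. The hypothesis \eqref{eqn: lemma2_initial}, $\|\bfW-\bfW^\star\|_2\lesssim\rho/K$, is exactly what keeps the activation-flip probabilities and the Gaussian-comparison step in the regime where the linear-in-$\|\bfW-\bfW^\star\|_2$ estimates are valid. The crux of the whole argument is the $P_2$ estimate — reducing the multilayer sub-Gaussian boundary integrals to tractable Gaussian ones — which is the main novelty over the one-hidden-layer Gaussian-input analysis of \cite{ZSJB17}.
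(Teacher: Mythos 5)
Your decomposition into a Gram-term change ($P_1$) and a residual term ($P_2$) is natural, but both halves contain genuine gaps, and they stem from the same source: for ReLU, quantities built from $\phi'$ change by $0$ or $1$ under a perturbation of $\bfW$, never by something proportional to $\|\bfW-\bfW^\star\|_2$. For $P_1$, the mean-square Lipschitz estimate you invoke, $\mathbb{E}\big[(\bfal^\top(\nabla_\ell Q(\bfW)-\nabla_\ell Q(\bfW^\star)))^2\big]\lesssim\tfrac1K\|\bfW-\bfW^\star\|_2^2$, is false. On the activation-flip event — which, as you yourself note, has probability $\Theta(\|\bfW-\bfW^\star\|_2)$ — the gradient difference has magnitude $\Theta(|\cJ_{\ell,k}||\bfal_k^\top\bfh^{(\ell)}|/K)$, a quantity that does not shrink with $\|\bfW-\bfW^\star\|_2$. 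Hence the flip events contribute \emph{linearly}, not quadratically, in the distance to the mean-square gradient difference, and feeding the true (linear) bound into your difference-of-squares/Cauchy--Schwarz step produces a bound of order $\sqrt{\|\bfW-\bfW^\star\|_2}$, which is strictly weaker than the linear dependence the lemma asserts and fails as $\|\bfW-\bfW^\star\|_2\to0$. The underlying reason is that the indicator difference $|\phi'(\bfw_{\ell,j}^\top\bfh)-\phi'(\bfw_{\ell,j}^{\star\top}\bfh)|$ is $\{0,1\}$-valued, so its first and second moments coincide (both equal the flip probability); any argument that puts it under a square root, as Cauchy--Schwarz does, loses half a power of the distance. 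The paper's proof avoids exactly this: it subtracts the two Gram integrands directly (after an add-and-subtract split), so the indicator difference appears inside a \emph{single} expectation at first moment, and the flip-probability bound — obtained via the Gram--Schmidt reduction to three coordinates, spherical coordinates, and the sub-Gaussian-to-Gaussian radial comparison, the same device you propose — then yields the linear rate $\lesssim\|\bfW-\bfW^\star\|_2$.

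For $P_2$, the step is not merely delicate but ill-posed as written: $\nabla^2_\ell Q(\bfW;\cdot)$ is a surface (Dirac-type) measure supported on the activation boundaries $\{\bfx:\bfw_{\ell,k}^\top\bfh^{(\ell)}=0\}$, so $\mathbb{E}\big[(\bfal^\top\nabla^2_\ell Q\,\bfal)^2\big]$ — the square of a surface delta integrated against the data density — is not a well-defined finite quantity, and Cauchy--Schwarz with respect to $\mathbb{E}_\bfx$ cannot be applied to split off this factor. If you insist on keeping the residual term, you must write it as a surface integral and bound the restriction of $|Q(\bfW)-Q(\bfW^\star)|$ and of the gradient jump on that surface against the (finite) weighted surface mass. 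The paper sidesteps $P_2$ entirely: since $Q(\bfW;\bfx)$ is piecewise linear in the single block $\bfW_\ell$ (the features $\bfh^{(\ell)}$ do not depend on $\bfW_\ell$, and $\cJ_{\ell,k}$, $\phi'$ are piecewise constant in it), the within-block population Hessian is written purely as the Gram term $\frac{1}{K^2}\mathbb{E}\big[\cJ_{\ell,k}\phi'(\bfw_{\ell,j_1}^\top\bfh)\,\cJ_{\ell,k}\phi'(\bfw_{\ell,j_2}^\top\bfh)\,\bfh\bfh^\top\big]$ at both $\bfW$ and $\bfW^\star$, so no residual term ever appears. The repair for your argument is therefore structural: drop $P_2$ by this piecewise-linearity observation, and bound the Gram difference directly, keeping the indicator difference at first moment rather than routing it through Cauchy--Schwarz.
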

\begin{lemma}\label{Lemma: Zhong}
Suppose the following assumptions hold:
\begin{enumerate}
    \item $\{\bfw_{j}\}_{j=1}^{K}\in \mathbb{R}^{K_\ell}$ are linear independent,
    \item $p_{H}(\bfh): \mathbb{R}^{K_\ell}\longrightarrow [0~1]$ be the probability density for $\bfh$ such that $\mathbb{E}_{\bfh} \|\bfh\|_2^2\le +\infty$.
\end{enumerate}   
Let $\bfal \in \mathbb{R}^{K_1 K_2}$ be the unit vector defined in \eqref{eqn: alpha_definition},
we have
   \begin{equation}
       \rho:=\min_{\|\bfal\|_2=1}\int_{\mathcal{R}} \Big(\sum_{j=1}^K \bfal^\top\bfh \phi^{\prime}(\bfw_{\ell,j}^{\top}\bfh) \Big)^2 p_{H}(\bfh)\cdot  d \bfh >0,
   \end{equation}
    where $\mathcal{R}\subset \mathbb{R}^{K_\ell}$ with $\int_{\mathcal{R}} f_H(\bfh)>0$.
    Moreover, if further assuming $\mathcal{P}$ is Gaussian distribution and $\mathcal{R} = \mathbb{R}^{K_\ell}$, we have $\rho >0.091$.
\end{lemma}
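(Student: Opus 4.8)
The plan is to read $\rho$ as the smallest eigenvalue of a Gram matrix and to reduce strict positivity to a linear-independence statement in the Hilbert space $L^2(\mathcal{R},p_H)$. Writing $\bfal=[\bfal_1^\top,\dots,\bfal_K^\top]^\top$ with $\bfal_j\in\mathbb{R}^{K_\ell}$, the integrand is $F_{\bfal}(\bfh):=\sum_{j=1}^K\bfal_j^\top\bfh\,\phi'(\bfw_{\ell,j}^\top\bfh)$, so that $\rho=\min_{\|\bfal\|_2=1}\mathbb{E}_{\bfh}[F_{\bfal}(\bfh)^2]=\min_{\|\bfal\|_2=1}\bfal^\top \bfM\bfal$, where $\bfM$ is the Gram matrix of the scalar functions $\{\,h_i\,\phi'(\bfw_{\ell,j}^\top\bfh)\,\}_{i\in[K_\ell],\,j\in[K]}$ in $L^2(\mathcal{R},p_H)$. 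Since $\bfal\mapsto\bfal^\top\bfM\bfal$ is continuous, nonnegative, and minimized over the compact unit sphere, the minimum is attained and $\rho\ge 0$; the whole content is to show $\rho>0$, equivalently that these functions are linearly independent in $L^2(\mathcal{R},p_H)$, equivalently $\bfM\succ\bfzero$. This is exactly the ``linear independence over a Hilbert space'' reduction advertised in the proof roadmap.

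To prove linear independence I would argue by contradiction. If $\rho=0$, there is a unit $\bfal$ with $F_{\bfal}=0$ holding $p_H$-almost everywhere on $\mathcal{R}$. Using the ReLU identity $\phi'(z)=\mathds{1}[z>0]$, the map $F_{\bfal}$ is \emph{piecewise linear}: the hyperplanes $H_k=\{\bfh:\bfw_{\ell,k}^\top\bfh=0\}$ partition $\mathbb{R}^{K_\ell}$ into polyhedral cones, and on the cone with active set $S$ one has $F_{\bfal}(\bfh)=\big(\sum_{j\in S}\bfal_j\big)^\top\bfh$. On any full-dimensional cone carrying positive $p_H$-mass, a linear functional vanishing almost everywhere must vanish identically, forcing $\sum_{j\in S}\bfal_j=\bfzero$. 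Two cones adjacent across $H_k$ have active sets differing exactly in the index $k$ — here linear independence of $\{\bfw_{\ell,j}\}$ is used, since no hyperplane is repeated and crossing a generic facet of $H_k$ toggles only the $k$-th indicator — so subtracting their two identities yields $\bfal_k=\bfzero$. Ranging over $k\in[K]$ gives $\bfal=\bfzero$, contradicting $\|\bfal\|_2=1$; hence $\rho>0$.

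For the explicit Gaussian bound I would specialize to $p_H=\mathcal{N}(\bfzero,\bfI)$ and $\mathcal{R}=\mathbb{R}^{K_\ell}$ and compute $\bfM$ in closed form using rotational invariance and the positive homogeneity of $\phi'$. The diagonal blocks are exactly $\mathbb{E}[\bfh\bfh^\top\mathds{1}[\bfw_{\ell,j}^\top\bfh>0]]=\tfrac12\bfI$ (decompose $\bfh$ into its component along $\bfw_{\ell,j}$ and the independent orthogonal part, for which the indicator contributes a factor $\tfrac12$ and the odd cross term vanishes), while each off-diagonal block $\mathbb{E}[\bfh\bfh^\top\mathds{1}[\bfw_{\ell,j}^\top\bfh>0]\mathds{1}[\bfw_{\ell,k}^\top\bfh>0]]$ depends only on the pairwise angle through the standard orthant probability and its second-moment analogues. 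Bounding the off-diagonal contribution and minimizing over admissible (well-conditioned) neuron configurations reduces $\rho$ to a scalar optimization whose value is at least $0.091$; this reproduces the Gaussian computation of \cite{ZSJB17}.

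The main obstacle is the rigor of the piecewise-linear/jump step: one only knows $F_{\bfal}=0$ almost everywhere, so the argument genuinely requires that for every $k$ there exist two full-dimensional cones adjacent across $H_k$ that both carry positive $p_H$-mass inside $\mathcal{R}$, i.e.\ that the support of $p_H$ restricted to $\mathcal{R}$ straddles each hyperplane. For the Gaussian (full support) case this is automatic, but for a general sub-Gaussian $p_H$ on a region $\mathcal{R}$ it is precisely the support-geometry condition that must be verified — which is exactly why the absence of a closed-form sub-Gaussian density is harmless here, since only the support, not the density, enters this step. The remaining technical labor is the explicit evaluation and lower-bounding of the off-diagonal Gaussian integrals needed to certify the constant $0.091$.
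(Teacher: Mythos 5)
You take the same first step as the paper — recast $\rho$ as the minimum of a quadratic form and reduce strict positivity to linear independence of the scalar functions $\{h_i\,\phi'(\bfw_{\ell,j}^\top\bfh)\}_{i\in[K_\ell],j\in[K]}$ in $L^2(\mathcal{R},p_H)$ — but your mechanism for proving independence is genuinely different. The paper argues \emph{locally}: it first shows each hyperplane $\mathcal{X}_k=\{\bfh:\bfw_{\ell,k}^\top\bfh=0\}$ is not covered by $\bigcup_{j\neq k}\mathcal{X}_j$ (a Lebesgue-measure argument), picks a generic point $\bfz_k\in\mathcal{X}_k\setminus\bigcup_{j\neq k}\mathcal{X}_j$, and compares limits of averages of the alleged vanishing combination over the two shrinking half-balls $\mathcal{B}_r^{\pm}$ at $\bfz_k$; the jump of $\phi'$ across $\mathcal{X}_k$ isolates $\alpha_k\bfz_k=\bfzero$, hence $\alpha_k=0$. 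You argue \emph{globally} on the cone decomposition induced by the arrangement: cone-wise $F_{\bfal}$ is the linear functional $(\sum_{j\in S}\bfal_j)^\top\bfh$, vanishing a.e.\ on a full-dimensional positive-mass cone kills that functional, and subtracting the identities of the two cones adjacent across $\mathcal{X}_k$ (your $H_k$) gives $\bfal_k=\bfzero$. Both arguments rest on the same genericity fact (distinct hyperplanes, so crossing a generic facet toggles only the $k$-th indicator). Your route has one concrete advantage: it natively handles the \emph{vector} coefficients $\bfal_j$ that the application in \eqref{eqn: lower} actually requires, delivering $\bfal_k=\bfzero$ outright; the paper's limit computation is written for scalar coefficients $\alpha_j$, and run with vectors it would only yield $\bfal_k^\top\bfz_k=0$, requiring a further spanning argument over $\bfz_k\in\mathcal{X}_k$ that still leaves the component of $\bfal_k$ along $\bfw_{\ell,k}$ unresolved. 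The paper's local route buys a weaker mass requirement — positive density near one generic point per hyperplane rather than on whole adjacent cones — although the paper in fact evades even this by assuming the dependence relation holds for \emph{every} $\bfh$ rather than $p_H$-a.e.\ on $\mathcal{R}$, which is all the Hilbert-space hypothesis provides; so your explicit support-geometry caveat (nontrivial here, since for intermediate layers $\bfh^{(\ell)}$ is confined to the nonnegative orthant and many cones carry no mass) is a point of rigor in your favor, not a defect. Finally, on the constant $0.091$: the paper's appendix proves only $\rho>0$ and inherits the Gaussian constant from \cite{ZSJB17} without proof, so your Gram-matrix sketch (diagonal blocks $\tfrac12\bfI$, orthant-probability off-diagonal blocks, with an implicit well-conditioning requirement since $\rho$ degenerates for nearly parallel neurons) is no less complete than what the paper supplies.
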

\begin{lemma}\label{Lemma: h_bound}
Let $\bfh^{(\ell)}(\bfW)$ be the function defined in \eqref{eqn: defi_h}. When $\bfW$ is sufficiently close to $\bfW^\star$, i.e., $\|\bfW -\bfW^\star\|_2$ is smaller than some positive constant $c<1$, we have 
\begin{equation}
\begin{gathered}
    \|\bfh^{(\ell)}(\bfW)\|_2 \lesssim \|\bfx\|_2,\\
        \|\bfh^{(\ell)}(\bfW) -\bfh^{(\ell)}(\bfW^\star)\|_2\lesssim \|\bfW-\bfW^\star\|_2\cdot \|\bfx\|_2.
\end{gathered}
\end{equation}
\end{lemma}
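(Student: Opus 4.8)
The plan is to establish both bounds by induction on the layer index $\ell$, using only two elementary facts: the ReLU activation is $1$-Lipschitz with $\phi(0)=0$ (so that $|\phi(z)|\le|z|$ and $|\phi(a)-\phi(b)|\le|a-b|$ hold coordinate-wise, giving $\|\phi(\bfv)\|_2\le\|\bfv\|_2$ and $\|\phi(\bfv)-\phi(\bfu)\|_2\le\|\bfv-\bfu\|_2$), and control on the spectral norm of each weight matrix. For the latter, the order-wise convention of the paper treats $\|\bfW^\star_\ell\|_2$ as a constant, and since $\|\bfW_\ell-\bfW^\star_\ell\|_2\le\|\bfW-\bfW^\star\|_2<c<1$, the triangle inequality gives $\|\bfW_\ell\|_2\le\|\bfW^\star_\ell\|_2+c=\mathcal{O}(1)$ for every layer.

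For the first bound I would induct on $\ell$ via the recursion $\bfh^{(\ell)}(\bfW)=\phi(\bfW_{\ell-1}^\top\bfh^{(\ell-1)}(\bfW))$. The base case $\bfh^{(1)}=\bfx$ is immediate. For the inductive step, the norm-nonexpansiveness of $\phi$ gives $\|\bfh^{(\ell)}(\bfW)\|_2\le\|\bfW_{\ell-1}^\top\bfh^{(\ell-1)}(\bfW)\|_2\le\|\bfW_{\ell-1}\|_2\cdot\|\bfh^{(\ell-1)}(\bfW)\|_2$, and combining $\|\bfW_{\ell-1}\|_2=\mathcal{O}(1)$ with the induction hypothesis $\|\bfh^{(\ell-1)}(\bfW)\|_2\lesssim\|\bfx\|_2$ closes the loop.

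For the second bound I would again induct on $\ell$, the difference vanishing at $\ell=1$ since $\bfh^{(1)}(\bfW)=\bfx=\bfh^{(1)}(\bfW^\star)$. The inductive step is the key computation: after applying $1$-Lipschitzness of $\phi$ to pass to the pre-activations, I would insert the cross term $\bfW_{\ell-1}^\top\bfh^{(\ell-1)}(\bfW^\star)$ and split by the triangle inequality into
\[
\|\bfW_{\ell-1}\|_2\cdot\|\bfh^{(\ell-1)}(\bfW)-\bfh^{(\ell-1)}(\bfW^\star)\|_2+\|\bfW_{\ell-1}-\bfW^\star_{\ell-1}\|_2\cdot\|\bfh^{(\ell-1)}(\bfW^\star)\|_2.
\]
The first summand is handled by the induction hypothesis together with $\|\bfW_{\ell-1}\|_2=\mathcal{O}(1)$; the second by $\|\bfW_{\ell-1}-\bfW^\star_{\ell-1}\|_2\le\|\bfW-\bfW^\star\|_2$ together with the already-proven first bound $\|\bfh^{(\ell-1)}(\bfW^\star)\|_2\lesssim\|\bfx\|_2$. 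Both contributions are of order $\|\bfW-\bfW^\star\|_2\cdot\|\bfx\|_2$, which is exactly the claim.

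The only delicate point is the bookkeeping of multiplicative constants across the $L$ layers: each inductive step multiplies by a factor $\|\bfW_{\ell-1}\|_2=\mathcal{O}(1)$, so a literal bound carries a product $\prod_{\ell}\|\bfW_\ell\|_2$ that could in principle grow with depth. I would absorb this into the $\lesssim$ notation under the paper's standing convention that the ground-truth spectral norms are well-conditioned and $\mathcal{O}(1)$, so the accumulated factor is treated as a constant. No probabilistic argument or concentration is needed here, since both statements are deterministic consequences of the Lipschitz continuity of the feedforward map.
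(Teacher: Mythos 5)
Your proposal is correct and follows essentially the same route as the paper's proof: a one-step Lipschitz recursion with a triangle-inequality split (you insert the cross term $\bfW_{\ell-1}^\top\bfh^{(\ell-1)}(\bfW^\star)$ where the paper uses $\bfW_{\ell-1}^{\star\top}\bfh^{(\ell-1)}(\bfW)$, a symmetric and equivalent choice), the spectral-norm bound $\|\bfW_\ell\|_2\lesssim 1$ from the closeness assumption, and unrolling/induction over layers. Your remark about absorbing the depth-dependent product of spectral norms into $\lesssim$ matches exactly what the paper does implicitly.
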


\begin{proof}[Proof of Lemma \ref{Lemma: second_order_derivative}]
    Let $\lambda_{\max}(\bfW)$ and $\lambda_{\min}(\bfW)$ denote the largest and smallest eigenvalues of $\nabla^2_\ell  f(\bfW)$ at a point $\bfW$, respectively. Then, from Lemma \ref{Lemma: weyl}, we have 
    \begin{equation}\label{eqn: thm1_main}
    \begin{gathered}
        \lambda_{\max}(\bfW) \le \lambda_{\max}(\bfW^\star) + \| \nabla^2_\ell  f (\bfW) - \nabla^2_\ell  f (\Wp) \|_2,\\
        \lambda_{\min}(\bfW) \ge \lambda_{\min}(\Wp)  - \| \nabla^2_\ell  f (\bfW) - \nabla^2_\ell  f (\Wp) \|_2.
    \end{gathered}
    \end{equation}
    Then, we provide the lower bound of the Hessian matrix of the population function at $\Wp$. Let $\mathcal{P}$ be the distribution for $\bfh^{(\ell)}(\bfW)$ when $\bfx \sim \mu_t$ with probability density function denoted as $p_H$. For any $\bfal\in\mathbb{R}^{K_\ell K}$ defined in \eqref{eqn: alpha_definition} with $\|\bfal\|_2 =1$, we have 
    \begin{equation}\label{eqn: lower}
    \begin{split}
        &\min_{\|\boldsymbol{\alpha}\|_2=1} \bfal^\top \nabla^2_\ell  f(\Wp) \bfal\\
        =~&\frac{1}{K^2} \min_{\|\bfal\|_2=1 } \mathbb{E}_{\bfh\sim \mathcal{P}}\Big(\sum_{j=1}^K \bfal_j^\top\bfh^{(\ell)} \cJ_{\ell,k}\phi^{\prime}(\bfw_{\ell,j}^{\star \top}\bfh^{(\ell)}) \Big)^2
         \\
        =~&\frac{1}{K^2} \min_{\|\bfal\|_2=1 } \int_{\mathbb{R}^{K_{\ell}-1}}\Big(\sum_{j=1}^K \bfal_j^\top\bfh^{(\ell)} \cJ_{\ell,k}\phi^{\prime}(\bfw_{\ell,j}^{\star \top}\bfh^{(\ell)}) \Big)^2 p_H(\bfh^{(\ell)}) \cdot d \bf\bfh^{(\ell)}
         \\
        =~&\frac{1}{K^2} \min_{\|\bfal\|_2=1 } \int_{ \{\bfh^{(\ell)}\mid \cJ_{\ell,k}\neq 0 \} }\Big(\sum_{j=1}^K \bfal_j^\top\bfh^{(\ell)} \phi^{\prime}(\bfw_{\ell,j}^{\star \top}\bfh^{(\ell)}) \Big)^2 p_H(\bfh^{(\ell)}) \cdot d \bf\bfh^{(\ell)}\\
        \gtrsim~&\frac{\rho}{K^2},
        \end{split}
    \end{equation}
    where the last inequality comes from Lemma \ref{Lemma: Zhong}, and Lemma \ref{Lemma: Zhong} holds since $\bfh^{(\ell)}$ belongs to sub-Gaussian distribution and $\bfW_\ell$ is full rank.
    
    Next, the upper bound of $\nabla^2_\ell f$ can be bounded as 
    \begin{equation}
        \begin{split}
            &\max_{\|\boldsymbol{\alpha}\|_2=1} \bfal^\top \nabla^2_\ell  f(\Wp) \bfal\\
        =& \frac{1}{K^2} \max_{\|\bfal\|_2=1 }  \mathbb{E}_{\bfx}\Big(\sum_{j=1}^K \bfal_j^\top\bfh^{(\ell)} \cdot \cJ_{\ell,k}\phi^{\prime}(\bfw_{\ell,j}^{\star\top}\bfh^{(\ell)}) \Big)^2
         \\
        =& \frac{1}{K^2} \max_{\|\bfal\|_2=1 } \mathbb{E}_{\bfx}\sum_{j_1=1}^K \sum_{j_2=1}^K \bfal_{j_1}^\top\bfh^{(\ell)} \cdot \cJ_{\ell,k}\phi^{\prime}(\bfw_{\ell, j_1}^{\star \top}\bfh^{(\ell)}) \cdot  \bfal_{j_2}^\top\bfh^{(\ell)} \cdot  \cJ_{\ell,k}\phi^{\prime}(\bfw_{\ell, j_2}^{\star \top}\bfh^{(\ell)})\\
        =&\frac{1}{K^2} \sum_{j_1=1}^K \sum_{j_2=1}^K \mathbb{E}_{\bfx} \bfal_{j_1}^\top\bfh^{(\ell)} \cdot \cJ_{\ell,k}\phi^{\prime}(\bfw_{\ell,j_1}^{\star T}\bfh^{(\ell)}) \cdot \bfal_{j_2}^\top\bfh^{(\ell)} \cdot \cJ_{\ell,k}\phi^{\prime}(\bfw_{\ell,j_2}^{\star\top}\bfh^{(\ell)})\\
        \le & \frac{1}{K^2} \max_{\|\bfal\|_2=1 }  \sum_{j_1=1}^K \sum_{j_2=1}^K 
            \Big[\mathbb{E}_{\bfx} (\bfal_{j_1}^\top\bfh^{(\ell)})^4 \cdot \mathbb{E}(\phi^{\prime}(\bfw_{\ell,j_1}^{\star\top}\bfh^{(\ell)}))^4
            \cdot 
            \mathbb{E}_{\bfx} (\bfal_{j_2}^\top\bfh^{(\ell)})^4 
            \cdot
            \mathbb{E}_{\bfx} (\phi^{\prime}(\bfw_{\ell,j_2}^{\star \top}\bfh^{(\ell)}))^4\Big]^{1/4}\\
        \le &\frac{1}{K^2} \max_{\|\bfal\|_2=1 }  \sum_{j_1=1}^K \sum_{j_2=1}^K 
            \Big[\mathbb{E}_{\bfx} 
            (\bfal_{j_1}^\top\bfx)^4 
            \cdot 
            \mathbb{E}_{\bfx} (\bfal_{j_2}^\top\bfx)^4\Big]^{1/4} \\
        \le & \frac{3}{K^2} \sum_{j_1=1}^K \sum_{j_2=1}^K 
        \|\bfal_{j_1}\|_2\cdot  \|\bfal_{j_2}\|_2
        \le \frac{6}{K^2}\sum_{j_1=1}^K \sum_{j_2=1}^K \frac{1}{2}\Big(\|\bfal_{j_1}\|_2^2+ \|\bfal_{j_2}\|_2^2\Big)\\
        =&\frac{6}{K}.
        \end{split}
    \end{equation}

    Therefore, we have 
        \begin{equation}
        \begin{split}
            \lambda_{\max}(\bfW^\star)
            = 
            \max_{\|\boldsymbol{\alpha}\|_2=1} \bfal^\top \nabla^2_\ell  f(\Wp;p) \bfal
        \le& \frac{6}{K}.
        \end{split}
    \end{equation}
    Then, given \eqref{eqn: lemma2_initial}, we have 
    \begin{equation}\label{eqn: thm11_temp}
        \|\bfW - \Wp \|_2 \lesssim \frac{2\rho}{K}.
    \end{equation}
    Combining \eqref{eqn: thm11_temp} and Lemma \ref{Lemma: distance_Second_order_distance}, we have 
    \begin{equation}\label{eqn: thm11_temp2}
        \| \nabla^2_\ell  f (\bfW) - \nabla^2_\ell  f (\Wp) \|_2\lesssim \frac{\rho}{{ K^2}}.
    \end{equation}
    Therefore, from \eqref{eqn: thm11_temp2} and \eqref{eqn: thm1_main}, we have 
    \begin{equation}
        \begin{gathered}
            \lambda_{\max}(\bfW) \le \lambda_{\max}(\bfW^\star) + \| \nabla^2_\ell  f (\bfW) - \nabla^2_\ell  f (\Wp) \|_2\le \frac{6}{K} + \frac{\rho}{2 K^2}\le \frac{7}{K},\\
        \lambda_{\min}(\bfW) \ge \lambda_{\min}(\Wp)  - \| \nabla^2_\ell  f (\bfW) - \nabla^2_\ell  f (\Wp) \|_2\ge \frac{\rho}{K^2} - \frac{\rho}{{2 K^2}} = \frac{\rho}{2 K^2},
        \end{gathered}
    \end{equation}
    which completes the proof. 
\end{proof}

\section{Proof of Lemma \ref{Lemma: first_order_derivative}}
\label{sec: first_order_derivative}

Before illustrating the whole proof, we first introduce some preliminary lemmas and definitions.  Lemma \ref{prob} is the concentration theorem for independent random matrices. The definitions of the sub-Gaussian and sub-exponential variables are summarized in Definitions \ref{Def: sub-gaussian} and \ref{Def: sub-exponential}, and it is easy to verify that any bounded variables belong to sub-Gaussian distribution.  Lemmas \ref{Lemma: covering_set} and \ref{Lemma: spectral_norm_on_net} serve as the technical tools in bounding matrix norms under the framework of the confidence interval.

The error bound between $\|\nabla_\ell f -g_t \|_2$ is divided into bounding $I_1$, $I_2$, and $I_3$ as shown in \eqref{eqn: Lemma2_temp}. 
$I_1$ in \eqref{eqn:I_1} represent the deviation of  the mean of several random variables to their expectation, which can be bounded through concentration inequality, i.e, Chernoff bound. 
$I_2$ in \eqref{eqn:I_2} come from the inconsistency of "noisy" label in \eqref{eqn:label} and the "ground truth" label in the population risk function \eqref{eqn:prf}.
$I_3$ in \eqref{eqn:I_3} come from the data distribution shift defined in Definition \ref{defi:C_t}.

\begin{lemma}[\cite{T12}, Theorem 1.6]\label{prob}
	Consider a finite sequence $\{{\bfZ}_k\}$ of independent, random matrices with dimensions $d_1\times d_2$. Assume that such a random matrix satisfies
	\begin{equation*}
	\vspace{-2mm}
	\mathbb{E}({\bfZ_k})=0\quad \textrm{and} \quad \left\|{\bfZ_k}\right\|\le R \quad \textrm{almost surely}.	
	\end{equation*}
	Define
	\begin{equation*}
	\vspace{-2mm}
	\delta^2:=\max\Big\{\Big\|\sum_{k}\mathbb{E}({\bfZ}_k{\bfZ}_k^\top)\Big\|,\Big\|\displaystyle\sum_{k}\mathbb{E}({\bfZ}_k^\top{\bfZ}_k)\Big\|\Big\}.
	\end{equation*}
	Then for all $t\ge0$, we have
	\begin{equation*}
	\text{Prob}\Bigg\{ \left\|\displaystyle\sum_{k}{\bfZ}_k\right\|\ge t \Bigg\}\le(d_1+d_2)\exp\Big(\frac{-t^2/2}{\delta^2+Rt/3}\Big).
	\end{equation*}
\end{lemma}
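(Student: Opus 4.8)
The plan is to establish the matrix Bernstein bound via the matrix Laplace-transform method, first reducing the rectangular statement to a Hermitian one and then controlling the trace of the matrix moment generating function (MGF). First I would remove the rectangular shape using the Hermitian dilation: for each $\bfZ_k\in\R^{d_1\times d_2}$ set
\[
\mathcal{H}(\bfZ_k):=\begin{bmatrix}\boldsymbol 0 & \bfZ_k\\ \bfZ_k^\top & \boldsymbol 0\end{bmatrix}\in\R^{(d_1+d_2)\times(d_1+d_2)}.
\]
This map is linear, satisfies $\lambda_{\max}(\mathcal{H}(\bfW))=\|\bfW\|$ for every $\bfW$, and obeys $\mathcal{H}(\bfZ_k)^2=\mathrm{diag}(\bfZ_k\bfZ_k^\top,\ \bfZ_k^\top\bfZ_k)$. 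Writing $\bfX_k:=\mathcal{H}(\bfZ_k)$, the summands are independent, zero-mean, Hermitian, with $\|\bfX_k\|\le R$, and the block structure gives $\big\|\sum_k\mathbb{E}\bfX_k^2\big\|=\delta^2$; moreover $\{\|\sum_k\bfZ_k\|\ge t\}=\{\lambda_{\max}(\sum_k\bfX_k)\ge t\}$. It thus suffices to prove the Hermitian tail bound in ambient dimension $d:=d_1+d_2$.

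Second, I would invoke the matrix Laplace transform. For any $\theta>0$, Markov's inequality applied to the increasing map $s\mapsto e^{\theta s}$ together with $e^{\theta\lambda_{\max}(\bfY)}=\lambda_{\max}(e^{\theta\bfY})\le\operatorname{tr}e^{\theta\bfY}$ yields
\[
\mathrm{Prob}\Big\{\lambda_{\max}\big(\textstyle\sum_k\bfX_k\big)\ge t\Big\}\le e^{-\theta t}\,\mathbb{E}\,\operatorname{tr}\exp\big(\textstyle\sum_k\theta\bfX_k\big).
\]
The crux is the subadditivity of the matrix cumulant generating function, namely Tropp's master inequality
\[
\mathbb{E}\,\operatorname{tr}\exp\big(\textstyle\sum_k\theta\bfX_k\big)\le\operatorname{tr}\exp\big(\textstyle\sum_k\log\mathbb{E}\,e^{\theta\bfX_k}\big),
\]
which I would derive from Lieb's concavity theorem (the map $\bfA\mapsto\operatorname{tr}\exp(\bfH+\log\bfA)$ is concave on positive-definite $\bfA$): conditioning on one independent summand at a time and applying Jensen converts the expectation of the trace exponential into the trace exponential of the summed log-MGFs.

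Third, I would bound each per-summand MGF. From the scalar estimate $e^{\theta x}\le 1+\theta x+g(\theta)x^2$ valid for $x\le R$ with $g(\theta)=(e^{\theta R}-\theta R-1)/R^2$ (since $(e^{\theta x}-\theta x-1)/x^2$ is increasing), the standard transfer rule for matrix functions and $\mathbb{E}\bfX_k=\boldsymbol0$ give $\mathbb{E}\,e^{\theta\bfX_k}\preceq\bfI+g(\theta)\mathbb{E}\bfX_k^2$, whence $\log\mathbb{E}\,e^{\theta\bfX_k}\preceq g(\theta)\mathbb{E}\bfX_k^2$ because $\bfI+\bfM\preceq e^{\bfM}$. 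Summing, using monotonicity of $\operatorname{tr}\exp$ and $\operatorname{tr}\exp(\bfM)\le d\,e^{\lambda_{\max}(\bfM)}$, produces $\mathbb{E}\,\operatorname{tr}\exp(\sum_k\theta\bfX_k)\le d\exp(g(\theta)\delta^2)$. Combining with the Laplace bound gives $d\exp(g(\theta)\delta^2-\theta t)$; using the analytic estimate $g(\theta)\le\frac{\theta^2/2}{1-R\theta/3}$ on $0<\theta<3/R$ and optimizing at $\theta=t/(\delta^2+Rt/3)$ yields the exponent $\frac{-t^2/2}{\delta^2+Rt/3}$. Undoing the dilation with $d=d_1+d_2$ then gives exactly the stated inequality.

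The hard part will be the subadditivity step, as it is the only genuinely non-elementary ingredient: it rests on Lieb's concavity theorem, whose proof (via the Legendre-transform/relative-entropy representation or Epstein's analytic argument) is substantial. By contrast, the dilation reduction, the scalar-to-matrix transfer of the exponential bound, and the final optimization over $\theta$ are routine, so I would either cite Lieb's theorem as a black box or reproduce its operator-convexity proof, and then assemble the remaining pieces mechanically.
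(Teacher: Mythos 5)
Your proof is correct and is essentially the canonical argument for this result: the paper itself states the lemma as an imported citation (Theorem 1.6 of \cite{T12}) without proof, and your route --- Hermitian dilation, the matrix Laplace-transform method, subadditivity of the matrix cumulant generating function via Lieb's concavity theorem, the Bernstein-type bound $\log\mathbb{E}\,e^{\theta\bfX_k}\preceq g(\theta)\,\mathbb{E}\bfX_k^2$, and optimization at $\theta=t/(\delta^2+Rt/3)$ --- is exactly Tropp's original proof. All the details you give (the identity $\mathcal{H}(\bfZ_k)^2=\mathrm{diag}(\bfZ_k\bfZ_k^\top,\bfZ_k^\top\bfZ_k)$ matching $\delta^2$, the estimate $g(\theta)\le\frac{\theta^2/2}{1-R\theta/3}$, and the dimensional factor $d_1+d_2$ from the dilation) check out, so there is nothing to correct.
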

\begin{definition}[Definition 5.7, \cite{V2010}]\label{Def: sub-gaussian}
	A random variable $X$ is called a sub-Gaussian random variable if it satisfies 
	\begin{equation}
		(\mathbb{E}|X|^{p})^{1/p}\le c_1 \sqrt{p}
	\end{equation} for all $p\ge 1$ and some constant $c_1>0$. In addition, we have 
	\begin{equation}
		\mathbb{E}e^{s(X-\mathbb{E}X)}\le e^{c_2\|X \|_{\psi_2}^2 s^2}
	\end{equation} 
	for all $s\in \mathbb{R}$ and some constant $c_2>0$, where $\|X \|_{\phi_2}$ is the sub-Gaussian norm of $X$ defined as $\|X \|_{\psi_2}=\sup_{p\ge 1}p^{-1/2}(\mathbb{E}|X|^{p})^{1/p}$.

	Moreover, a random vector $\bfX\in \mathbb{R}^d$ belongs to the sub-Gaussian distribution if one-dimensional marginal $\boldsymbol{\alpha}^\top\bfX$ is sub-Gaussian for any $\boldsymbol{\alpha}\in \mathbb{R}^d$, and the sub-Gaussian norm of $\bfX$ is defined as $\|\bfX \|_{\psi_2}= \sup_{\|\boldsymbol{\alpha} \|_2=1}\|\boldsymbol{\alpha}^\top\bfX \|_{\psi_2}$.
\end{definition}

\begin{definition}[Definition 5.13, \cite{V2010}]\label{Def: sub-exponential}
	
	A random variable $X$ is called a sub-exponential random variable if it satisfies 
	\begin{equation}
	(\mathbb{E}|X|^{p})^{1/p}\le c_3 {p}
	\end{equation} for all $p\ge 1$ and some constant $c_3>0$. In addition, we have 
	\begin{equation}
	\mathbb{E}e^{s(X-\mathbb{E}X)}\le e^{c_4\|X \|_{\psi_1}^2 s^2}
	\end{equation} 
	for $s\le 1/\|X \|_{\psi_1}$ and some constant $c_4>0$, where $\|X \|_{\psi_1}$ is the sub-exponential norm of $X$ defined as $\|X \|_{\psi_1}=\sup_{p\ge 1}p^{-1}(\mathbb{E}|X|^{p})^{1/p}$.
\end{definition}
\begin{lemma}[Lemma 5.2, \cite{V2010}]\label{Lemma: covering_set}
	Let $\mathcal{B}(0, 1)\in\{ \boldsymbol{\alpha} \big| \|\boldsymbol{\alpha} \|_2=1, \boldsymbol{\alpha}\in \mathbb{R}^d  \}$ denote a unit ball in $\mathbb{R}^{d}$. Then, a subset $\mathcal{S}_\xi$ is called a $\xi$-net of $\mathcal{B}(0, 1)$ if every point $\bfz\in \mathcal{B}(0, 1)$ can be approximated to within $\xi$ by some point $\boldsymbol{\alpha}\in \mathcal{B}(0, 1)$, i.e., $\|\bfz -\boldsymbol{\alpha} \|_2\le \xi$. Then the minimal cardinality of a   $\xi$-net $\mathcal{S}_\xi$ satisfies
	\begin{equation}
	|\mathcal{S}_{\xi}|\le (1+2/\xi)^d.
	\end{equation}
\end{lemma}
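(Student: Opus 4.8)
The plan is to prove this standard covering-number bound by the classical volumetric packing argument. The key structural fact I would exploit is the duality between packings and coverings: a \emph{maximal} $\xi$-separated subset is automatically a $\xi$-net, which converts the covering question into a packing (volume) question.

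First, I would construct $\mathcal{S}_\xi$ greedily. Starting from any point of $\mathcal{B}(0,1)$, repeatedly adjoin a point whose distance to every already-chosen point exceeds $\xi$, stopping once no such point remains. By the stopping (maximality) condition, every $\bfz \in \mathcal{B}(0,1)$ satisfies $\|\bfz - \boldsymbol{\alpha}\|_2 \le \xi$ for at least one chosen $\boldsymbol{\alpha}$; hence the resulting set is a $\xi$-net, and it suffices to bound its cardinality.

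The cardinality bound then comes from a two-sided volume comparison. Since the chosen points are pairwise at distance strictly greater than $\xi$, the open Euclidean balls $\mathcal{B}(\boldsymbol{\alpha},\xi/2)$ centered at them are pairwise disjoint. Moreover, each such ball is contained in the enlarged ball $\mathcal{B}(0,1+\xi/2)$, because any point within $\xi/2$ of a point of norm at most $1$ has norm at most $1+\xi/2$. Writing $\mathrm{vol}(\mathcal{B}(0,r)) = c_d\, r^d$ for the Lebesgue volume of a radius-$r$ ball in $\mathbb{R}^d$, disjointness (a lower bound on the total occupied volume) together with the common containment (an upper bound) gives
\[
|\mathcal{S}_\xi| \cdot c_d (\xi/2)^d \le c_d (1+\xi/2)^d .
\]
Cancelling $c_d(\xi/2)^d$ yields $|\mathcal{S}_\xi| \le (1+2/\xi)^d$; in particular the total volume is finite, which also justifies that the greedy procedure terminates.

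There is no genuine obstacle here, as the statement is a textbook result; the only point that needs care is arranging the volume sandwich correctly. The lower bound on the aggregate volume relies on the $\xi$-separation (forcing the radius-$\xi/2$ balls to be disjoint), while the upper bound relies on their simultaneous containment in the radius-$(1+\xi/2)$ ball; once both are in place, the homogeneity $r \mapsto r^d$ of Euclidean volume delivers the estimate immediately.
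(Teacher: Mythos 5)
Your proof is correct: the greedy construction of a maximal $\xi$-separated set, the disjointness of the radius-$\xi/2$ balls, their common containment in $\mathcal{B}(0,1+\xi/2)$, and the volume comparison all go through exactly as you state, yielding $|\mathcal{S}_\xi|\le(1+2/\xi)^d$. The paper does not prove this lemma at all — it cites it directly as Lemma 5.2 of \cite{V2010} — and your argument is precisely the standard volumetric packing proof given in that reference, so there is nothing to reconcile.
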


\begin{lemma}[Lemma 5.3, \cite{V2010}]\label{Lemma: spectral_norm_on_net}
	Let $\bfA$ be an $d_1\times d_2$ matrix, and let $\mathcal{S}_{\xi}(d)$ be a $\xi$-net of $\mathcal{B}(0, 1)$ in $\mathbb{R}^d$ for some $\xi\in (0, 1)$. Then
	\begin{equation}
	\|\bfA\|_2 \le (1-\xi)^{-1}\max_{\boldsymbol{\alpha}_1\in \mathcal{S}_{\xi}(d_1), \boldsymbol{\alpha}_2\in \mathcal{S}_{\xi}(d_2)} |\boldsymbol{\alpha}_1^\top\bfA\boldsymbol{\alpha}_2|.
	\end{equation} 
\end{lemma}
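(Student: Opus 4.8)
The plan is to establish this spectral-norm bound by the standard $\xi$-net (covering) argument, built on the variational characterization of the operator norm. First I would recall that, by definition, $\|\bfA\|_2 = \sup_{\|\bfx\|_2=1,\,\|\bfy\|_2=1}|\bfy^\top\bfA\bfx|$, where $\bfx$ ranges over the unit sphere of $\mathbb{R}^{d_2}$ and $\bfy$ over that of $\mathbb{R}^{d_1}$. By compactness of the spheres and continuity of the bilinear form, this supremum is attained at some unit vectors $\bfx^\star,\bfy^\star$, so that $\|\bfA\|_2=|\bfy^{\star\top}\bfA\bfx^\star|$. This reduces the problem to comparing the value of the bilinear form at the optimizing pair with its values on the nets.

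Next I would invoke the net property. Since $\mathcal{S}_\xi(d_1)$ and $\mathcal{S}_\xi(d_2)$ are $\xi$-nets, there exist $\boldsymbol{\alpha}_1\in\mathcal{S}_\xi(d_1)$ and $\boldsymbol{\alpha}_2\in\mathcal{S}_\xi(d_2)$ with $\|\bfy^\star-\boldsymbol{\alpha}_1\|_2\le\xi$ and $\|\bfx^\star-\boldsymbol{\alpha}_2\|_2\le\xi$. Writing $\bfy^\star=\boldsymbol{\alpha}_1+\bfe_1$ and $\bfx^\star=\boldsymbol{\alpha}_2+\bfe_2$ and expanding, I obtain $\bfy^{\star\top}\bfA\bfx^\star=\boldsymbol{\alpha}_1^\top\bfA\boldsymbol{\alpha}_2+\boldsymbol{\alpha}_1^\top\bfA\bfe_2+\bfe_1^\top\bfA\bfx^\star$. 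Each of the last two residuals I would control by the elementary operator-norm inequality $|\bfu^\top\bfA\bfv|\le\|\bfA\|_2\,\|\bfu\|_2\,\|\bfv\|_2$, combined with $\|\bfe_1\|_2,\|\bfe_2\|_2\le\xi$ and $\|\boldsymbol{\alpha}_1\|_2,\|\bfx^\star\|_2\le 1$, so each residual is at most $\xi\,\|\bfA\|_2$. Bounding the main term $|\boldsymbol{\alpha}_1^\top\bfA\boldsymbol{\alpha}_2|$ by the maximum over the two nets then yields an inequality of the form $\|\bfA\|_2\le \max_{\boldsymbol{\alpha}_1,\boldsymbol{\alpha}_2}|\boldsymbol{\alpha}_1^\top\bfA\boldsymbol{\alpha}_2|+c\,\xi\,\|\bfA\|_2$; since $\xi\in(0,1)$ the coefficient $1-c\xi$ is positive, and dividing through isolates $\|\bfA\|_2$ and produces the claimed factor $(1-\xi)^{-1}$.

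The one genuinely delicate point, and the step I expect to demand the most care, is the self-referential structure: the residual terms are themselves bounded in terms of the unknown $\|\bfA\|_2$, so the argument closes only after rearrangement, and the precise leading constant hinges on how tightly the residuals are accounted for. A simultaneous expansion of both arguments produces two residuals and hence the weaker factor $(1-2\xi)^{-1}$; recovering the sharper stated factor $(1-\xi)^{-1}$ is achieved by peeling off one net at a time, namely first approximating $\bfx^\star$ to reduce to the one-sided quantity $\max_{\boldsymbol{\alpha}_2}\|\bfA\boldsymbol{\alpha}_2\|_2$ through the exact supremum over the sphere in $\bfy$, incurring only a single factor $(1-\xi)^{-1}$, before passing to the bilinear net value. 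Either accounting gives a bound of the stated order, which is all that is needed downstream; I would also remark that the existence of finite nets with controlled cardinality guaranteeing the bound is non-vacuous is precisely Lemma \ref{Lemma: covering_set}.
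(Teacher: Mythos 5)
The first thing to note is that the paper contains no proof of this statement at all: it is imported as a black box from \cite{V2010}, so your argument can only be judged on its own terms. On those terms it is sound up to, but not including, the constant. Your simultaneous expansion is correct: with $\bfy^\star=\boldsymbol{\alpha}_1+\bfe_1$, $\bfx^\star=\boldsymbol{\alpha}_2+\bfe_2$ you get $\|\bfA\|_2\le\max_{\boldsymbol{\alpha}_1,\boldsymbol{\alpha}_2}|\boldsymbol{\alpha}_1^\top\bfA\boldsymbol{\alpha}_2|+2\xi\|\bfA\|_2$, hence the factor $(1-2\xi)^{-1}$, valid only for $\xi<1/2$. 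The genuine gap is your claim that peeling one net at a time recovers $(1-\xi)^{-1}$. It does not. Approximating $\bfx^\star$ costs one factor, $\|\bfA\|_2\le(1-\xi)^{-1}\max_{\boldsymbol{\alpha}_2}\|\bfA\boldsymbol{\alpha}_2\|_2$; but then passing from the exact supremum $\|\bfA\boldsymbol{\alpha}_2\|_2=\sup_{\|\bfy\|_2=1}\bfy^\top\bfA\boldsymbol{\alpha}_2$ to the max over the net $\mathcal{S}_\xi(d_1)$ is \emph{not} free: for the maximizing $\bfy$ one only gets $\bfy^\top\bfA\boldsymbol{\alpha}_2\le|\boldsymbol{\alpha}_1^\top\bfA\boldsymbol{\alpha}_2|+\xi\|\bfA\boldsymbol{\alpha}_2\|_2$, so $\|\bfA\boldsymbol{\alpha}_2\|_2\le(1-\xi)^{-1}\max_{\boldsymbol{\alpha}_1}|\boldsymbol{\alpha}_1^\top\bfA\boldsymbol{\alpha}_2|$, and the two steps compound to $(1-\xi)^{-2}$, not $(1-\xi)^{-1}$. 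The phrase ``through the exact supremum over the sphere in $\bfy$'' is exactly where the argument silently drops the second factor: the bilinear net value is not the exact one-sided quantity $\|\bfA\boldsymbol{\alpha}_2\|_2$.

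In fairness, the defect originates in the statement itself, which is a loose transcription of \cite{V2010}: Vershynin's Lemma 5.3 is the \emph{one-sided} bound $\|\bfA\|_2\le(1-\xi)^{-1}\max_{\bfx\in\mathcal{S}_\xi(d_2)}\|\bfA\bfx\|_2$, while the two-sided bilinear version appears in the literature with constant $(1-2\xi)^{-1}$ for $\xi\in(0,1/2)$; the peeling route gives $(1-\xi)^{-2}$ for all $\xi\in(0,1)$, and neither route yields $(1-\xi)^{-1}$ with two nets. For the paper this is immaterial: the lemma is only invoked at $\xi=1/2$, where it contributes the absolute constant $2$ in \eqref{eqn: 2_temp1} together with the cardinality bound $|\mathcal{S}_{1/2}(d)|\le 5^d$ from Lemma \ref{Lemma: covering_set}, and replacing $(1-\xi)^{-1}$ by $(1-\xi)^{-2}$ (a factor $4$ instead of $2$) changes nothing order-wise. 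So the honest repair of your proof is to stop at $(1-2\xi)^{-1}$ with $\xi<1/2$, or prove $(1-\xi)^{-2}$ by the two-step peeling, and remark that either constant suffices for every downstream use.
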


 \begin{proof}[Proof of Lemma \ref{Lemma: first_order_derivative}]
    From \eqref{eqn: gradient}, we know that 
    \begin{equation}\label{eqn_2:1}
        \begin{split}
        &g_t(\bfw_{\ell,k};\bfW)\\
        =& \frac{1}{N}\sum_{n=1}^N\big(Q(\bfW;\bfs_n,a_n) -y_n^{(t)}\big)\cdot \frac{\partial Q(\bfW;\bfs_n,a_n)}{\partial \bfw_{\ell,k}}\\ 
        = &\frac{1}{N}\sum_{n=1}^N \Big( Q(\bfW;\bfs_n,a_n) - Q(\bfW^\star;\bfs_n,a_n) + \gamma \cdot \max_a Q(\bfs_n,a;\bfW^\star)\\  
        &\qquad\qquad - \gamma \cdot \max_a Q(\bfs_n,a;\bfW^{(t,0)})  \Big)
        \cdot \frac{\partial Q(\bfW;\bfs_n,a_n)}{\partial \bfw_{\ell,k}}\\
        =& \frac{1}{N}\sum_{n=1}^N \Big( Q(\bfW;\bfs_n,a_n) - Q(\bfW^\star;\bfs_n,a_n) \Big)\cdot \frac{\partial Q(\bfW;\bfs_n,a_n)}{\partial \bfw_{\ell,k}} \\
         & +\frac{1}{N}\sum_{n=1}^N \gamma \cdot \Big(\max_a Q(\bfs_n,a;\bfW^\star) - \max_a Q(\bfs_n,a;\bfW^{(t,0)}) \Big)\cdot \frac{\partial Q(\bfW;\bfs_n,a_n)}{\partial \bfw_{\ell,k}}.
        \end{split}
    \end{equation}
    From \eqref{eqn:prf}, we know that
    \begin{equation}\label{eqn_2:2}
        \begin{split}
        \frac{\partial f}{\partial \bfw_{\ell,k}}(\bfW) 
        = \mathbb{E}_{(\bfs,a)\sim \mu^{\star}} \Big( Q(\bfW;\bfs,a) - Q(\bfW^\star;\bfs,a) \Big)\cdot \frac{\partial Q(\bfW;\bfs,a)}{\partial \bfw_{\ell,k}}.
        \end{split}
    \end{equation}
    Then, from \eqref{eqn_2:1} and \eqref{eqn_2:2}, we have 
    \begin{equation}
        \begin{split}
            g_t(\bfw_{\ell,k};\bfW) - \frac{\partial {f} }{ \partial \bfw_{\ell,k}} (\bfW)
            = &~g_t(\bfw_{\ell,k};\bfW) - \mathbb{E}_{(\bfs,a)\sim \mathcal{D}_t} g_t(\bfw_{\ell,k};\bfW)\\
            & + \mathbb{E}_{(\bfs,a)\sim \mu_t} g_t(\bfw_{\ell,k};\bfW) - \frac{\partial {f} }{ \partial \bfw_{\ell,k}} (\bfW),
        \end{split}
    \end{equation}
    where $\mathcal{D}_t$ and $\mu_t$ are equivalent because of Assumption \ref{ass-2}.
    Then, we have 
    \begin{equation}\label{eqn: Lemma2_temp}
        \begin{split}
            &g_t(\bfw_{\ell,k};\bfW) - \frac{\partial {f} }{ \partial \bfw_{\ell,k}} (\bfW)\\
            = & \bigg[ \frac{1}{N}\sum_{n=1}^N \Big( Q(\bfW;\bfs_n,a_n)
            - Q(\bfW^\star;\bfs_n,a_n) \Big)\cdot \frac{\partial Q(\bfW;\bfs_n,a_n)}{\partial \bfw_{\ell,k}} \\
            &- \mathbb{E}_{(\bfs,a)\sim \mu_t} \Big( Q(\bfW;\bfs,a) - Q(\bfW^\star;\bfs,a) \Big)\cdot \frac{\partial Q(\bfW;\bfs,a)}{\partial \bfw_{\ell,k}}\bigg]\\
         & +\bigg[\frac{1}{N}\sum_{n=1}^N \gamma \cdot \Big(\max_a Q(\bfs_n,a;\bfW^\star) - \max_a Q(\bfs_n,a;\bfW^{(t,0)}) \Big)\cdot \frac{\partial Q(\bfW;\bfs_n,a_n)}{\partial \bfw_{\ell,k}}\bigg] \\
         &+ \mathbb{E}_{(\bfs,a)\sim \mu_t} g_t(\bfw_{\ell,k};\bfW) - \frac{\partial {f} }{ \partial \bfw_{\ell,k}} (\bfW).
        \end{split}
    \end{equation}
    For convenience, we define $\bfI_1$, $\bfI_2$, and $\bfI_3$ in the following ways with $\bfx_n :=(\bfs_n,a_n)$ be the feature mapping of state-action pair $(\bfs_n, a_n)$. 
    
    Then, $\bfI_1$ is defined as
    \begin{equation}\label{eqn:I_1}
    \begin{split}
        \bfI_1
        : = &
         \frac{1}{N}\sum_{n=1}^N \Big( Q(\bfW;\bfs_n,a_n)
            - Q(\bfW^\star;\bfs_n,a_n) \Big)\cdot \frac{\partial Q(\bfW;\bfs_n,a_n)}{\partial \bfw_{\ell,k}} \\
            &- \mathbb{E}_{(\bfs,a)\sim \mathcal{D}_t} \Big( Q(\bfW;\bfs,a) - Q(\bfW^\star;\bfs,a) \Big)\cdot \frac{\partial Q(\bfW;\bfs,a)}{\partial \bfw_{\ell,k}},\\
    \end{split}
    \end{equation}
    $\bfI_2$ is defined as
    \begin{equation}\label{eqn:I_2}
    \begin{split}
        \bfI_2 
        :=& \frac{1}{N}\sum_{n=1}^N \gamma \cdot \Big(\max_a Q(\bfs_n',a;\bfW^\star) - \max_a Q(\bfs_n',a;\bfW^{(t,0)}) \Big)\cdot \frac{\partial Q(\bfW;\bfs_n,a_n)}{\partial \bfw_{\ell,k}},\\
    \end{split}
    \end{equation}

    and $\bfI_3$ is defined as 
    \begin{equation}\label{eqn:I_3}
    \begin{split}
        \bfI_3 :=\mathbb{E}_{(\bfs,a)\sim \mu_t} g_t(\bfw_{\ell,k};\bfW) - \frac{\partial {f} }{ \partial \bfw_{\ell,k}} (\bfW),
    \end{split} 
    \end{equation}
        where 
    \begin{equation}
        \frac{\partial Q(\bfW;\bfs_n,\bfa_n) }{ \partial \bfw_{\ell,k}}  = \frac{1}{K} \cJ_{\ell,k}\phi^\prime(\bfw_{\ell,k}^\top \bfh^\ell)\bfh^{\ell}(\bfW)
    \end{equation}
    from \eqref{eqn: gradient_multi}.
    Therefore, we have 
    \begin{equation}
        \begin{split}
            &\Big\|g_t(\bfw_{\ell,k};\bfW) - \frac{\partial {f}_t }{ \partial \bfw_{\ell,k}} (\bfW)\Big\|_2 \le \|\bfI_1\|_2 + \|\bfI_2\|_2 + \|\bfI_3\|_2.\\
        \end{split}
    \end{equation}

    Next, we will provide the bound for $\|\bfI_1\|_2$, $\|\bfI_2\|_2$, and $\|\bfI_3\|_2$.
    
    \textbf{Bound of $\bfI_1$.}
    We first divide the data in $\mathcal{D}_t$ into two parts, namely, $\mathcal{D}_{t,1}$ and $\mathcal{D}_{t,2}$. $\mathcal{D}_{t,1}$ includes the state-action pair $(\bfs, a)$ such that $a_n$ is randomly selected from action space $\mathcal{A}$, and $\mathcal{D}_{t,2}$ includes the state-action pair $(\bfs, a)$ such that $a_n$ is selected based on the greedy policy with respect to $Q(\bfW^{(t,0)})$. 
    
    Then, we define a random variable $Z^{(\ell, 1)} = \big( Q(\bfx;\bfW) - Q(\bfx;\bfW^\star)\big)\cdot \cJ_{\ell,k}\cdot \boldsymbol{\alpha}^T\bfh^{(\ell)}(\bfW)$ with $\bfx\sim \mathcal{D}_{t,1}$ and $Z_{n}^{(\ell,1)} = \big( Q(\bfx_n;\bfW) - Q(\bfx_n;\bfW^\star)\big)\cdot \cJ_{\ell,k}\cdot \boldsymbol{\alpha}^T\bfh_n^{(\ell)}(\bfW)$ as the realization of $Z_\ell^{(1)}$ for $n=1,2\cdots, N$, where $\boldsymbol{\alpha} \in \mathbb{R}^{d}$ is any fixed unit vector with $\| \boldsymbol{\alpha} \|_2 \le 1$. 
    We know that $\bfs$ and $a$ are independent for $\bfx\sim \mathcal{D}_{t,1}$. Let $\Sigma_1$ denote  the covariance matrix of $\bfx\sim \mathcal{D}_{t,1}$. Moreover, $\bfx(\bfs,a)$ is bounded by $1$, then we have $\|{\bf\Sigma}_1\|_2 \le 1$.  

    Similar to $Z^{(\ell,1)}$, we define a random variable $Z^{(\ell,2)} = \big( Q(\bfx;\bfW) - Q(\bfx;\bfW^\star)\big)\cdot \cJ_{\ell,k}\cdot \boldsymbol{\alpha}^T\bfh^{(\ell)}(\bfW)$ with $\bfx\sim \mathcal{D}_{t,2}$ and $Z_{ n}^{(\ell,2)} = \big( Q(\bfx_n;\bfW) - Q(\bfx_n;\bfW^\star)\big)\cdot \cJ_{\ell,k}\cdot \boldsymbol{\alpha}^T\bfh_n^{(\ell)}(\bfW)$ as the realization of $Z^{(\ell,2)}$ for $n=1,2\cdots, N$.
    Differ from $Z^{(\ell, 1)}$, $\bfs$ and $a$ are dependent for $\bfx\sim \mathcal{D}_{t,2}$. Let $\bf\Sigma_2$ denote  the covariance matrix of $\bfx\sim \mathcal{D}_{t,1}$. Then, we have $\|{\bf\Sigma}_2\|_2 \le 1+ \max_j\rho_{x_j, a} \le 2$, where $\rho_{x_j,a}$ denotes the correlation between $a$ and $x_j$. 

    According to the definition of \eqref{eqn:I_1}, we can rewrite $\bfI_1$ as 
    \begin{equation}\label{eqn:rewrite_I_1}
        \begin{split}
            \bfI_1 = &\frac{1}{K}\bigg[
         \frac{1}{N}\sum_{n=1}^N \big(Q(\bfW;\bfx_n) - Q(\bfW^\star;\bfx_n)\big)\cJ_{\ell,k}\phi^\prime(\bfw_{\ell,k}^\top\bfh_n^\ell)\bfh_n^{\ell}\\
         &\qquad -\mathbb{E}_{\bfx\sim\mathcal{D}_t}
         \big(Q(\bfW;\bfx) - Q(\bfW^\star;\bfx)\big)\cJ_{\ell,k}\phi^\prime(\bfw_{\ell,k}^\top\bfh^\ell)\bfh^{\ell}\bigg]\\
        =&  \frac{1}{K}\bigg[
         \frac{1}{N}\Big(\sum_{n\in\mathcal{D}_{t,1}} \big(Q(\bfW;\bfx_n) - Q(\bfW^\star;\bfx_n)\big)\cJ_{\ell,k}\phi^\prime(\bfw_{\ell,k}^\top\bfh_n^\ell)\bfh_n^{\ell}\\         &\qquad +\sum_{n\in\mathcal{D}_{t,2}}\big(Q(\bfW;\bfx_n) - Q(\bfW^\star;\bfx_n)\big)\cJ_{\ell,k}\phi^\prime(\bfw_{\ell,k}^\top\bfh_n^\ell)\bfh_n^{\ell}
         \Big)\\
         &\qquad -\Big(\epsilon\mathbb{E}_{\bfx\sim\mathcal{D}_{t,1}}
         \big(Q(\bfW;\bfx) - Q(\bfW^\star;\bfx)\big)\cJ_{\ell,k}\phi^\prime(\bfw_{\ell,k}^\top\bfh^\ell)\bfh^{\ell}\\
         &\qquad + (1-\epsilon)\mathbb{E}_{\bfx\sim\mathcal{D}_{t,2}}
         \big(Q(\bfW;\bfx) - Q(\bfW^\star;\bfx)\big)\cJ_{\ell,k}\phi^\prime(\bfw_{\ell,k}^\top\bfh^\ell)\bfh^{\ell} 
            \Big)\bigg]\\
            =&  \frac{1}{K^2}\bigg[
            \epsilon\cdot \Big(\frac{1}{\epsilon N}
            \sum_{n\in\mathcal{D}_{t,1}} \big(Q(\bfW;\bfx_n) - Q(\bfW^\star;\bfx_n)\big)\cJ_{\ell,k}\phi^\prime(\bfw_{\ell,k}^\top\bfh_n^\ell)\bfh_n^{\ell}\\
            &\qquad \qquad -\mathbb{E}_{\bfx\sim\mathcal{D}_{t,1}}
         \big(Q(\bfW;\bfx) - Q(\bfW^\star;\bfx)\big)\cJ_{\ell,k}\phi^\prime(\bfw_{\ell,k}^\top\bfh^\ell)\bfh^{\ell}
            \Big)\\
            &+(1-\epsilon)\Big(\frac{1}{(1-\epsilon) N}\sum_{n\in\mathcal{D}_{t,2}}\big(Q(\bfW;\bfx_n) - Q(\bfW^\star;\bfx_n)\big)\cJ_{\ell,k}\phi^\prime(\bfw_{\ell,k}^\top\bfh_n^\ell)\bfh_n^{\ell}\\
         &\qquad \qquad    - \mathbb{E}_{\bfx\sim\mathcal{D}_{t,2}}
         \big(Q(\bfW;\bfx) - Q(\bfW^\star;\bfx)\big)\cJ_{\ell,k}\phi^\prime(\bfw_{\ell,k}^\top\bfh^\ell)\bfh^{\ell}
            \bigg]
        \end{split}
    \end{equation}
    
    Then, for any $p\in \mathbb{N}^+$, we have 
    \begin{equation}\label{eqn: temppp}
        \begin{split}
        \big(\mathbb{E}|Z^{(1)}|^p\big)^{1/p}
        =& \Big( \mathbb{E}_{\bfx\sim\mathcal{D}_{t,1}} |Q(\bfW;\bfx) - Q(\bfW^\star;\bfx)|^p \cdot 
        |\cJ_{\ell,k}\phi^\prime(\bfw_{\ell,k}^\top\bfx)| \cdot|\boldsymbol{\alpha}^T\bfh^{\ell}|^p \Big)^{1/p}\\
        \le& \Big( \mathbb{E}_{\bfx\sim\mathcal{D}_{t,1}} |Q(\bfW;\bfx) - Q(\bfW^\star;\bfx)|^p \cdot 
        |\boldsymbol{\alpha}^T\bfh^{\ell}|^p \Big)^{1/p}\\
        \le & \Big( \mathbb{E}_{\bfx\sim\mathcal{D}_{t,1}} \Big| \|\bfW-\bfW^\star\|_2\cdot \|\bfx\|_2\Big|^p \cdot 
        \big|\boldsymbol{\alpha}^T\bfx\big|^p \Big)^{1/p}\\
        \le& C_1 \cdot \|\bfW-\bfW^\star\|_2 \cdot p\\
        \end{split}
    \end{equation}
    where $C_1$ is a positive constant.

    From Definition \ref{Def: sub-exponential}, we know that $Z^{(\ell,1)}$ belongs to sub-exponential distribution with $\|Z^{(\ell,1)}\|_{\psi_1}\le C_1\| \bfW-\bfW^\star \|_2$.
    Therefore, by Chernoff inequality, we have 
    \begin{equation}
        \mathbb{P}\bigg\{ \Big| \frac{1}{N}\sum_{n=1}^N Z_n^{(\ell,1)}(j) -\mathbb{E}Z^{(\ell,1)}(j) \Big| < t \bigg\} \le 1- \frac{e^{-C(C_1\|\bfW-\bfW^\star\|_2)^2\cdot Ns^2}}{e^{Nst}}
    \end{equation}
    for some positive constant $C$ and any $s\in\mathbb{R}$. 
    
    Let $t=C_1 \|\bfW-\bfW^\star\|_2\sqrt{\frac{d\log q}{N}}$ and $s = \frac{2}{C \|\bfW-\bfW^\star\|_2}\cdot t$ for some large constant $q>0$. Then, we have
    \begin{equation}\label{eqn: tempppp}
        \begin{split}
                \Big| \frac{1}{N}\sum_{n=1}^N Z_n^{(\ell,1)}(j) -\mathbb{E}Z^{(\ell,1)}(j) \Big| \lesssim C_1 \|\bfW-\bfW^\star\|_2 \cdot \sqrt{\frac{d\log q}{N}}
        \end{split}
    \end{equation}
    with probability at least $1-q^{-d}$.

    Similar to \eqref{eqn: temppp}, we have 
    \begin{equation}
        \Big(\mathbb{E}|Z^{(\ell,2)}|^p\Big)^{1/p} 
        \le C_2\cdot \|\bfW-\bfW^\star\|_2 \cdot p,
    \end{equation}
    where $C_2 = 2 \cdot C_1$.
    Then, we have 
    \begin{equation}
        \begin{split}
                \Big| \frac{1}{N}\sum_{n=1}^N Z_n^{(\ell,2)}(j) -\mathbb{E}Z^{(\ell,2)}(j) \Big| \lesssim 2C_1 \|\bfW-\bfW^\star\|_2 \cdot \sqrt{\frac{d\log q}{N}}
        \end{split}
    \end{equation}
    with probability at least $1-q^{-d}$.
    
    From Lemma \ref{Lemma: spectral_norm_on_net} and \eqref{eqn:rewrite_I_1}, we have 
    \begin{equation}\label{eqn: 2_temp1}
        \begin{split}
            \|\bfI_1\|_2 
            \le& 2\cdot \frac{1}{K^2}\bigg[ \epsilon\cdot
            \Big| \frac{1}{\epsilon N}\sum_{n\in\mathcal{D}_{t,1}} Z_n^{(\ell,1)}(j) -\mathbb{E}Z^{(\ell,1)}(j) \Big|\\
            &+ (1-\epsilon)\cdot \Big| \frac{1}{(1-\epsilon)N}\sum_{n\in\mathcal{D}_{t,2}} Z_n^{(\ell,2)}(j) -\mathbb{E}Z^{(\ell,2)}(j) \Big|\bigg]\\
            \lesssim &  \frac{2-\epsilon}{K^2}  \|\bfW-\bfW^\star\|_2\cdot \sqrt{\frac{d\log q}{N}}
        \end{split}
    \end{equation}
     with probability at least $1-|\mathcal{S}_{\frac{1}{2}}(d)|\cdot q^{-d}$.

    From Lemma \ref{Lemma: covering_set}, we know that $|\mathcal{S}_{\frac{1}{2}}(d)|\le 5^d$. Therefore, the probability for \eqref{eqn: 2_temp1} holds is at least $1-\big(\frac{q}{5}\big)^{-d}$. Because $q\gg 5$, we denote the probability as $1-q^{-d}$ for convenience.

    \textbf{Bound of $\bfI_2$.}
    Let $a_n^\star = \arg\max_{a\in\mathcal{A}}  Q(\bfW^\star;\bfs_n',\bfa)$. While for $Q(\bfW)$, we have 
    \begin{equation}
        \max_a Q(\bfW;\bfs_n',\bfa) \ge 
        Q(\bfW;\bfs_n',\bfa^\star).
    \end{equation}
    Then, we have
    \begin{equation}\label{eqn: 2_temp2}
    \begin{split}
        \max_a Q(\bfW^\star;\bfs_n',\bfa)
        -  \max_aQ(\bfW;\bfs_n',\bfa)
        = & Q(\bfW^\star;\bfs_n',\bfa^\star_n) 
        - \max_aQ(\bfW;\bfs_n',\bfa)
        \\
        \le & Q(\bfW^\star;\bfs_n',\bfa^\star_n) 
        - Q(\bfW;\bfs_n',\bfa_n^\star).
    \end{split}
    \end{equation}
    Similarly to \eqref{eqn: 2_temp2}, let us define $\tilde{a}_n^\star = \arg\max_{a}Q(\bfW;\bfs_n,\bfa)$. Then, we have 
    \begin{equation}\label{eqn: 2_temp3}
    \begin{split}
        \max_a Q(\bfW^\star;\bfs_n',\bfa)
        -  \max_aQ(\bfW;\bfs_n',\bfa)
        \ge&Q(\bfW^\star;\bfs_n',\tilde{\bfa}^\star_n) 
        - Q(\bfW;\bfs_n',\tilde{\bfa}_n^\star).
    \end{split}
    \end{equation}
    Combining \eqref{eqn: 2_temp2} and \eqref{eqn: 2_temp3}, we have 
    \begin{equation}
        \Big|\max_a Q(\bfW^\star;\bfs_n',\bfa)
        -  \max_aQ(\bfW;\bfs_n',\bfa)\Big|
        \le\max_a \Big|Q(\bfW^\star;\bfs_n',\bfa) 
        - Q(\bfW;\bfs_n',\bfa)\Big|.
    \end{equation}
    Following the definition of $Z^{(\ell,1)}$ in \eqref{eqn: temppp}, we define 
    $$Z^{(\ell,3)}(j) = \Big(\max_a Q(\bfW^\star;\bfs_n',\bfa)
        -  \max_a Q(\bfW;\bfs_n',\bfa)\Big)\cdot \cJ_{\ell,k}\phi^\prime(\bfw_{\ell,k}^\top\bfh^{(\ell)})\cdot \bfal^\top\bfh^{(\ell)}.$$
    
    Therefore, from \eqref{eqn: 2_temp2} and \eqref{eqn: 2_temp3}, we know
    \begin{equation}
    \begin{split}
        (\mathbb{E} |Z^{(3)}|^p)^{1/p} 
        \le&\bigg(\mathbb{E}_{\bfx\sim \mathcal{D}_t} \Big| \max_a Q(\bfW^\star;\bfs_n',\bfa)
        -  \max_aQ(\bfW;\bfs_n',\bfa)
        \Big|^p\\
        &\cdot \Big|\cJ_{\ell,k}\phi^\prime(\bfw_{\ell,k}^\top\bfh^{(\ell)})\Big|^p \cdot |\bfal^\top\bfh_n^{(\ell)}\big|^p\bigg)^{1/p}\\
        \le &  \bigg(\mathbb{E}_{\bfx\sim \mathcal{D}_t}\max_a\Big| Q(\bfW^\star;\bfs_n',\bfa)
        -  Q(\bfW;\bfs_n',\bfa)
        \Big|^p\cdot |\bfal^\top \bfh^{(\ell)}_n |^p\bigg)^{1/p}\\
        \lesssim & ~ (2-\epsilon)\cdot 
               \big\|\bfW -  
        \bfW^\star
        \big\|_2\cdot \log|\mathcal{A}|\cdot p.\\
    \end{split}
    \end{equation}
    Following the steps in \eqref{eqn: temppp} to \eqref{eqn: tempppp}, we have 
    \begin{equation}\label{eqn: I_2_bound}
    \begin{split}
        \|\bfI_2\|_2 
        \lesssim & \frac{(1-\epsilon/2)\gamma}{K} \cdot
         \Big(\|\bfW -  \bfW^\star\|_2\cdot\sqrt{\frac{d\cdot \log q\cdot \log|\mathcal{A}|}{N}}+\mathbb{E}Z^{(\ell,3)}\Big)\\
        \lesssim & \frac{(1-\epsilon/2)\gamma}{K} \cdot\Big(
               \|\bfW -  \bfW^\star\|_2\cdot \Big(\sqrt{\frac{d\cdot \log q\cdot \log|\mathcal{A}|}{N}}+C\Big)\\
        \lesssim & \frac{(1-\epsilon/2)\gamma}{K}\cdot  \|\bfW -  \bfW^\star\|_2
    \end{split}
    \end{equation}
    with probability at least $1-q^{-d}$, where the last inequality holds when $N\gtrsim d\cdot \log q \cdot \log |\mathcal{A}|$. 
    
    \textbf{Bound of $\bfI_3$.}
    We have 
    \begin{equation}
    \begin{split}
        &I_3\\
        = & \mathbb{E}_{(\bfs,a)\sim \mu_t}~g_t(\bfw_{\ell,k};\bfW) - \frac{\partial {f} }{ \partial \bfw_{\ell,k}} (\bfW)\\
        = & \mathbb{E}_{(\bfs,a)\sim \mu_t} \Big( Q(\bfW;\bfs,a) - Q(\bfW^\star;\bfs,a) \Big)\cdot \frac{\partial Q(\bfW;\bfs,a)}{\partial \bfw_{\ell,k}}\\
        &\qquad -\mathbb{E}_{(\bfs,a)\sim \mu^{\star}} \Big( Q(\bfW;\bfs,a) - Q(\bfW^\star;\bfs,a) \Big)\cdot \frac{\partial Q(\bfW;\bfs,a)}{\partial \bfw_{\ell,k}}\\
        = & \mathbb{E}_{(\bfs,a)\sim \mu_t} \Big( Q(\bfW;\bfs,a) - r(\bfs,a) -\gamma\cdot \mathbb{E}_{\bfs'\sim p_{\bfs,\bfs'}^a} \max_{a'}Q(\bfW^\star;\bfs',a')  \Big)\cdot \frac{\partial Q(\bfW;\bfs,a)}{\partial \bfw_{\ell,k}}\\
        &-\mathbb{E}_{(\bfs,a)\sim \mu^{\star}} \Big( Q(\bfW;\bfs,a) - r(\bfs,a) -\gamma\cdot \mathbb{E}_{\bfs'\sim p_{\bfs,\bfs'}^a} \max_{a'}Q(\bfW^\star;\bfs',a') \Big)\cdot \frac{\partial Q(\bfW;\bfs,a)}{\partial \bfw_{\ell,k}}\\
        = &\mathbb{E}_{(\bfs,a)\sim \mu_t, \bfs'\sim p_{\bfs,\bfs'}^a}\Big( Q(\bfW;\bfs,a) - r(\bfs,a) -\gamma\cdot \max_{a'}Q(\bfW^\star;\bfs',a') \Big)\cdot \frac{\partial Q(\bfW;\bfs,a)}{\partial \bfw_{\ell,k}}\\
        &- \mathbb{E}_{(\bfs,a)\sim \mu^\star, \bfs'\sim p_{\bfs,\bfs'}^a} \Big( Q(\bfW;\bfs,a) - r(\bfs,a) -\gamma\cdot  \max_{a'}Q(\bfW^\star;\bfs',a') \Big)\cdot \frac{\partial Q(\bfW;\bfs,a)}{\partial \bfw_{\ell,k}}
    \end{split}    
    \end{equation}
    Then, we have 
    \begin{equation}\label{eqn: I_3_1}
        \begin{split}
            |I_3|  
            =& \bigg|\int_{(\bfs,a)}\int_{\bfs'} \Big( Q(\bfW;\bfs,a) - r(\bfs,a) -\gamma\cdot  \max_{a'}Q(\bfW^\star;\bfs',a') \Big)\cdot \frac{\partial Q(\bfW;\bfs,a)}{\partial \bfw_{\ell,k}}\\
            &\cdot \big(\mu^\star(d\bfs,da) \mathcal{P}(d\bfs'|\bfs,a) - \mu_t(d\bfs,da) \mathcal{P}(d\bfs'|\bfs,a) \big)\bigg|\\
            \le & \Big| Q(\bfW;\bfs,a) - r(\bfs,a) -\gamma\cdot  \max_{a'}Q(\bfW^\star;\bfs',a')\Big| \cdot \Big|\frac{\partial Q(\bfW;\bfs,a)}{\partial \bfw_{\ell,k}}\Big|\\
            &\cdot \bigg|\int_{(\bfs,a)}\int_{\bfs'}\big(\mu^\star(d\bfs,da) \mathcal{P}(d\bfs'|\bfs,a) - \mu_t(d\bfs,da) \mathcal{P}(d\bfs'|\bfs,a) \big)\bigg|\\
             = & \Big| Q(\bfW;\bfs,a) - r(\bfs,a) -\gamma\cdot  \max_{a'}Q(\bfW^\star;\bfs',a')\Big| \cdot \Big|\frac{\partial Q(\bfW;\bfs,a)}{\partial \bfw_{\ell,k}}\Big|\\
            &\cdot \bigg[(1-\varepsilon)\cdot \Big|\int_{(\bfs,a)}\int_{\bfs'}\big(\mu^\star(d\bfs,da) \mathcal{P}(d\bfs'|\bfs,a) - \mu_{t,1}(d\bfs,da) \mathcal{P}(d\bfs'|\bfs,a) \big)\Big|\\
            &\qquad + \varepsilon\cdot \Big|\int_{(\bfs,a)}\int_{\bfs'}\big(\mu^\star(d\bfs,da) \mathcal{P}(d\bfs'|\bfs,a) - \mu_{t,2}(d\bfs,da) \mathcal{P}(d\bfs'|\bfs,a) \big)\Big|\bigg]\\
            \le& \frac{R_{\max}}{1-\gamma} \cdot \bigg[(1-\varepsilon)\cdot \Big|\int_{(\bfs,a)}\int_{\bfs'}\big(\mu^\star(d\bfs,da) \mathcal{P}(d\bfs'|\bfs,a) - \mu_{t,1}(d\bfs,da) \mathcal{P}(d\bfs'|\bfs,a) \big)\Big|\\
            &\qquad + \varepsilon\cdot \Big|\int_{(\bfs,a)}\int_{\bfs'}\big(\mu^\star(d\bfs,da) \mathcal{P}(d\bfs'|\bfs,a) - \mu_{t,2}(d\bfs,da) \mathcal{P}(d\bfs'|\bfs,a) \big)\Big|\bigg].\\
        \end{split}
    \end{equation}
    Then, we have 
    \begin{equation}
        \begin{split}
            &\Big|\int_{(\bfs,a)}\int_{\bfs'}\big(\mu^\star(d\bfs,da) \mathcal{P}(d\bfs'|\bfs,a) - \mu_{t,1}(d\bfs,da) \mathcal{P}(d\bfs'|\bfs,a) \big)\Big|\\
            = & \Big|\int_{(\bfs,a)}\int_{\bfs'}\big(\mathcal{P}^\star(d\bfs)\pi^\star(da|\bfs) \mathcal{P}(d\bfs'|\bfs,a) - \mathcal{P}_{t,1}(d\bfs)\pi_{t,1}(da|d\bfs) \mathcal{P}(d\bfs'|\bfs,a) \big)\Big|\\
            \le & \Big|\int_{(\bfs,a)}\int_{\bfs'}\big(\mathcal{P}^\star(d\bfs) -\mathcal{P}_{t,1}(d\bfs)\big)\pi^\star(da|\bfs) \mathcal{P}(d\bfs'|\bfs,a)\Big|\\
            &+\Big|\int_{(\bfs,a)}\int_{\bfs'} \mathcal{P}_{t,1}(d\bfs)\big(\pi_{t,1}(da|d\bfs) - \pi^\star(da|d\bfs)\big) \mathcal{P}(d\bfs'|\bfs,a)\Big|\\
            \le&   |\mathcal{A}|\cdot C_t.
        \end{split}
    \end{equation}
    Therefore, the bound of $I_3$ can be found as
    \begin{equation}
    \begin{split}
        |I_3| \lesssim&~\frac{R_{\max}}{1-\gamma} \cdot |\mathcal{A}|\cdot \big((1-\varepsilon)C_t+\varepsilon\cdot C_t\big)\\
        = & C_d\cdot \big(C_t+(1-C_t)\varepsilon\big)\cdot  \frac{R_{\max}}{1-\gamma},
    \end{split}
    \end{equation}
    where $C_d = |\mathcal{A}|$.
    

    In conclusion, let $\boldsymbol{\alpha}\in \mathbb{R}^{Kd}$ and  $\boldsymbol{\alpha}_{j}\in \mathbb{R}^d$ with $\boldsymbol{\alpha} =[\boldsymbol{\alpha}_{1}^T, \boldsymbol{\alpha}_{2}^T, \cdots, \boldsymbol{\alpha}_{K}^T]^T$, we have 
    \begin{equation}
        \begin{split}
            &\|g_t(\bfW) -\nabla {f}_t(\bfW) \|_2\\
            = & \Big| \boldsymbol{\alpha}^T \big( g_t(\bfW) -\nabla {f}_t(\bfW) \big)  \Big|\\
            \le & \sum_{k=1}^K\Big| \boldsymbol{\alpha}_k^T \big(g_t(\bfw_{\ell,k};\bfW) - \frac{\partial {f} }{ \partial \bfw_{\ell,k} } (\bfW)\big) \Big|\\
            \le & \sum_{k=1}^K\Big\|g_t(\bfw_{\ell,k};\bfW) - \frac{\partial {f} }{ \partial \bfw_{\ell,k} } (\bfW)\Big\|_2\cdot \|\boldsymbol{\alpha}_k\|_2\\
            \le & \sum_{k=1}^K (\|\bfI_1\|_2 + \|\bfI_2\|_2 + \|\bfI_3\|_2)\cdot \|\boldsymbol{\alpha}_k\|_2\\
            \le & \frac{2-\epsilon}{K} \sqrt{\frac{d\log q}{N}} \cdot \| \bfW - \bfW^\star\|_2 + \frac{(1-\epsilon/2)\gamma}{K} \cdot \|\bfW^{(t,0)} -\bfW^\star\|_2\\
            &+C_d\cdot \big(C_t+(1-C_t)\varepsilon\big)\cdot  \frac{R_{\max}}{1-\gamma}
        \end{split}
    \end{equation}
    with probability at least $1-q^{-d}$.
\end{proof}

\section{Additional proof of the lemmas in Appendix \ref{sec: Proof: second_order_derivative}}\label{sec: addition_proof_second_order}
\subsection{Proof of Lemma \ref{Lemma: distance_Second_order_distance}}
The distance of the second order derivatives of the population risk function $f(\cdot)$ at point $\bfW$ and $\Wp$ can be converted into bounding  $\bfP_1$, $\bfP_2$, which are defined in \eqref{eqn: Lemma11_main}. The major idea in proving $\bfP_1$ is to connect the error bound to the angle between $\bfW$ and $\Wp$ given $\bfh^{(\ell)}$ belongs to the sub-Gaussian distribution.
\begin{proof}[Proof of Lemma \ref{Lemma: distance_Second_order_distance}]
	From the definition of $f$ in \eqref{eqn:prf}, we have 
	\begin{equation}
		\begin{gathered}
		\frac{\partial^2 f}{\partial {\bfw}_{\ell, j_1}\partial {\bfw}_{\ell, j_2}}(\Wp)
		=\frac{1}{K^2} \mathbb{E}_{\bfx} \cJ_{\ell,k}\phi^{\prime} (\bfw^{\star \top}_{j_1}\bfh) \cdot \cJ_{\ell,k}\phi^{\prime} (\bfw^{\star \top}_{j_2}\bfh)\cdot \bfh^\star \bfh^{\star\top},\\
	\text{and \quad}  \frac{\partial^2 f}{\partial {\bfw}_{\ell, j_1}\partial {\bfw}_{\ell, j_2}}(\bfW)
	=\frac{1}{K^2} \mathbb{E}_{\bfx} 
 \phi^{\prime} \cJ_{\ell,k}^\star({\bfw}_{\ell, j_1}^\top\bfh)
 \cdot \cJ_{\ell,k}^\star\phi^{\prime} ({\bfw}_{\ell, j_2}^\top\bfh) 
 \cdot \bfh \bfh^{\top},
	\end{gathered}
	\end{equation}
	where $\bfh =\bfh^{(\ell)}(\bfW)$ and $\bfh^\star =\bfh^{(\ell)}(\bfW^\star)$.
 
	Then, we have 
	\begin{equation}\label{eqn: Lemma11_main}
	\begin{split}
	&\frac{\partial^2 f}{\partial {\bfw}_{\ell, j_1}\partial {\bfw}_{\ell, j_2}}(\bfW^*)
	-
	\frac{\partial^2 f}{\partial {\bfw}_{\ell, j_1}\partial {\bfw}_{\ell, j_2}}(\bfW)\\
	=&\frac{1}{K^2} \mathbb{E}_{\bfx} \big[
	{\cJ_{\ell,k}^\star\phi^{\prime} (\bfw^{\star T}_{\ell,j_1}\bfh^\star)}\cJ_{\ell,k}^\star\phi^{\prime} (\bfw^{\star T}_{\ell,j_2}\bfh^\star)\bfh^\star \bfh^{\star\top}
	-
	\cJ_{\ell,k}\phi^{\prime} ({\bfw}_{\ell, j_1}^\top\bfh)\cJ_{\ell,k}\cJ_{\ell,k}\phi^{\prime} ({\bfw}_{\ell, j_2}^\top\bfh)\bfh \bfh^\top\big] \\
	=&\frac{1}{K^2}\mathbb{E}_{\bfx} 
	\big[ \cJ_{\ell,k}^\star\phi^{\prime} (\bfw^{\star T}_{\ell,j_1}\bfh^\star)
	\big(\cJ_{\ell,k}^\star\phi^{\prime} (\bfw^{\star T}_{\ell,j_2}\bfh^\star)\bfh^\star\bfh^{\star\top}
	- \cJ_{\ell,k}\phi^{\prime} ({\bfw}_{\ell, j_2}^\top\bfh)\bfh\bfh^\top
	\big)\\
	&\qquad +
	\cJ_{\ell,k}\phi^{\prime} ({\bfw}_{\ell, j_2}^\top\bfh)
	\big(\cJ_{\ell,k}^\star\phi^{\prime} (\bfw^{\star T}_{\ell,j_1}\bfh)\bfh^\star\bfh^{\star\top}
	- \cJ_{\ell,k}\phi^{\prime} ({\bfw}_{\ell, j_1}^\top\bfh)\bfh\bfh^{\top}
	\big)
	\big]\\
	:=& \frac{1}{K^2} (\bfP_1 + \bfP_2).
	\end{split}
	\end{equation}
	For any $\bfa\in\mathbb{R}^{K_{\ell}}$ with {$\|\bfa\|_2=1$}, we have 
	\begin{equation}
	\begin{split}
	\bfa^\top\bfP_1 \bfa 
	=&\mathbb{E}_{\bfx} 
	\cJ_{\ell,k}^\star\phi^{\prime} (\bfw^{\star T}_{\ell,j_1}\bfh^\star)
	\Big(\cJ_{\ell,k}^\star\phi^{\prime} (\bfw^{\star T}_{\ell,j_2}\bfh^\star) (\bfa^\top\bfh^\star)^2
	- \cJ_{\ell,k}\phi^{\prime} ({\bfw}_{\ell, j_2}^\top\bfh)(\bfa^\top\bfh)^2
	\Big).
	\end{split}
	\end{equation}
    Then, we have 
    \begin{equation}
        \begin{split}
            |\bfa^\top\bfP_1 \bfa| 
	=&\Big|\mathbb{E}_{\bfx} 
	\cJ_{\ell,k}^\star\phi^{\prime} (\bfw^{\star T}_{\ell,j_1}\bfh^\star)
	\Big(\cJ_{\ell,k}^\star\phi^{\prime} (\bfw^{\star T}_{\ell,j_2}\bfh^\star) (\bfa^\top\bfh^\star)^2
	- \cJ_{\ell,k}\phi^{\prime} ({\bfw}_{\ell, j_2}^\top\bfh)(\bfa^\top\bfh)^2
	\Big)\Big|\\
 \le & \mathbb{E}_{\bfx}
 \Big|\cJ_{\ell,k}^\star\phi^{\prime} (\bfw^{\star T}_{\ell,j_2}\bfh^\star) (\bfa^\top\bfh^\star)^2
	- \cJ_{\ell,k}\phi^{\prime} ({\bfw}_{\ell, j_2}^\top\bfh)(\bfa^\top\bfh)^2\Big|\\
 \le& \mathbb{E}_{\bfx}
 \Big|\cJ_{\ell,k}^\star\phi^{\prime} (\bfw^{\star T}_{\ell,j_2}\bfh^\star) (\bfa^\top\bfh^\star)^2
	- \cJ_{\ell,k}^\star\phi^{\prime} ({\bfw}_{\ell, j_2}^{\star\top}\bfh^\star)(\bfa^\top\bfh)^2\Big|\\
  &+ \mathbb{E}_{\bfx}
 \Big|\cJ_{\ell,k}^\star\phi^{\prime} ({\bfw}_{\ell, j_2}^{\star\top}\bfh^\star)(\bfa^\top\bfh)^2
	- \cJ_{\ell,k}\phi^{\prime} ({\bfw}_{\ell, j_2}^{\star\top}\bfh)(\bfa^\top\bfh)^2\Big|\\
 &+ \mathbb{E}_{\bfx}
 \Big| \cJ_{\ell,k}\phi^{\prime} ({\bfw}_{\ell, j_2}^{\star\top}\bfh)(\bfa^\top\bfh)^2- \cJ_{\ell,k}\phi^{\prime} ({\bfw}_{\ell, j_2}^\top\bfh)(\bfa^\top\bfh)^2\Big|\\
 \lesssim& \|\bfW-\bfW^\star\|_2 + \|\bfW-\bfW^\star\|_2\\
 &+\mathbb{E}_\bfx \Big|\big(\phi^{\prime} ({\bfw}_{\ell, j_2}^{\star\top}\bfh)
	- \phi^{\prime} ({\bfw}_{\ell, j_2}^{\star\top}\bfh)\big)\cdot(\bfa^\top\bfh)^2\Big|\\
 \lesssim& \|\bfW-\bfW^\star\|_2 +\mathbb{E}_\bfx \Big|\big(\phi^{\prime} ({\bfw}_{\ell, j_2}^{\star\top}\bfh)
	- \phi^{\prime} ({\bfw}_{\ell, j_2}^{\star\top}\bfh)\big)\cdot(\bfa^\top\bfh)^2\Big|.
        \end{split}
    \end{equation}
    \Revise{Utilizing the Gram-Schmidt process, we can demonstrate the existence of a set of normalized orthonormal vectors denoted as $\mathcal{B}=\{\bfa, \bfb, \bfc, \bfa_4^{\perp}, \cdots, \bfa_d^{\perp}\} \in \mathbb{R}^{d}$. This set forms an orthogonal and normalized basis for $\mathbb{R}^{d}$, wherein the subspace spanned by ${\bfa, \bfb, \bfc }$ includes $\bfa, \bfw_{\ell, j_2}$, and $\bfw_{\ell, j_2}^*$. 
    Then, for any $\bfx\in\mathbb{R}^{d}$, we have a unique $\bfz=[
		z_1, ~ z_2,~\cdots, ~ z_d]^\top$ such that 
		\begin{equation*}
		\bfh = z_1\bfa + z_2 \bfb +z_3\bfc + \cdots + z_d\bfa^{\perp}_d.
		\end{equation*}
        Because (i) $\bfa, \bfw_{\ell, j_2}$, and $\bfw_{\ell, j_2}^*$ belongs to the subspace spanned by vectors $\{\bfa,\bfb,\bfc\}$ and (ii) $\bfa_4^\perp,\cdots, \bfa_d^\perp,\cdots$ are orthogonal to $\bfa,\bfb,$ and $\bfc$. Then, we know that 
        \begin{equation}\label{eqn: temp7_l}
        \begin{split}
            \bfw^{\star\top}_{\ell,j_2}\bfh
            = & \bfw^{\star\top}_{\ell,j_2}(z_1\bfa + z_2 \bfb +z_3\bfc + \cdots + z_d\bfa^{\perp}_d)\\
            =&  z_1\bfw^{\star\top}_{\ell,j_2}\bfa + z_2\bfw^{\star\top}_{\ell,j_2} \bfb +z_3\bfw^{\star\top}_{\ell,j_2}\bfc + \cdots + z_d\bfw^{\star\top}_{\ell,j_2}\bfa^{\perp}_d\\
            =& z_1\bfw^{\star\top}_{\ell,j_2}\bfa + z_2\bfw^{\star\top}_{\ell,j_2} \bfb +z_3\bfw^{\star\top}_{\ell,j_2}\bfc + 0\\
            =&\bfw^{\star\top}_{\ell,j_2}(z_1\bfa + z_2 \bfb +z_3\bfc)\\
            : =& \bfw^{\star\top}_{\ell,j_2} \widetilde{\bfh}.
        \end{split}
        \end{equation}
        where $\widetilde{\bfh}= z_1\bfa + z_2\bfb + z_3\bfc.$ 
        Similar to \eqref{eqn: temp7_l}, we have $\bfw^{\top}_{\ell,j_2}\bfh = \bfw^{\top}_{\ell,j_2}\widetilde{\bfh}$ and $\bfa^\top\bfh = \bfa^\top\widetilde{\bfh}$.}
        
        \Revise{Then, we define $I_4$ as
		\begin{equation}
		\begin{split}
		I_4
		:=& \mathbb{E}_{\bfh} \Big|\big(\phi^\prime(\bfw^{\star\top}_{\ell,j_2}\bfh) - \phi^\prime(\bfw^{\top}_{\ell,j_2}\bfh)\big) \cdot \big(\bfa^\top\bfh\big)\Big|\\
        =& \int_{\mathcal{R}_\bfh}|\phi^{\prime}\big(\bfw_{\ell, j_2}^\top{\bfh}\big)-\phi^{\prime}\big({\bfw^{\star T}_{\ell,j_2}}{\bfh}\big)|\cdot 
		|\bfa^\top{\bfh}|^2 \cdot f_H(\bfh)d\bfh\\
        =&\int_{\mathcal{R}_\bfz}|\phi^{\prime}\big(\bfw_{\ell, j_2}^\top{\bfh}\big)-\phi^{\prime}\big({\bfw^{\star T}_{\ell,j_2}}{\bfh}\big)|\cdot 
		|\bfa^\top{\bfh}|^2 \cdot f_Z(\bfz)\cdot |\bfJ_{\bfh}(\bfz)|d\bfz\\
		\end{split}
		\end{equation}
        where $|\bfJ_{\bfh}(\bfz)|$ is the determinant of the Jacobian matrix $\frac{\partial \bfh}{\partial \bfz}$. Since $\bfz$ is a representation of $\bfh$ based on an orthogonal and normalized basis, we have $|\bfJ_{\bfh}(\bfz)|=1$.
        According to \eqref{eqn: temp7_l}, $I_4$ can be rewritten as
        \begin{equation}
        \begin{split}
            I_4
            =&\int_{\mathcal{R}_z}|\phi^{\prime}\big(\bfw_{\ell, j_2}^\top\widetilde{\bfh}\big)-\phi^{\prime}\big({\bfw^{\star T}_{\ell,j_2}}\widetilde{\bfh}\big)|\cdot 
		|\bfa^\top\widetilde{\bfh}|^2 \cdot f_Z(\bfz)d\bfz\\
        =& \int_{\mathcal{R}_z}|\phi^{\prime}\big(\bfw_{\ell, j_2}^\top\widetilde{\bfh}\big)-\phi^{\prime}\big({\bfw^{\star T}_{\ell,j_2}}\widetilde{\bfh}\big)|\cdot 
		|\bfa^\top\widetilde{\bfh}|^2 \cdot f_Z(z_1, z_2, z_3)dz_1 dz_2 dz_3
        \end{split}
        \end{equation}
        where in the last equality we abuse $f_Z(z_1, z_2,z_3)$ to represent the probability density function of $(z_1, z_2, z_3)$ defined in region $\mathcal{R}_z$.}

        Next, we show that $\bfz$ is rotational invariant over $\mathcal{R}_z$.
        Let $\bfR = [\bfa~\bfb~\bfc~\cdots~\bfa_d^\perp]$, we have 
            $\bfh = \bfR \bfz$.  For any $\bfz^{(1)}$ and $\bfz^{(2)}$ with $\|\bfz^{(1)}\|_2 =\|\bfz^{(2)}\|_2$. We define  $\bfh^{(1)} = R\bfz^{(1)}$ and $\bfh^{(2)} = R\bfz^{(2)}$. Since $\bfx$ is rotational invariant and $\|\bfh^{(1)}\|_2 = \|\bfh^{(2)}\|_2 =\|\bfz^{(1)}\|_2 = \|\bfz^{(2)}\|_2$, then we know $\bfh^{(1)}$ and $\bfh^{(2)}$ has the same distribution density. Then, $\bfz^{(1)}$  and $\bfz^{(2)}$ has the same distribution density as well. Therefore, $\bfz$ is rotational invariant over $\mathcal{R}_z$.
  
        Then, we consider spherical coordinates with $z_1= Rcos\phi_1, z_2 = Rsin\phi_1sin\phi_2, {z_3= Rsin\phi_1cos\phi_2}$. 
        Hence, we have
		\begin{equation}
		\begin{split}
		I_4
		=&\int|\phi^{\prime}\big(\bfw_{\ell, j_2}^\top\Revise{\widetilde{\bfh}}\big)-\phi^{\prime}\big(\bfw^{\star \top}_{\ell,j_2}\Revise{\widetilde{\bfh}}\big)|\cdot |R\cos\phi_1|^2
		\cdot f_Z(R, \phi_1, \phi_2)\cdot R^2\sin \phi_1 \cdot dR d\phi_1 d \phi_2.
		\end{split}
		\end{equation}
        Since $\bfz$ is rotational invariant, we have that 
        \begin{equation}
            f_Z(R, \phi_1, \phi_2) = f_Z(R).    
        \end{equation}
  
        Then, we have 
		\begin{equation}\label{eqn: ean111}
		\begin{split}
		I_4
		=&\int|\phi^{\prime}\big(\bfw_{\ell, j_2}^\top(\Revise{\widetilde{\bfh}}/R)\big)-\phi^{\prime}\big({\bfw^{\star T}_{\ell,j_2}}(\Revise{\widetilde{\bfh}}/R)\big)|
		\cdot |R\cos\phi_1|^2
		\cdot f_Z(R)R^2\sin \phi_1 dRd\phi_1d\phi_2\\
		=& \int_0^{\infty} R^4f_z(R)dR\int_{0}^{\psi_1(R)}\int_{0}^{\psi_2(R)}|\cos \phi_1|^2\cdot \sin\phi_1\\
		&\cdot|\phi^{\prime}\big(\bfw_{\ell, j_2}^\top(\Revise{\widetilde{\bfh}}/R)\big)-\phi^{\prime}\big({\bfw^{\star T}_{\ell,j_2}}(\Revise{\widetilde{\bfh}}/R)\big)|d\phi_1d\phi_2\\
        \le & \int_0^{\infty} R^4f_z(R)dR\int_{0}^{\pi}\int_{0}^{2\pi}\sin\phi_1
		\cdot|\phi^{\prime}\big(\bfw_{\ell, j_2}^\top\bar{\bfx}\big)-\phi^{\prime}\big({\bfw^{\star T}_{\ell,j_2}}\bar{\bfx}\big)|d\phi_1d\phi_2,
		\end{split}
		\end{equation}
    where 
    the first equality holds because $\phi^{\prime}\big(\bfw_{i,, j_2}^\top{\bfh}\big)$ only depends on the direction of ${\bfh}$, and $\bar{\bfx}:= {\bfh}/{R}= (\cos\phi_1, \sin\phi_1\sin\phi_2, \sin\phi_1\cos\phi_2)$ in the last inequality. 
    
    Because $\bfz$ belongs to the sub-Gaussian distribution, we have $F_z(R)\ge 1-2 e^{-\frac{R^2}{\sigma^2}}$ for some constant $\sigma>0$. Then, the integration of $R$ can be represented as
    \begin{equation}
    \begin{split}
        \int_{0}^{\infty} R^4 f_Z(R) dR
         =&  \int_{0}^{\infty}  R^4d\big(1-F_z(R)\big)\\
         \le&  \int_{0}^\infty 4R^3\big(1-F_z(R)\big)dR\\
         \le & \int_{0}^\infty 8R^3e^{-\frac{R^2}{\sigma^2}} dR\\
         \le & \frac{32}{\sqrt{2\pi}}\sigma \int_{0}^\infty R^2 e^{-\frac{R^2}{\sigma^2}} dR\\
         = & 32\sigma^2 \int_{0}^\infty R^2 \frac{1}{\sqrt{2\pi\sigma^2}}e^{-\frac{R^2}{\sigma^2}} dR,
    \end{split}
    \end{equation}
    where the last inequality comes from the calculation that 
    \begin{equation}
    \begin{gathered}
        \int_{0}^\infty 2R^2e^{-\frac{R^2}{\sigma^2}} dR = \sqrt{2\pi}\sigma^3,\\
        \int_{0}^\infty 2R^3e^{-\frac{R^2}{\sigma^2}} dR = 4\sigma^4.
    \end{gathered}
    \end{equation}
    Then, we define $\widetilde{\bfx}\in\mathbb{R}^{K_\ell}$ belongs to Gaussian distribution as $\widetilde{\bfx} \sim \mathcal{N}(\bfzero, \sigma^2\bfI)$.  Therefore, we have 
    \begin{equation}
    \begin{split}
        I_4
		\le&~32\sigma^2 \cdot \int_0^{\infty} R^2 \frac{1}{\sqrt{2\pi\sigma^2}}e^{-\frac{R^2}{\sigma^2}} dR\int_{0}^{\pi}\int_{0}^{2\pi} \sin\phi_1 
		\cdot|\phi^{\prime}\big(\bfw_{\ell, j_2}^\top\bar{\bfx}\big)-\phi^{\prime}\big({\bfw^{\star \top}_{\ell,j_2}}\bar{\bfx}\big)|d\phi_1d\phi_2\\
		= &~32\sigma^2\cdot\mathbb{E}_{z_1, z_2, z_3}\big|\phi^{\prime}\big(\bfw_{\ell, j_2}^\top\widetilde{\bfx}\big)-\phi^{\prime}\big({\bfw^{\star \top}_{\ell,j_2}}\widetilde{\bfx}\big)|\\
		\eqsim &~\mathbb{E}_{\widetilde{\bfx}}\big|\phi^{\prime}\big(\bfw_{\ell, j_2}^\top\widetilde{\bfx}\big)-\phi^{\prime}\big({\bfw^{\star T}_{\ell,j_2}}\widetilde{\bfx}\big)|,
    \end{split}
    \end{equation}
    where $\widetilde{\bfx}$ belongs to Gaussian distribution.

    Therefore,  the inequality bound  over a sub-Gaussian distribution is bounded by the one over a Gaussian distribution. In the following contexts, we provide the upper bound of $\mathbb{E}_{\widetilde{\bfx}}\big|\phi^{\prime}\big(\bfw_{\ell, j_2}^\top\widetilde{\bfx}\big)-\phi^{\prime}\big({\bfw^{\star T}_{\ell,j_2}}\widetilde{\bfx}\big)|$.
    
	Define a set $\mathcal{A}_1=\{\bfx| ({\bfw^{\star \top}_{\ell,j_2}}\widetilde{\bfx})(\bfw_{\ell, j_2}^\top\widetilde{\bfx})<0 \}$. If $\widetilde{\bfx}\in\mathcal{A}_1$, then ${\bfw^{\star \top}_{\ell,j_2}}\widetilde{\bfx}$ and $\bfw_{\ell, j_2}^\top\widetilde{\bfx}$ have different signs, which means the value of $\phi^{\prime}(\bfw_{\ell, j_2}^\top\widetilde{\bfx})$ and $\phi^{\prime}({\bfw^{\star \top}_{\ell,j_2}}\widetilde{\bfx})$ are different. This is equivalent to say that 
	\begin{equation}\label{eqn:I_1_sub1}
	|\phi^{\prime}(\bfw_{\ell, j_2}^\top\widetilde{\bfx})-\phi^{\prime}({\bfw^{\star \top}_{\ell,j_2}}\widetilde{\bfx})|=
	\begin{cases}
	&1, \text{ if $\widetilde{\bfx}\in\mathcal{A}_1$}\\
	&0, \text{ if $\widetilde{\bfx}\in\mathcal{A}_1^c$}
	\end{cases}.
	\end{equation}
	Moreover, if $\widetilde{\bfx}\in\mathcal{A}_1$, then we have 
	\begin{equation}
	\begin{split}
	|{\bfw^{\star T}_{\ell,j_2}}\widetilde{\bfx}|
	\le&|{\bfw^{\star T}_{\ell,j_2}}\widetilde{\bfx}-\bfw_{\ell, j_2}^\top\widetilde{\bfx}|
	\le\|{\bfw^{\star }_{\ell,j_2}}-{\bfw_{\ell, j_2}}\|_2\cdot\|\widetilde{\bfx}\|_2.
	\end{split}
	\end{equation}
	Let us define a set $\mathcal{A}_2$ such that
	\begin{equation}
	\begin{split}
	\mathcal{A}_2
	=&\Big\{\widetilde{\bfx}\Big|\frac{|{\bfw^{\star T}_{\ell,j_2}}\widetilde{\bfx}|}{\|\bfw_{\ell, j_2}^*\|_2\|\widetilde{\bfx}\|_2}\le\frac{\|{\bfw_{\ell, j_2}^*}-{\bfw_{\ell, j_2}}\|_2}{\|\bfw_{\ell, j_2}^*\|_2}   \Big\}\\
	=&\Big\{\theta_{\widetilde{\bfx},\bfw_{\ell, j_2}^*}\Big||\cos\theta_{\widetilde{\bfx},\bfw^{\star}_{\ell, j_2}}|\le\frac{\|{\bfw^{\star}_{\ell, j_2}}-{\bfw_{\ell, j_2}}\|_2}{\|\bfw^{\star}_{\ell, j_2}\|_2}   \Big\}.
	\end{split}
	\end{equation}	
	Hence, we have that
	\begin{equation}\label{eqn:I_1_sub2}
	\begin{split}
	\mathbb{E}_{\widetilde{\bfx}}
	|\phi^{\prime}(\bfw_{\ell, j_2}^\top\widetilde{\bfx})-\phi^{\prime}({\bfw^{\star T}_{\ell, j_2}}\widetilde{\bfx})|^2
	=& \mathbb{E}_{\widetilde{\bfx}}
	|\phi^{\prime}(\bfw_{\ell, j_2}^\top\widetilde{\bfx})-\phi^{\prime}({\bfw^{\star T}_{\ell,j_2}}\widetilde{\bfx})|\\
	=& \text{Prob}(\widetilde{\bfx}\in\mathcal{A}_1)\\
	\le& \text{Prob}(\widetilde{\bfx}\in\mathcal{A}_2).
	\end{split}
	\end{equation}
	Since $\widetilde{\bfx}\sim \mathcal{N}({\bf0},\|\bfa\|_2^2 \bfI)$, $\theta_{\widetilde{\bfx},\bfw^{\star}_{\ell, j_2}}$ belongs to the uniform distribution on $[-\pi, \pi]$, we have
	\begin{equation}\label{eqn:I_1_sub3}
	\begin{split}
	\text{Prob}(\widetilde{\bfx}\in\mathcal{A}_2)
	=\frac{\pi- \arccos\frac{\|{\bfw^{\star}_{\ell, j_2}}-{\bfw_{\ell, j_2}}\|_2}{\|\bfw^{\star}_{\ell, j_2}\|_2} }{\pi}
	\le&\frac{1}{\pi}\tan(\pi- \arccos\frac{\|{\bfw^{\star}_{\ell, j_2}}-{\bfw_{\ell, j_2}}\|_2}{\|\bfw^{\star}_{\ell, j_2}\|_2})\\
	=&\frac{1}{\pi}\cot(\arccos\frac{\|{\bfw^{\star}_{\ell, j_2}}-{\bfw_{\ell, j_2}}\|_2}{\|\bfw^{\star}_{\ell, j_2}\|_2})\\
	\le&\frac{2}{\pi}\frac{\|{\bfw^{\star}_{\ell, j_2}}-{\bfw_{\ell, j_2}}\|_2}{\|\bfw^{\star}_{\ell, j_2}\|_2}\\
    \le & \|\bfW^\star_\ell -\bfW_\ell \|_2
	\end{split}
	\end{equation}

	Hence,  \eqref{eqn: ean111} and \eqref{eqn:I_1_sub3} suggest that
	\begin{equation}\label{eqn:I_1_bound}
    \begin{gathered}
        I_4\lesssim{\|\bfW_{i} -\bfW^{\star}_{i}\|_2} \cdot\|\bfa\|_2^2,\\
        \text{and}\qquad  \|\bfP_1\|_2\le \|\bfW-\bfW^\star\|_2 +I_4\lesssim {\|\bfW -\bfW^{\star}\|_2},
    \end{gathered}	
	\end{equation}
	The same bound that is shown in \eqref{eqn:I_1_bound} holds for $\bfP_2$ as well.  
	
	Therefore, we have 
	\begin{equation}
	    \begin{split}
	    \| \nabla^2_\ell f(\Wp) - \nabla^2_\ell f(\bfW)\|_2
	    =&\max_{\|\boldsymbol{\alpha}\|_2\le 1} \Big|\boldsymbol{\alpha}^\top\Big(\nabla^2_\ell f(\Wp) - \nabla^2_\ell f(\bfW)\Big)\boldsymbol{\alpha}\Big|\\
	    \le &\frac{1}{K^2}\sum_{j_1=1}^K\sum_{j_2=1}^K \|\bfP_1+\bfP_2\|_2 \cdot\|\boldsymbol{\alpha}_{j_1}\|_2 \cdot \|\boldsymbol{\alpha}_{j_2}\|_2\\
	    \lesssim& \frac{1}{K^2}\cdot \sum_{j_1=1}^K\sum_{j_2=1}^K\|\bfW -\bfW^{\star}\|_2\cdot \|\boldsymbol{\alpha}_{j_1}\|_2 \|\boldsymbol{\alpha}_{j_2}\|_2\\
        \lesssim& \frac{1}{K^2}\cdot \sum_{j_1=1}^K\sum_{j_2=1}^K\|\bfW -\bfW^{\star}\|_2\cdot \Big(\frac{\|\boldsymbol{\alpha}_{j_1}\|_2^2 +\|\boldsymbol{\alpha}_{j_2}\|_2^2}{2}\Big)\\
	    \lesssim & \frac{1}{K}\cdot \|\Wp-\bfW\|_2,
	    \end{split}
	\end{equation}
	where $\boldsymbol{\alpha}\in \mathbb{R}^{Kd}$ and  $\boldsymbol{\alpha}_{j}\in \mathbb{R}^{K_\ell}$ with $\boldsymbol{\alpha} =[\boldsymbol{\alpha}_{1}^\top, \boldsymbol{\alpha}_{2}^\top, \cdots, \boldsymbol{\alpha}_{K}^\top]^\top$.
\end{proof}

\subsection{Proof of Lemma \ref{Lemma: Zhong}}\label{sec: proof_of_lemma_zhong}
We aim to prove that  $\int_{\mathcal{R}} \Big(\sum_{j=1}^K \bfal^\top\bfh \phi^{\prime}(\bfw_{\ell,j}^{\top}\bfh) \Big)^2 p_{H}(\bfh)\cdot  d \bfh$ is strictly greater than zero for any $\bfal$. Therefore, the $\rho$ in \eqref{Lemma: second_order_derivative} is strictly greater than zero. 
The proof is inspired by Theorem 3.1 in \cite{DZPS19}. It is obviously that $\Big(\sum_{j=1}^K \bfal^\top\bfh \phi^{\prime}(\bfw_{\ell,j}^{\top}\bfh) \Big)^2$ is greater or equal to zero. Given $\Big(\sum_{j=1}^K \bfal^\top\bfh \phi^{\prime}(\bfw_{\ell,j}^{\top}\bfh) \Big)^2$ is continuous, we only need to show that $\alpha$ such that $\sum_{j=1}^K \bfal^\top\bfh \phi^{\prime}(\bfw_{\ell,j}^{\top}\bfh) \neq 0$ for any $\alpha$, namely, $\{\bfh \phi^{\prime}(\bfw_{\ell,j}^{\top}\bfh)\}_{j=1}^K$ are linear independent. Compared with Theorem 3.1 in \cite{DZPS19}, we need to address two challenges: (1) the neuron weights $\bfw$ is the random variable in \cite{DZPS19} while the input $\bfh$ is the random variable in this paper and (2) the random variable belongs to Gaussian distribution in \cite{DZPS19} while the random variable belongs to sub-Gaussian distribution in this paper.  
\begin{proof}[Proof of Lemma \ref{Lemma: Zhong}]
    Let $\mathcal{H}$ be a Hilbert space on $\mathbb{R}^{K_\ell}$, and the inner product of $\mathcal{H}$ is defined as
    \begin{equation}
        \langle f , g \rangle = \int_{\mathcal{R}}f(\bfh)^\top g(\bfh) f_H(\bfh) \cdot d\bfh, \quad \forall f , g \in \mathcal{H},
    \end{equation} 
    where the Lebesgue measure of $\mathcal{R}$ over $\mathbb{R}^{K_\ell}$ is non-zero. 
    Instead of directly proving $\int_{\mathcal{R}} \Big(\sum_{k=1}^K \bfal^\top\bfh \phi^{\prime}(\bfw_k^{\top}\bfh) \Big)^2 f_H(\bfh) \cdot d\bfh>0$ for any $\bfal$, we note that it is sufficient to prove that $\{\bfh\phi^\prime(\bfw_k^\top\bfh)\}_{k\in[K]}$ are linear independent over the Hilbert space $\mathcal{H}$. Namely, if $\{\bfh\phi^\prime(\bfw_k^\top\bfh)\}_{k\in[K]}$ are linear independent, we have 
    \begin{equation}
        \bfal^\top\bfh \phi^{\prime}(\bfw_k^{\top}\bfh) \neq 0 \quad\textit{almost everywhere}.
    \end{equation}
    Therefore, we can know that  $\int_{\mathcal{R}} \Big(\sum_{j=1}^K \bfal^\top\bfh \phi^{\prime}(\bfw_{\ell,j}^{\top}\bfh) \Big)^2 p_{H}(\bfh)\cdot  d \bfh$ is strictly greater than zero. 

    Next, we provide the whole proof for that $\{x\phi^\prime(\bfw_k^\top\bfh)\}_{k\in[K]}$ are linear independent over the Hilbert space $\mathcal{H}$. 

    We define a group of functions $\{\psi_j(\bfh)\}_{j=1}^K$, where $\psi_j(\bfh)= \bfh \phi^\prime (\bfw_j^\top\bfh)$.  From the assumption in Lemma \ref{Lemma: Zhong}, we can justify that $\mathbb{E}_{\bfh\sim\mathcal{D}} |\psi_j(\bfh)|^2\le \mathbb{E}_{\bfh\sim\mathcal{D}} |\bfh|^2< \infty$.

    Let $\mathcal{X}_i=\{\bfh\mid \bfw_i^\top\bfh =0 \}$ for any $i\in [K]$.
    For any fixed $k$, we can justify that $\mathcal{X}_k$ cannot be covered by other sets $\{\mathcal{X}_k\}_{j\neq k}$ as long as $\bfw_k$ does not parallel to any other weights $\bfw_j$ with $j\neq k$. Namely, $\mathcal{X}_k \not\subset \cup_{j\neq k}\mathcal{X}_j$. The idea of proving the claim above is that the intersection of $\mathcal{X}_j$ and $\mathcal{X}_k$ is only a hyperplane in $\mathcal{X}_k$. The union of finite many hyperplanes is not even a measurable space and thus cannot cover the original space. Formally, we provide the formal proof for this claim as follows.

    Let $\lambda$ be the Lebesgue measure on $\mathcal{X}_k$, then $\lambda(\mathcal{X}_k)>0$. When $\bfw_j$ does not  parallel to $\bfw_k$, $\mathcal{X}_k \cap \mathcal{X}_j$ is only a hyperplane in $\mathcal{X}_k$ for $j\neq k$. Hence, we have $\lambda(\mathcal{X}_j\cap \mathcal{X}_k)=0$. Next, we have 
    \begin{equation}
        \lambda\big(\mathcal{X}_k \cap (\cup_{j\neq k} \mathcal{X}_k)\big) 
        \le \sum_{j\neq k} \lambda(\mathcal{X}_k \cap \mathcal{X}_j) =0.
    \end{equation}
    Therefore, we have 
    \begin{equation}
        \lambda\big(\mathcal{X}_k /(\cup_{j\neq k} \mathcal{X}_k)\big)
        =\lambda(\mathcal{X}_k) - \lambda\big(\mathcal{X}_k \cap(\cup_{j\neq k} \mathcal{X}_k)\big) = \lambda(\mathcal{X}_k)>0.
    \end{equation}
    Therefore, we have $\mathcal{X}_k /(\cup_{j\neq k} \mathcal{X}_j)$ is not empty, which means that $\mathcal{X}_k \not\subset \cup_{j\neq k} \mathcal{X}_j$.

    Next, Since $\mathcal{X}_k /(\cup_{j\neq k} \mathcal{X}_j)$ is not an empty set, there exists a point $\bfz_k\in \mathcal{X}_k /(\cup_{j\neq k} \mathcal{X}_j)$ and $r_0>0$ such that 
    \begin{equation}
        \mathcal{B}(\bfz_k, r) \cap \mathcal{D}_j = \emptyset \quad \textit{with} \quad \forall r\le r_0 \textit{~and~} j\neq k,
    \end{equation}
    where $\mathcal{B}(\bfz_k, r)$ stands for a ball centered at $\bfz_k$ with a radius of $r$. Then, we divide $\mathcal{B}(\bfz_k,r)$ into two disjoint subsets such that 
    \begin{equation}
    \begin{gathered}
        \mathcal{B}_r^+ = \mathcal{B}(\bfz_k , r) \cap \{\bfh \mid \bfw_k^\top \bfh >0 \},\\
        \mathcal{B}_r^- = \mathcal{B}(\bfz_k , r) \cap \{\bfh \mid \bfw_k^\top \bfh <0 \}.
    \end{gathered}
    \end{equation}
    Because $\bfz_k$ is a boundary point of $\{\bfh|\bfw_k^\top\bfh=0\}$, both $\mathcal{B}_r^+$ and
    $\mathcal{B}_r^-$ are non-empty.

    Note that $\psi_j(\bfh)$ is continuous at any point except for the ones in $\mathcal{X}_j$.
    Then, for any $j\neq k$, we know that $\phi_j(\bfw_k^\top\bfh)$ is continuous at point $\bfz_k$ since $\bfz_k\not\in \mathcal{X}_j$. Hence, it is easy to verify that 
    \begin{equation}\label{eqn: jneqk}
        \lim_{r\rightarrow 0_+} \frac{1}{\lambda(\mathcal{B}_r^+)}{\int_{\mathcal{B}_r^+} \psi_k(\bfh)d\bfh}
        =\lim_{r\rightarrow 0_-} \frac{1}{\lambda(\mathcal{B}_r^-)}{\int_{\mathcal{B}_r^+} \psi_k(\bfh)d\bfh} = \psi_k(\bfz_k).
    \end{equation}
    While for $\psi_k$, we know that  $\psi_k(\bfh) \equiv 0$ for $\bfh\in\mathcal{B}_r^-$, (ii) $\psi_k(\bfh) = \bfh$ for $\bfh\in\mathcal{B}_r^+$.
    Hence, it is easy to verify that 
    \begin{equation}\label{eqn: jeqk}
        \begin{gathered}
            \lim_{r\rightarrow 0_+} \frac{1}{\lambda(\mathcal{B}_r^+)}{\int_{\mathcal{B}_r^+} \psi_k(\bfh)d\bfh} = \bfz_k
        \\
        \lim_{r\rightarrow 0_-} \frac{1}{\lambda(\mathcal{B}_r^-)}{\int_{\mathcal{B}_r^+} \psi_k(\bfh)d\bfh} = 0.
        \end{gathered}
    \end{equation}
    Now let us proof that $\{\psi_j\}_{j=1}^K$ are linear independent by contradiction.  Suppose $\{\psi_j\}_{j=1}^K$ are linear dependent, we have 
    \begin{equation}\label{eqn: 5_2}
        \sum_{j=1}^K \alpha_j\psi_j(\bfh) \equiv 0, \quad \forall \bfh.  
    \end{equation}
    Then, we have 
    \begin{equation}
        \begin{gathered}
            \lim_{r\rightarrow 0_+} \frac{1}{\lambda(\mathcal{B}_r^+)}{\int_{\mathcal{B}_r^+} \sum_{j=1}^K\alpha_j\psi_j(\bfh)d\bfh}=0\\
            \lim_{r\rightarrow 0_+} \frac{1}{\lambda(\mathcal{B}_r^-)}{\int_{\mathcal{B}_r^+} \sum_{j=1}^K\alpha_j\psi_j(\bfh)d\bfh}=0
        \end{gathered}
    \end{equation}
    Then, we have 
    \begin{equation}\label{eqn: 5_1}
    \begin{split}
       0 =& 
        \lim_{r\rightarrow 0_+} \frac{1}{\lambda(\mathcal{B}_r^+)}{\int_{\mathcal{B}_r^+} \sum_{j=1}^K\alpha_j\psi_j(\bfh)d\bfh} -
            \lim_{r\rightarrow 0_+} \frac{1}{\lambda(\mathcal{B}_r^-)}{\int_{\mathcal{B}_r^+} \sum_{j=1}^K\alpha_j\psi_j(\bfh)d\bfh}\\
        = & \alpha_k \bfz_k
    \end{split}
    \end{equation}
    where the last equality comes from \eqref{eqn: jneqk} and \eqref{eqn: jeqk}. 
    
    Note that $\bfz_k$ cannot be $\bfzero$ because $\bfz_k \not\in \mathcal{X}_j$. Therefore, we have $\alpha_k =0$. Similarly to \eqref{eqn: 5_1}, we can obtain that $\alpha_j =0 $ by define $\bfz_j$ following the definition of $\bfz_k$ for any $j\in[K]$. Then, we know that \eqref{eqn: 5_2} holds if and only if $\bfal =\bfzero$, which contradicts the assumption that $\{\psi_j\}_{j=1}^K$ are linear dependent. 

    In conclusion, we know that $\{\psi_j\}_{j=1}^K$ are linear independent, and $\int_{\mathcal{R}} \Big(\sum_{j=1}^K \bfal^\top\bfh \phi^{\prime}(\bfw_{\ell,j}^{\top}\bfh) \Big)^2 p_{H}(\bfh)\cdot  d \bfh$ is strictly greater than zero. 
 \end{proof}
 
\subsection{Proof of Lemma \ref{Lemma: h_bound}}
\begin{proof}[Proof of Lemma \ref{Lemma: h_bound}]
    From the definition of \eqref{eqn: defi_h}, we have 
    \begin{equation}\label{eqn: 10_1}
        \begin{split}
            &\|\bfh^{(\ell)}(\bfW) -\bfh^{(\ell)}(\bfW^\star)\|_2\\
            =& \|\phi\big( \bfW_{\ell-1}^\top \bfh^{({\ell-1})}(\bfW) \big) - \phi\big( \bfW_{\ell-1}^{\star\top} \bfh^{({\ell-1})}(\bfW^\star) \big)\|_2\\
            = & \|\phi\big( \bfW_{\ell-1}^\top \bfh^{({\ell-1})}(\bfW) \big) - \phi\big( \bfW_{\ell-1}^{\star\top} \bfh^{({\ell-1})}(\bfW) \big)\\
            &+ \phi\big( \bfW_{\ell-1}^{\star\top} \bfh^{({\ell-1})}(\bfW) \big) - \phi\big( \bfW_{\ell-1}^{\star\top} \bfh^{({\ell-1})}(\bfW^\star) \big)\|_2\\
            \le& \|\phi\big( \bfW_{\ell-1}^\top \bfh^{({\ell-1})}(\bfW) \big) - \phi\big( \bfW_{\ell-1}^{\star\top} \bfh^{({\ell-1})}(\bfW) \big)\|_2\\
            &+ \|\phi\big( \bfW_{\ell-1}^{\star\top} \bfh^{({\ell-1})}(\bfW) \big) - \phi\big( \bfW_{\ell-1}^{\star\top} \bfh^{({\ell-1})}(\bfW^\star) \big)\|_2\\
            \le &\|\bfW_{\ell-1} - \bfW_{\ell-1}^\star\|_2\cdot \|\bfh^{({\ell-1})}(\bfW)\|_2 
            +  \|\bfh^{(\ell-1)}(\bfW)- \bfh^{(\ell-1)}(\bfW^\star)\|_2.
        \end{split}
    \end{equation}
    With the assumption in the Lemma \ref{Lemma: h_bound} such that $\bfW$ is close enough to $\bfW^\star$, we have 
    \begin{equation}
        \|\bfW_{i}\|_2 \le \|\bfW_{i}^\star\|_2 + \|\bfW_{i}-\bfW_i^\star\|_2 \lesssim 1.
    \end{equation}
    Therefore, we have 
    \begin{equation}
        \|\bfh^{(i)}(\bfW)\|_2 \le \|\bfW_i\|_2\cdots \|\bfW_1\|_2 \cdot \|\bfx\|_2 \lesssim \|\bfx\|_2.
    \end{equation}
    
    Then, we have 
    \begin{equation}
        \begin{split}
            &\|\bfh^{(\ell)}(\bfW) -\bfh^{(\ell)}(\bfW^\star)\|_2\\
        \le &\|\bfW_{\ell-1} - \bfW_{\ell-1}^\star\|_2\cdot \|\bfx\|_2 
            +  \|\bfh^{(\ell-1)}(\bfW)- \bfh^{(\ell-1)}(\bfW^\star)\|_2\\
        \le & \sum_{i=1}^{\ell-1}\|\bfW_{i} - \bfW_{i}^\star\|_2\cdot \|\bfx\|_2  
            +  \|\bfh^{(1)}(\bfW)- \bfh^{(1)}(\bfW^\star)\|_2\\
        = &    \sum_{i=1}^{\ell-1}\|\bfW_{i} - \bfW_{i}^\star\|_2\cdot \|\bfx\|_2  
            +  \|\bfx - \bfx\|_2\\
            = &    \sum_{i=1}^{\ell-1}\|\bfW_{i} - \bfW_{i}^\star\|_2\cdot \|\bfh^{({i-1})}(\bfW)\|_2\\
            \le & \|\bfW -\bfW^\star \|_2 \cdot \|\bfx\|_2,
        \end{split}
    \end{equation}
    which completes the proof.
\end{proof}

\section{Additional experiments}

In this section, we provide numerical justification that our theoretical findings are aligned with
DDQN through the Atari Breakout game
The neural network follows the same architecture as the one used in Section \ref{sec: empirical_results}. 
The algorithm terminates if the average score over the recent $100$ episodes does not improve or the algorithm reaches the maximum episode set as $200$, which is around $4\times 10^5$ training steps.
The testing score is calculated based on a similar setup as the training process by fixing the maximum memory size $N$  as $2000$ and greedy policy, i.e., $\epsilon= 0$. 
 Each point in the plot is averaged over $10$ experiments with an error bar representing  the standard deviation.

\textbf{Estimation errors with respect to the sample complexity $N$.}
We follow the setup in Section \ref{sec: empirical_results} to use 
the expected cumulative reward as the estimation error of the learned model to the optimal Q-value function.
 The $\epsilon_t$ in $\epsilon$-greedy policy decreases geometrically from $1$ to $0.01$. We vary the number of samples in the replay buffer from $3000$ to $10000$. 
 Figure \ref{fig:error_to_Nb} shows that the test error is almost linear in $1/\sqrt{N}$, which is consistent with our characterization in (\ref{eqn: co3_3}). In addition, experiments with a large $N$ have a shorter error bar indicating a more stable learning performance with a large sample complexity as shown in \eqref{mainthm_initial}. 

\begin{figure}[ht]
\vspace{-3mm}
    \centering
    \begin{minipage}{0.48\linewidth}
    \includegraphics[width=1.0\textwidth]{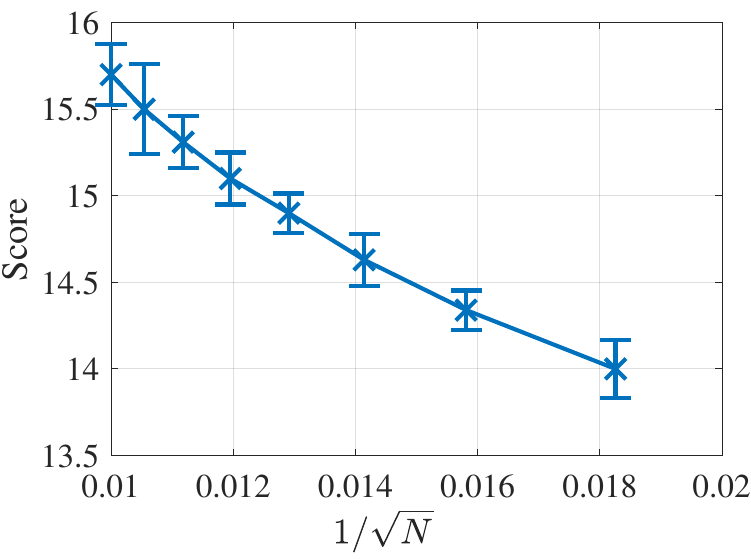}
    \caption{Test error in scores against the number of samples.}
    \label{fig:error_to_Nb}
    \end{minipage}
    ~
    \begin{minipage}{0.48\linewidth}
    \centering
    \includegraphics[width=1.0\textwidth]{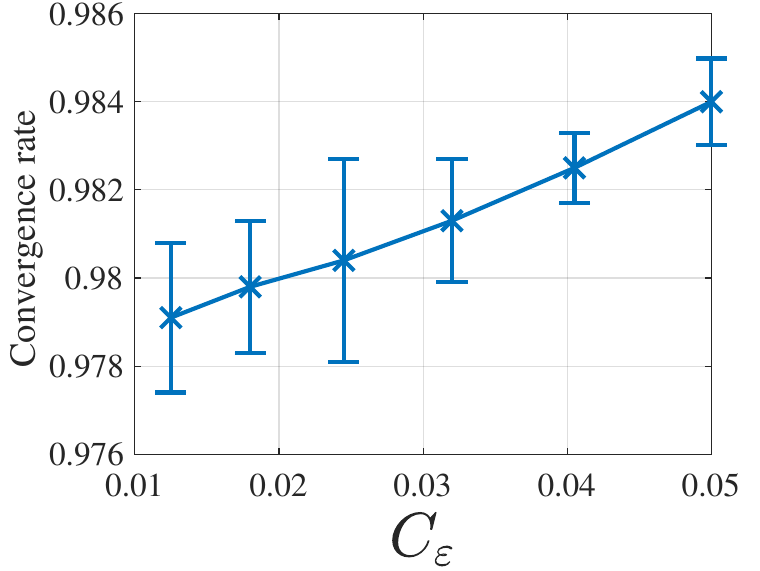}
    \caption{The convergence rate against the value of $c_\varepsilon$.}
    \label{fig:convergenceb}
    \end{minipage}
\end{figure}

\textbf{Convergence with different selections of $\varepsilon$.}
Figure \ref{fig:convergenceb} illustrates the convergence rate when $\varepsilon_t$ in the $\varepsilon$-greedy policy changes. 
For each point,  $\varepsilon_0$ is selected as the value in the x-axis,  and we decrease $\varepsilon_t$ geometrically as the iteration $t$ increases. 
Each point is averaged over $10$ independent trials.  We can see that the convergence rate is a linear function of $c_\varepsilon$, matching our findings in \eqref{eqn: co3_1}. 

\section{Extension to non i.i.d. samples}\label{sec:non_iid}
\begin{assumption}\label{ass_2}
     At any fixed outer iteration $t$, the behavior policy $\pi_t$ and transition kernel $\mathcal{P}_t$ satisfy
     \begin{equation}
         \sup_{\bfs\in\mathcal{S}}~ d_{TV}\big( \mathbb{P}(\bfs_\tau\in \cdot)\mid \bfs_0 = \bfs), \mathcal{P}_t \big) \le \lambda \nu^\tau, \quad \forall~\tau\ge 0
     \end{equation}
     for some constant $\lambda>0$ and $\nu\in(0,1)$, where $d_{TV}$ denotes the total-variation distance between the probability measures.
\end{assumption}
Assumption \ref{ass_2} assumes the Markov chain $\{\bfs_t\}$ induced by the behavior policy, i.e., $\epsilon_t$-greedy policy at $t$-th outer loop, is uniformly ergodic with the corresponding invariant measure $\mathcal{P}_t$. Compared with i.i.d. cases, we need to handle an additional error term when bounding the distance between the $g_t$ and $\nabla f$ as shown in \eqref{eqn: Lemma2_temp}. Therefore, the upper bound in Lemma \ref{Lemma: first_order_derivative} changes, which suggests an additional term in the final bound. 

We present the major theoretical findings for non-i.i.d. samples in Theorem \ref{Thm: noniid}. The major proofs in this context follow similar steps to the proof of Theorem \ref{coro: convergence_outer}, with slight changes in the error bound between the sequences $g_t$ and $\nabla f$. In this section, we omit the details of the proof for Theorem \ref{Thm: noniid} but provide the proof for Lemma \ref{Lemma: first_order_derivative} under the assumptions outlined in Assumption \ref{ass-2} to simplify the presentation.

\begin{theorem}[Convergence for non-i.i.d. case]\label{Thm: noniid}
    Suppose Assumption 1 and (143) hold, the buffer size $N$ satisfies (13). Let us define $C_{\max}$ be a constant that is larger than $C_t$ for $1\le t\le T$ and $C_d = |\mathcal{A}|\cdot (1+\log_\nu \lambda^{-1}+\frac{1}{1-\nu})$, when $\epsilon_t$ satisfy
    \begin{equation}
    \epsilon_t = \frac{c_{\varepsilon}\cdot \Theta(\sqrt{N})\cdot e_t }{(1-C_{\max}) \cdot C_d \cdot R_{\max}} -\frac{C_{\max}}{1-C_{\max}}  
    \end{equation}
for a fixed constant $c_\epsilon\in (0, (1-\gamma)^2]$, and the initialization satisfies
\begin{equation}
    ||\bfW^{(0,0)}-\bfW^\star||_F \le ~\mathcal{O}\Big(1-\frac{1-c_\epsilon}{\Theta(\sqrt{N})}\Big)\cdot \frac{\rho\cdot \|\bfW^\star\|_F}{K}.
\end{equation}
Then, with the high probability of at least $1-T\cdot q^{-d}$, we have

(\textbf{C1}) The learned weights decay geometrically with
    \begin{equation}
    \begin{split}
        || \bfW^{(t+1,0)} - \bfW^\star ||_F 
        \le \big(\gamma + {c_\epsilon}\cdot(1-\gamma)\big) \cdot ||\bfW^{(t,0)}-\bfW^\star||_F
        + \frac{(2+\gamma) R_{\max} \tau^\star }{(1-\gamma) \Theta(N)},
    \end{split}
    \end{equation}
   (\textbf{C2}) the returned model $Q(\bfW^{(T,0)})$ exhibits an estimation error  as
\begin{equation}
    \begin{split}
        \sup_{(s,a)} \big| Q( \bfW^{(T,0)} )  - Q^{\star} \big| 
        \le \frac{C_{\max} \cdot C_d \cdot R_{\max}}{(1-\gamma)^2 \cdot\Theta(\sqrt{N\cdot T})}
        + {\frac{ (2+\gamma) R_{\max} \tau^{\star}}{(1-\gamma) \Theta(N\cdot T)}},
    \end{split}
\end{equation}
where $\tau^\star = \min \{t\mid \lambda\nu^t\le 1/(N\cdot T) \}$.
\end{theorem}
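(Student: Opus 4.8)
The plan is to reuse the proof of Theorem~\ref{coro: convergence_outer} almost verbatim, isolating the single place where the i.i.d.\ assumption (Assumption~\ref{ass-2}) entered, namely the concentration argument controlling $\bfI_1$ in the proof of Lemma~\ref{Lemma: first_order_derivative}. First I would revisit the decomposition \eqref{eqn: Lemma2_temp}, $g_t(\bfw_{\ell,k};\bfW)-\partial f/\partial\bfw_{\ell,k}=\bfI_1+\bfI_2+\bfI_3$. The term $\bfI_2$ (label inconsistency) is a pointwise comparison of $\max_a Q$ values and does not use independence, so its bound is unchanged. The term $\bfI_3$ (distribution shift between $\mu_t$ and $\mu^\star$) is governed by the behavior policy; under Assumption~\ref{ass_2} the behavior-induced measure is the stationary measure $\mathcal{P}_t$, and the mixing constants $\lambda,\nu$ are exactly what get absorbed into the redefined $C_d=|\mathcal{A}|\cdot(1+\log_\nu\lambda^{-1}+\tfrac{1}{1-\nu})$ when the trajectory-induced measure is compared to $\mathcal{P}_t$. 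The only genuinely new object is $\bfI_1$, whose $Z_n^{(\ell,1)},Z_n^{(\ell,2)}$ are now functions of a uniformly ergodic Markov chain rather than i.i.d.\ draws.

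Second, I would replace the i.i.d.\ Chernoff step for $\bfI_1$ by a two-part estimate: a \emph{bias} part and a \emph{fluctuation} part. For the bias, using the mixing bound $d_{TV}(\cdot,\mathcal{P}_t)\le\lambda\nu^\tau$ together with the sensitivity/perturbation estimates for uniformly ergodic chains in \cite{Yu05}, I would show that the gap between the single-trajectory average and the stationary expectation $\mathbb{E}_{\mathcal{P}_t}[Z]$ is of order $\|Z\|_\infty\,\tau^\star/N$; since the relevant magnitude is $R_{\max}/(1-\gamma)$ (the uniform bound on $Q$), this contributes an additive term $\tfrac{R_{\max}\,\tau^\star}{(1-\gamma)\,N}$ that is \emph{independent} of $\|\bfW-\bfW^\star\|$. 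For the fluctuation, I would apply the method of blocks: partition the $N$ samples into $\tau^\star$ interleaved subsequences whose entries are separated by at least $\tau^\star$ steps, so that by the mixing bound each subsequence is within total variation $1/(NT)$ of an i.i.d.\ sample from $\mathcal{P}_t$, then apply the matrix concentration of Lemma~\ref{prob} block-wise and union bound. This reproduces the $\sqrt{K_\ell\log q/N}$ scaling of the i.i.d.\ case up to a $\tau^\star$ factor in the effective sample size, which—being a $\|\bfW-\bfW^\star\|$-proportional term—is absorbed into the contraction for large $N$ exactly as before.

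Third, with the modified Lemma~\ref{Lemma: first_order_derivative} now carrying one extra additive term of order $\tfrac{R_{\max}\,\tau^\star}{(1-\gamma)\,N}$, I would substitute it into the per-iteration recursion \eqref{eqn: t_m} and rerun the induction over $m$ and then over $t$ from the proof of Theorem~\ref{coro: convergence_outer}. The contraction factor $\gamma+c_\epsilon(1-\gamma)$ is untouched, since it comes from the Hessian bounds of Lemma~\ref{Lemma: second_order_derivative} and from $\bfI_2$, neither of which changes; the additive bias term simply accumulates through the same geometric series, producing the $\tfrac{(2+\gamma)R_{\max}\tau^\star}{(1-\gamma)\Theta(N)}$ in (\textbf{C1}) and, after $T$ outer loops, the $\Theta(N\cdot T)^{-1}$-scaled term in (\textbf{C2}). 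Choosing $\epsilon_t$ by the stated formula and fixing the burn-in threshold $\tau^\star=\min\{t\mid\lambda\nu^t\le 1/(NT)\}$ so that the per-step mixing error sits below the target accuracy closes the argument.

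The hardest part will be the Markov-chain concentration in the second step. I must show that the bias from using a dependent, non-stationary trajectory is controlled purely by the mixing time $\tau^\star$ and scales as $O(\tau^\star/N)$, while simultaneously verifying that the sub-exponential variance proxies used for $Z_n^{(\ell,1)}$ and $Z_n^{(\ell,2)}$ survive the block decomposition and that the covariance-norm quantity $\delta^2$ in Lemma~\ref{prob} degrades by at most a $\tau^\star$ factor. Particular care is needed to confirm that this $\tau^\star$ loss only affects the fluctuation (and is thus absorbed into the contraction), leaving the genuinely non-vanishing bias as the single explicit $\tau^\star/N$ correction reported in Theorem~\ref{Thm: noniid}.
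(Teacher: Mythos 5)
Your high-level architecture matches the paper's: keep the $I_2$ bound, enlarge the distribution-shift constant to $C_d=|\mathcal{A}|\cdot(1+\log_\nu\lambda^{-1}+\frac{1}{1-\nu})$ via the perturbation theory of \cite{Yu05}, and rerun the recursion of Theorem \ref{coro: convergence_outer} with one extra additive error term. (Two small notes: the $C_d$ enlargement in the paper arises in $I_3$ from comparing the \emph{stationary} distributions of the optimal and behavior chains via Theorem 3.1 of \cite{Yu05}, not from a trajectory-versus-stationary comparison; and your blocking treatment of the fluctuation is, if anything, more careful than the paper, which simply asserts the $I_1$ and $I_2$ bounds are unchanged.) However, there is a genuine gap in your bias step, which is exactly where the paper does its real work. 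You bound the gap between the trajectory average and the stationary expectation as if the summand $Z_n$ were a \emph{fixed} function of the chain, applying $d_{TV}\big(\mathbb{P}(\bfs_n\in\cdot),\mathcal{P}_t\big)\le\lambda\nu^n$ term by term. But the summand is $\big(Q(\bfW;\bfx_n)-y_n^{(t)}\big)\nabla_\bfW Q(\bfW;\bfx_n)$ evaluated at the running iterates $\bfW^{(t,m)}$ and at $\bfW^{(t,0)}$, and these weights are themselves functions of the very trajectory being averaged. Hence $\mathbb{E}[Z_n]$ is not the expectation of a fixed function under the marginal law of $\bfs_n$, and a marginal mixing bound cannot be invoked; this weight--data dependence is precisely what distinguishes the non-i.i.d.\ setting from the i.i.d.\ one, where Assumption \ref{ass-2} lets you condition on the iterate and treat the minibatch as fresh.

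The paper resolves this with a decoupling argument taken from Lemma 9 of \cite{BRS18}. It isolates the bias as a fourth term $I_4=\mathbb{E}_{\mathcal{D}_t}[g_t]-\mathbb{E}_{\mu_t}[g_t]$, sets $\Delta_t(\bfW)=g_t(\bfW)-\bar g_t(\bfW)$ with $\bar g_t=\mathbb{E}_{\mu_t,\mathcal{P}}\,g_t$, and uses three facts: (i) $\Delta_t$ is $(1+\gamma)$-Lipschitz in $\bfW$ and uniformly bounded by $2R_{\max}/(1-\gamma)$; (ii) stepping back $\tau$ iterations, the pair $(\bfW^{(t-\tau,0)},O_t)$ is, by uniform ergodicity, within $\lambda\nu^\tau$ in total variation of an independent pair $(\widetilde{\bfW}^{(t-\tau,0)},\widetilde O_t)$, for which $\mathbb{E}\,\Delta_t=0$, so $\mathbb{E}\,\Delta_t(\bfW^{(t-\tau,0)},O_t)\lesssim \sup|\Delta_t|\cdot\lambda\nu^\tau$; (iii) the Lipschitz property transfers this bound from $\bfW^{(t-\tau,0)}$ back to $\bfW^{(t,0)}$ at a drift cost $(1+\gamma)\frac{R_{\max}}{1-\gamma}\sum_{i=t-\tau}^{t-1}\eta_i$, since $\|\bfW^{(t,0)}-\bfW^{(t-\tau,0)}\|$ is controlled by the step sizes times $\|g_t\|\lesssim R_{\max}/(1-\gamma)$. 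Choosing $\tau=\tau^\star=\min\{t:\lambda\nu^t\le\eta_T\}$ yields $|I_4|\lesssim\frac{R_{\max}}{1-\gamma}\big(1+(1+\gamma)\big)\tau^\star\eta$, which is where both the $(2+\gamma)$ constant and the $\tau^\star/\Theta(N)$ (and, after $T$ loops, $\tau^\star/\Theta(N\cdot T)$) scaling in (\textbf{C1})--(\textbf{C2}) of Theorem \ref{Thm: noniid} actually originate. Your proposal, generating the bias purely from marginal mixing, has no mechanism to produce the Lipschitz-drift contribution and does not survive the dependence between iterates and data; to close the proof you would need to add this step-back-and-couple argument (or an equivalent uniform-over-$\bfW$ covering argument) for the bias term.
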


\begin{proof}[Proof of Lemma \ref{Lemma: first_order_derivative} under Assumption \ref{ass-2}]
Recall that in \eqref{eqn: Lemma2_temp}, we have
    \begin{equation}
        \begin{split}
            &g_t(\bfw_{\ell,k};\bfW) - \frac{\partial {f} }{ \partial \bfw_{\ell,k}} (\bfW)\\
            = & \bigg[ \frac{1}{N}\sum_{n=1}^N \Big( Q(\bfW;\bfs_n,a_n)
            - Q(\bfW^\star;\bfs_n,a_n) \Big)\cdot \frac{\partial Q(\bfW;\bfs_n,a_n)}{\partial \bfw_{\ell,k}} \\
            &- \mathbb{E}_{(\bfs,a)\sim \mathcal{D}_t} \Big( Q(\bfW;\bfs,a) - Q(\bfW^\star;\bfs,a) \Big)\cdot \frac{\partial Q(\bfW;\bfs,a)}{\partial \bfw_{\ell,k}}\bigg]\\
         & +\bigg[\frac{1}{N}\sum_{n=1}^N \gamma \cdot \Big(\max_a Q(\bfs_n,a;\bfW^\star) - \max_a Q(\bfs_n,a;\bfW^{(t,0)}) \Big)\cdot \frac{\partial Q(\bfW;\bfs_n,a_n)}{\partial \bfw_{\ell,k}}\bigg] \\
         &+ \mathbb{E}_{(\bfs,a)\sim \mu_t} g_t(\bfw_{\ell,k};\bfW) - \frac{\partial {f} }{ \partial \bfw_{\ell,k}} (\bfW)\\
         &+ \mathbb{E}_{(\bfs,a)\sim \mathcal{D}_t,\mathcal{P}}\big[ g_t(\bfw_{\ell,k};\bfW) - \mathbb{E}_{(\bfs,a)\sim \mathcal{\mu}_t,\mathcal{P}} g_t(\bfw_{\ell,k};\bfW)\big]\\
         :=& I_1+I_2+I_3+I_4.
        \end{split}
    \end{equation}
\textbf{Bound of $I_1$ and $I_2$.} Compared with \eqref{eqn: Lemma2_temp}, the upper bound for $I_1$  and $I_2$ is the same as those shown in \eqref{eqn: 2_temp1} and \eqref{eqn: I_2_bound}, respectively. 

\textbf{Bound of $I_3$.} Following \eqref{eqn: I_3_1}, the upper bound of $I_3$ can be characterized as
 \begin{equation}
        \begin{split}
            \|I_3\|_2  
            \le& \frac{R_{\max}}{1-\gamma} \cdot \bigg[(1-\varepsilon)\cdot \Big|\int_{(\bfs,a)}\int_{\bfs'}\big(\mu^\star(d\bfs,da) \mathcal{P}(d\bfs'|\bfs,a) - \mu_{t,1}(d\bfs,da) \mathcal{P}(d\bfs'|\bfs,a) \big)\Big|\\
            &\qquad + \varepsilon\cdot \Big|\int_{(\bfs,a)}\int_{\bfs'}\big(\mu^\star(d\bfs,da) \mathcal{P}(d\bfs'|\bfs,a) - \mu_{t,2}(d\bfs,da) \mathcal{P}(d\bfs'|\bfs,a) \big)\Big|\bigg].\\
        \end{split}
    \end{equation}
    and 
        \begin{equation}
        \begin{split}
            &\Big|\int_{(\bfs,a)}\int_{\bfs'}\big(\mu^\star(d\bfs,da) \mathcal{P}(d\bfs'|\bfs,a) - \mu_{t,1}(d\bfs,da) \mathcal{P}(d\bfs'|\bfs,a) \big)\Big|\\
            = & \Big|\int_{(\bfs,a)}\int_{\bfs'}\big(\mathcal{P}^\star(d\bfs)\pi^\star(da|\bfs) \mathcal{P}(d\bfs'|\bfs,a) - \mathcal{P}_{t,1}(d\bfs)\pi_{t,1}(da|d\bfs) \mathcal{P}(d\bfs'|\bfs,a) \big)\Big|\\
            \le & \Big|\int_{(\bfs,a)}\int_{\bfs'}\big(\mathcal{P}^\star(d\bfs) -\mathcal{P}_{t,1}(d\bfs)\big)\pi^\star(da|\bfs) \mathcal{P}(d\bfs'|\bfs,a)\Big|\\
            &+\Big|\int_{(\bfs,a)}\int_{\bfs'} \mathcal{P}_{t,1}(d\bfs)\big(\pi_{t,1}(da|d\bfs) - \pi^\star(da|d\bfs)\big) \mathcal{P}(d\bfs'|\bfs,a)\Big|.\\
            \end{split}
            \end{equation}
 From Theorem 3.1 in \cite{Yu05}, we know that 
    \begin{equation}
    \begin{gathered}       \Big|\int_{(\bfs,a)}\big(\mathcal{P}^\star(d\bfs) -\mathcal{P}_{t,1}(d\bfs)\big)\Big| \le |\mathcal{A}|(\log_{\nu}\lambda^{-1}+\frac{1}{1-\nu})C_t\\
        \text{and}\qquad \big\|\pi_{t,1}(da|d\bfs) - \pi^\star(da|d\bfs)\big\| \le C_t.
    \end{gathered}
    \end{equation}
    Therefore, the bound of $I_3$ can be found as
    \begin{equation}
    \begin{split}
        \|I_3\|_2 \le&~\frac{R_{\max}}{1-\gamma} \cdot |\mathcal{A}|\cdot \big((1-\varepsilon)C_t+\varepsilon\cdot C_t\big) \cdot (1 + \log_\nu \lambda^{-1} +\frac{1}{1-\nu})\\
        = & C_d\cdot \big(C_t+(1-C_t)\varepsilon\big)\cdot  \frac{R_{\max}}{1-\gamma},
    \end{split}
    \end{equation}
    where $C_d = |\mathcal{A}|\cdot (1+\log_\nu \lambda^{-1}+\frac{1}{1-\nu})$.

\textbf{Bound of $I_4$.} $I_4$ is the bias of the data because the data $(\bfs,a)$ at iteration $t$ depends on the neural network parameters $\bfW$. Let us define $\bar{g}_t$ as 
\begin{equation}
    \bar{g}_t(\bfw_{\ell,k};\bfW) = \mathbb{E}_{\mu_t,\mathcal{P}}~g_t(\bfw_{\ell,k};\bfW)
\end{equation}
and 
\begin{equation}
    \Delta_t = g_t(\bfw_{\ell,k};\bfW) - \bar{g}_t(\bfw_{\ell,k};\bfW).
\end{equation}
It is easy to verify that 
\begin{equation}
\begin{gathered}
    \|g_t(\bfw_{\ell,k};\bfW) - g_t(\tilde{\bfw}_{\ell,k};\widetilde{\bfW}) \|\le (1+\gamma)\cdot \|\bfW-\widetilde{\bfW}\|,\\
    \|\bar{g}_t(\bfw_{\ell,k};\bfW) - \bar{g}_t(\tilde{\bfw}_{\ell,k};\widetilde{\bfW}) \|\le (1+\gamma)\cdot \|\bfW-\widetilde{\bfW}\|,\\
   \textit{and}\qquad  \|g_t\|\lesssim \frac{R_{\max}}{1-\gamma}.
\end{gathered}
\end{equation}
Then, we have 
\begin{equation}
    \Delta_t(\bfW)- \Delta_t(\widetilde{\bfW}) \lesssim (1+\gamma)\cdot \|\bfW-\widetilde{\bfW}\|_2.
\end{equation}
Therefore, we have 
\begin{equation}
    \Delta_t(\bfW^{(t,0)})\le \Delta_{t}(\bfW^{(t-\tau,0)}) + \frac{1+\gamma}{1-\gamma}\cdot R_{\max}\cdot \sum_{i=t-\tau}^{t-1}\eta_i. 
\end{equation}
Then, we need to bound $\delta_t(\bfW^{(t-\tau,0)})$.

Let us define the observed tuple $O_t(\bfs, a, s^\prime)$ as the collection of the state, action, and the next state at the $t$-th outer loop. Note that 
\begin{equation}
    \bfW^{(t-\tau,0)} \longrightarrow \bfs_{t-\tau} \longrightarrow \bfs_{t} \longrightarrow O_{t} 
\end{equation}
forms a Markov chain introduced by the policy $\pi_{t-\tau}$.

Let $\widetilde{\bfW}^{(t-\tau,0)}$ and $\widetilde{O}_t$ be independently drawn from the marginal distributions of $\widetilde{\bfW}^{(t-\tau,0)}$ and $O_t$, respectively.

With Lemma 9 in \cite{BRS18}, we have 
\begin{equation}
    \mathbb{E}~\Delta_t(\bfW^{(t-\tau,0)},O_t) - \mathbb{E}~\Delta_t(\widetilde{\bfW}^{(t-\tau,0)},\widetilde{O}_t) \lesssim~ 2\sup_{\bfw, O}|\Delta_t(\bfW, O)|\cdot \lambda\cdot \nu^\tau.
\end{equation}
By definition, we have $\mathbb{E}~\Delta_t(\widetilde{\bfW}^{(t-\tau,0)},\widetilde{O}_t) = 0$ and 
\begin{equation}
    |\Delta_t(\bfW,O)| \le \frac{2~R_{\max}}{1-\gamma}.
\end{equation}
Therefore, we have 
\begin{equation}
\begin{split}
    \mathbb{E}\Delta_t(\bfW^{(t,0)})
    \le& \mathbb{E}\Delta_{t}(\bfW^{(t-\tau,0)}) + \frac{1+\gamma}{1-\gamma}\cdot R_{\max}\cdot \sum_{i=t-\tau}^{t-1}\eta_i\\
    \le & \frac{R_{\max}}{1-\gamma}\Big(
    \lambda\cdot \nu^\tau + (1+\gamma)\cdot \tau \cdot \eta_{t-\tau}
 \Big),
\end{split}
\end{equation}
where the last inequality comes from the fact that the step size $\eta_t$ is non-increasing. 

Choose $\tau^\star = \min\big\{ t=0, 1, 2, \cdots\mid\lambda \nu^\tau \le \eta_T \big\}$. When $t\le \tau^\star$, we choose $\tau =t$ and have
\begin{equation}\label{eqn: iid_1}
    \mathbb{E}\Delta_t(\bfW^{(t,0)}) \le \frac{R_{\max}}{1-\gamma}\cdot \tau^\star\cdot \eta_{0}.
\end{equation}
When $t>\tau^\star$, we can choose $\tau = \tau^\star$ and obtain 
\begin{equation}\label{eqn: iid_2}
    \mathbb{E}\Delta_t(\bfW^{(t,0)}) \le \frac{R_{\max}}{1-\gamma}\cdot (1+\gamma)\tau^\star\cdot \eta_{t-\tau^\star}.
\end{equation}
Combining \eqref{eqn: iid_1} and \eqref{eqn: iid_2}, we have 
\begin{equation}
    |I_4| \le \frac{R_{\max}}{1-\gamma}\cdot (1+\gamma)\tau^\star\cdot \eta_{\max\{0,t-\tau^\star\}},
\end{equation}
where $\tau^\star = \min\{t\mid \lambda\nu^t\le \eta_T \}$.
\end{proof}

\end{document}